\def\DPRMArxivVersion{1}
\PassOptionsToPackage{dvipsnames,svgnames,table}{xcolor}
\documentclass[10pt,journal,compsoc]{IEEEtran}

\usepackage[utf8]{inputenc}
\usepackage[T1]{fontenc}
\usepackage{microtype}
\usepackage{graphicx}
\usepackage{subcaption}
\usepackage{booktabs}
\usepackage{amsmath,amssymb,amsfonts,mathtools,amsthm}
\usepackage{bm}
\usepackage{enumerate}
\usepackage{enumitem}
\usepackage{pifont}
\usepackage{placeins}
\usepackage{multirow}
\usepackage[normalem]{ulem}
\usepackage[most]{tcolorbox}
\usepackage{colortbl}
\usepackage{wrapfig}
\usepackage{float}
\usepackage{stfloats}
\usepackage{xurl}
\usepackage{xcolor}
\usepackage{algorithm}
\usepackage{algorithmic}
\usepackage[numbers,sort&compress]{natbib}
\usepackage[hidelinks]{hyperref}
\usepackage[capitalize,noabbrev]{cleveref}

\graphicspath{{figs/}}
\DeclareGraphicsExtensions{.pdf,.png,.jpg,.jpeg}

\providecommand{\theHalgorithm}{\arabic{algorithm}}
\theoremstyle{plain}
\newtheorem{theorem}{Theorem}[section]
\newtheorem{proposition}[theorem]{Proposition}
\newtheorem{lemma}[theorem]{Lemma}
\newtheorem{corollary}[theorem]{Corollary}
\theoremstyle{definition}

\newtheorem{assumption}[theorem]{Assumption}
\theoremstyle{remark}
\newtheorem{remark}[theorem]{Remark}

\crefname{equation}{}{}
\crefname{figure}{Fig.}{Figs.}
\crefname{section}{Sec.}{Secs.}
\crefname{appendix}{App.}{Apps.}
\crefname{table}{Tab.}{Tabs.}
\crefname{theorem}{Thm.}{Thms.}
\crefname{lemma}{Lem.}{Lems.}
\crefname{assumption}{Assump.}{Assumps.}
\crefname{proposition}{Prop.}{Props.}
\crefname{corollary}{Cor.}{Cors.}
\crefname{definition}{Def.}{Defs.}
\crefname{remark}{Rmk.}{Rmks.}
\crefname{algorithm}{Alg.}{Algs.}

\definecolor{myred}{RGB}{190,30,45}
\definecolor{myblue}{RGB}{28,76,170}
\definecolor{mygreen}{RGB}{20,120,60}
\definecolor{myorange}{RGB}{210,120,20}
\definecolor{mygray}{RGB}{70,70,70}
\definecolor{boxborder}{RGB}{40,55,90}
\definecolor{boxbg}{RGB}{248,248,248}
\definecolor{mypurple}{RGB}{120,55,160}
\definecolor{lightblue}{RGB}{232,241,255}
\definecolor{lightgreen}{RGB}{232,247,238}
\definecolor{lightpurple}{RGB}{242,235,250}
\definecolor{LightCyan}{rgb}{0.88,1,1}

\newcommand{\bbE}{\mathbb{E}}
\newcommand{\KL}{\mathrm{KL}}
\newcommand{\Bias}{\operatorname{Bias}}
\newcommand{\rad}{\operatorname{rad}}
\newcommand{\drift}{\operatorname{drift}}
\newcommand{\calK}{\mathcal K}
\newcommand{\calO}{\mathcal O}
\newcommand{\calE}{\mathcal E}
\newcommand{\calR}{\mathcal R}
\newcommand{\tO}{\widetilde O}
\newcommand{\tOm}{\widetilde \Omega}
\newcommand{\AH}[1]{}
\newcommand{\cmark}{\ding{51}}
\newcommand{\xmark}{\ding{55}}
\newcommand{\bestcell}[1]{\cellcolor{LightCyan}\textbf{#1}}
\newcommand{\omnimatchedrows}{%
Random & 0.17939 & 0.23381 & 1 \\
Omni default & 0.18661 & 0.23836 & 1 \\
Uniform order & 0.18566 & 0.23796 & 5 \\
DPRM & \bestcell{0.21125} & \bestcell{0.24854} & 5 \\
}

\newcommand{\multimodalaggregaterow}{%
Two-host mean & 0.92561 & \bestcell{1.00000} \\
}

\newcommand{\dmpomatchedrows}{%
Progressive DMPO & 50.4 & 44.3 & 53.4 & 29.6 \\
DMPO-DPRM & \bestcell{52.5} & \bestcell{47.9} & \bestcell{55.0} & \bestcell{33.4} \\%
}

\newcommand{\mask}{\mathsf{M}}

\title{DPRM: A Plug-in Token-Ordering Module for Discrete Diffusion Models}

\author{Dake Bu, Wei Huang, Andi Han, Si Wu, Hau-San Wong, Qingfu Zhang, Taiji Suzuki, and Atsushi Nitanda%
\IEEEcompsocitemizethanks{%
\IEEEcompsocthanksitem Dake Bu, Hau-San Wong and Qingfu Zhang are with City University of Hong Kong.
\IEEEcompsocthanksitem Dake Bu and Atsushi Nitanda are with CFAR and IHPC, A*STAR.
\IEEEcompsocthanksitem Wei Huang is with RIKEN AIP and The Institute of Statistical Mathematics.
\IEEEcompsocthanksitem Andi Han is with the University of Sydney.
\IEEEcompsocthanksitem Si Wu is with South China University of Technology.
\IEEEcompsocthanksitem Taiji Suzuki is with The University of Tokyo and RIKEN AIP.
\IEEEcompsocthanksitem Atsushi Nitanda is also with Nanyang Technological University.
\IEEEcompsocthanksitem Corresponding authors: Atsushi Nitanda (atsushi\_nitanda@a-star.edu.sg) and Hau-San Wong (cshswong@cityu.edu.hk).}%
}

\ifdefined\DPRMArxivVersion
\else
\fi

\begin{document}

\IEEEtitleabstractindextext{%
\begin{abstract}
Discrete diffusion models admit many token orders, yet most systems rely on confidence-based decoding. Confidence is a strong and efficient heuristic, but it can be myopic because local certainty does not measure a position's effect on terminal quality. We introduce DPRM (Doob $h$-transform Process Reward Model), a plug-in token-ordering module induced by a terminal-reward-tilted trajectory distribution. The Doob transform converts terminal reward into a process reward for each candidate position. A compact estimator indexed by generation progress, confidence, and optional positional or task state makes this correction reusable across partial states. DPRM changes only token order; the host architecture, denoising objective, supervision, and token sampler remain fixed. We derive the exact reward-tilted law and bound its shortlist and online approximations. Across nine recent open-source hosts spanning language, multimodal generation, and scientific sequences, DPRM improves reasoning, numeric VQA, visual-codebook ordering, and preference-conditioned generation. Relative to confidence, it improves PUMA GSM8K by $14.6\%$, Omni-Diffusion CLIP-L/14 by $13.2\%$, Prism voted accuracy from $82.41\%$ to $83.85\%$, and RealWorldQA numeric/count accuracy by $8.97$ points. Matched scientific gains include $8.3\%$ in DCM nonzero recovery, $15.0\%$ in GenMol QED, and $53.3\%$ in SDPO-DNA total utility. Entropy and random-order controls show that the gains do not come from favoring uncertain tokens indiscriminately. Token-order traces and shared-state interventions instead show how terminal reward can correct a locally confident but globally poor choice.
\end{abstract}
\begin{IEEEkeywords}
Discrete diffusion models, token ordering, process reward models, masked diffusion, Doob h-transform.
\end{IEEEkeywords}}

\maketitle
\IEEEdisplaynontitleabstractindextext
\IEEEpeerreviewmaketitle

\section{Introduction}
\label{sec:intro}

Autoregressive LLMs have achieved remarkable performance in mathematics, coding, and agentic problem solving \citep{jaech2024openai,anthropic2025claude4,guo2025deepseek,kimiteam2025kimi}, but left-to-right generation is a modeling choice rather than a universal property of the data. Many scientific objects are governed by global constraints and bidirectional dependencies: discrete-diffusion protein models outperform same-scale autoregressive protein LMs on function, localization, interaction, annotation, and foldability tasks \citep{wang2024diffusion}; masked diffusion improves regulatory DNA generation by modeling DNA bidirectionally \citep{yang2026d3lm}; and masked discrete diffusion is a natural fit for sparse, non-sequential single-cell transcriptomes \citep{zhang2026lingshucell}. These results suggest that scientific generation may need global, flexible-order denoising rather than a fixed serial order. The same shift is visible in language and multimodal modeling, where data types and conditioning signals are increasingly combined rather than handled by one fixed sequential interface \citep{zhu2024visionx}. Large discrete diffusion models such as LLaDA \citep{nie2025llada} and Dream \citep{ye2025dream} show that autoregression is no longer the only viable default, while Mercury \citep{inception2025mercury} and Gemini Diffusion \citep{deepmind_gemini_diffusion} suggest that flexible-order generation can also offer practical throughput advantages \citep{ye2025diffusion,zhu2025dmpo}.

This flexibility, however, exposes a new algorithmic question that autoregressive models largely hide: 

\emph{how should a discrete diffusion model order its tokens?} 

Related non-diffusion work reaches a similar conclusion for graphs and molecules, where learned-order autoregressive models treat generation order as a state-dependent policy \citep{wang2025learningorder}; adaptive token optimization is also an explicit efficiency variable in recent vision transformers \citep{li2026prance}. Recent masked-diffusion work likewise shows that token order is not a minor implementation detail. \citet{kim2025train} improve masked diffusion models through adaptive inference order, and \citet{kim2026puma} align training and decoding with confidence-progressive masks. Confidence or entropy ranking is also the native rule in masked image generators and in several recent language, multimodal, and search hosts, including Omni-Diffusion, LLaDA-V, and Prism \citep{chang2022maskgit,li2026omnidiffusion,you2025llada,bai2026prism}. Confidence is cheap, model-internal, and effective, making it the strongest common heuristic across our nine hosts.

Confidence is not, however, an objective for terminal quality. It can make myopic choices, suppress useful exploration, and reveal end-of-text tokens too early when the best completion begins with a less-confident position \citep{fang2026locally,park2026confidence}. Recent learned unmasking policies also treat confidence order as a baseline to improve upon: a small transformer maps token confidence, mask status, time, and position-aware attention to token-order decisions \citep{jazbec2026unmasking}. Work on language reasoning further shows that high-entropy minority tokens can carry disproportionate learning signal \citep{wang2025entropyminority}, and bypassing uncertain tokens can reduce reasoning coverage in diffusion models \citep{ni2026flexibility}. These findings identify where confidence may fail, but they do not imply that the least-confident position should be revealed first. Our entropy and random-order controls confirm this distinction: uncertainty alone does not recover the gains of reward-guided order.


This raises the central question of the paper: 

\emph{can token ordering use task reward information beyond local confidence without changing the host diffusion model?}

We answer yes by deriving token order from terminal reward. Let the host decoder define a base distribution over complete trajectories, and tilt that distribution by \(\exp(\beta R(X_T))\). Its Doob \(h\)-transform gives the exact token-order law whose local correction is the conditional log-moment of terminal reward after revealing each candidate position. DPRM starts from confidence order and introduces this correction only when it is supported by observed trajectories. It therefore preserves the host model, loss, data, and token sampler while replacing a heuristic order with an approximation to a well-defined reward-tilted distribution \citep{doob1957conditional,leonard2014a,chen2021stochastic,bu2026distributionalbiasesposttrainingmarkovian}.

The approximation is deliberately lightweight. It shares only the reward correction through cells indexed by generation phase, confidence, and, when needed, position, EOT status, or a task-specific variable; the full partial state still determines the host logits and provisional tokens. Confidence is available without another model call, while phase and position retain structure known to matter for token order \citep{park2026confidence,jazbec2026unmasking}. Counts and readiness gates prevent sparse cells from overriding confidence. This plug-in design is complementary to diffusion self-guidance methods that also control generation without retraining the base model \citep{li2026selfguidance}.

Stagewise Soft-BoN approaches the exact DPRM target at rate \(O(1/N)\) in terminal KL. The online estimator has an empirical-Bernstein sampling term plus explicit approximation, warmup, and drift terms. The resulting sample-complexity analysis predicts an easy-to-hard curriculum: confidence reduces noise early, while reward guidance later recovers useful orders that confidence rarely visits.

We evaluate nine recent public hosts spanning pretraining, post-training, test-time search, vision-language decoding, visual generation, proteins, cells, molecules, and regulatory DNA. Each comparison keeps the host model, token-value rule, objective, and evaluator fixed. Confidence is the common reference, with the host's native or random order included when relevant. DPRM improves PUMA GSM8K by \(14.6\%\), Omni-Diffusion CLIP-L/14 by \(13.2\%\), Prism voted accuracy from \(82.41\%\) to \(83.85\%\), and RealWorldQA numeric/count accuracy by \(8.97\) percentage points. Scientific gains include \(8.3\%\) in DCM nonzero recovery, \(15.0\%\) in GenMol QED, and \(53.3\%\) in SDPO-DNA total utility.

For seven hosts, training or calibration supplies a reusable table, so evaluation adds only scalar lookup and reranking and makes no additional denoiser calls. Prism and Omni require online values for different, experimentally verified reasons. Prism is a test-time scaling method: its verifier reward exists only after search begins, and there is no training-stage trajectory stream with which to prefill the table. Omni poses a transfer problem instead. A phase/confidence table learned from earlier prompts failed grouped validation and almost always reproduced confidence on new prompts, so we estimate token-order values from continuations of the current canvas. The added NFE and path costs of these two settings are reported explicitly.

We also test why the order changes matter. Complete PUMA traces show that DPRM reveals positions farther apart, returns more often to earlier masks, and commits numbers after more context is available. In Omni, changing one position from the same partial canvas changes the later image. DPLM supplies the opposite control: token order changes, but the final files remain identical. Full protocols, ablations, computational comparisons, uncertainty analyses, and host-level interpretations are given in the Supplementary Material. Code and retained artifacts are available at \href{https://github.com/DakeBU/DPRM-DLLM}{GitHub} and \href{https://huggingface.co/DarkerBu/DPRM-DLLM}{Hugging Face}, respectively.

\noindent\textbf{Prior dissemination and journal extension.}
An earlier version was posted on arXiv and presented orally at the non-archival ICML 2026 FoGen workshop \citep{bu2026dprmworkshop}. This journal version adds the Omni and LLaDA-V studies, including disjoint test sets, controlled order interventions, and the negative AI2D result. It also adds full trace analyses, controller ablations, cost and uncertainty reporting, scientific replications, and the DPLM order-insensitivity test. The Supplementary Material documents the new experiments and their settings host by host.


\section{DPRM as a Plug-in Ordering Module}
\label{sec:framework}
\begin{figure*}[ht]
    \centering
    \includegraphics[width=\linewidth]{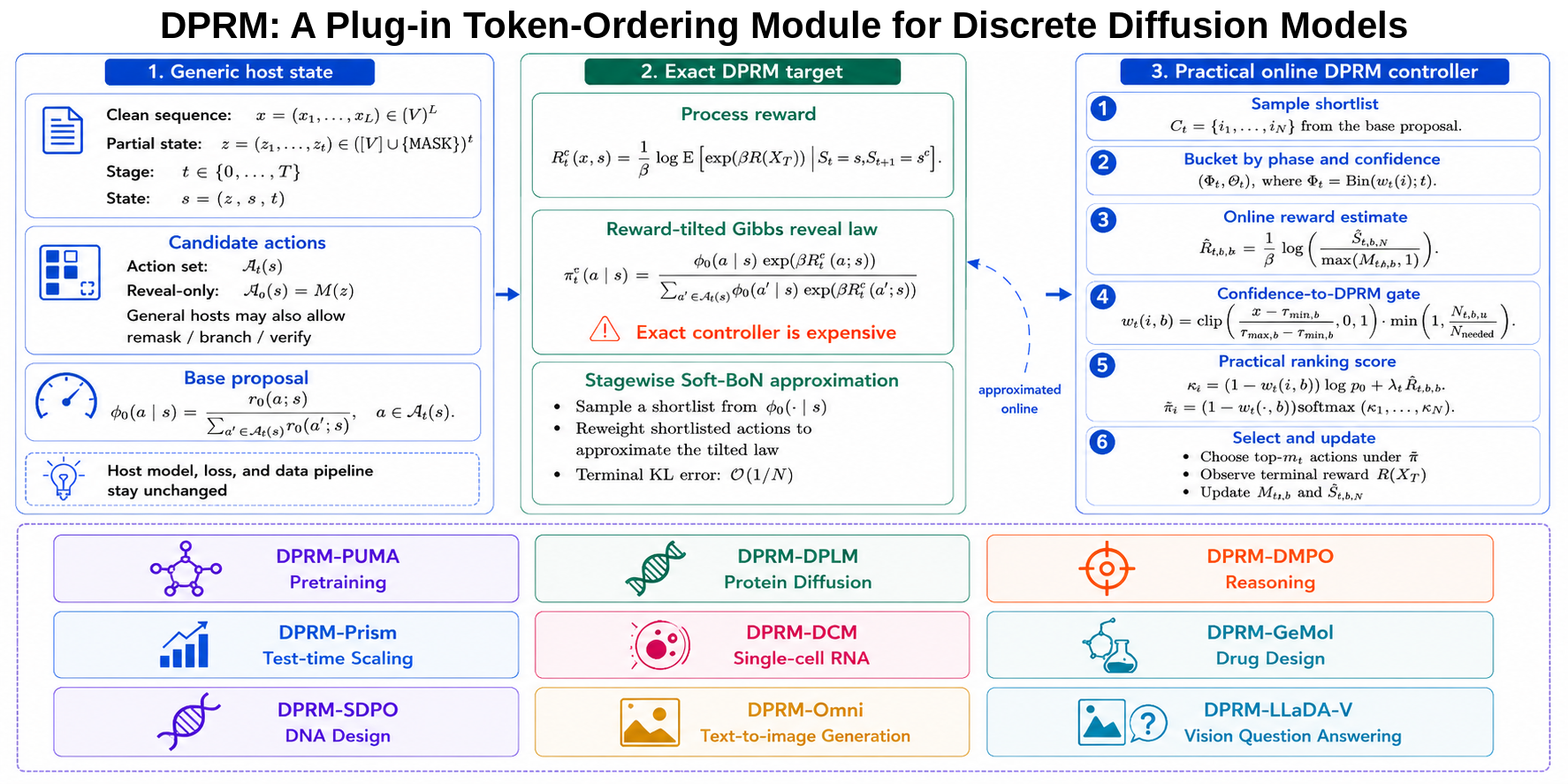}
    \caption{DPRM as a plug-in token-ordering module. The host supplies candidate positions and provisional token values; DPRM changes only their order. Confidence initializes the proposal, and a process-reward estimate reranks either a sampled shortlist or the full candidate set.}
    \label{fig:DPRM}
\end{figure*}
\subsection{A Generic View of Token Ordering in Diffusion Models}
\label{sec:framework_generic}

Let a clean token sequence be \(x=(x_1,\dots,x_L)\in[V]^L\), where \(V\) is the vocabulary and \(L\) is the sequence length; in language settings, \(x\) may be a prompt--response pair. Let \(c\) denote any fixed conditioning input, such as a prompt or image. A partially observed token array is
\[
z=(z_1,\dots,z_L)\in([V]\cup\{\texttt{[MASK]}\})^L,
\]
where each coordinate is either a visible token or the mask symbol. We use \(t\in\{0,\dots,T\}\) for the denoising step and write the available host state as \(s=(c,z,t)\).

Many diffusion algorithms share the same ordering problem. At state \(s\), let \(\mathcal I(s)\) be the candidate items that may be updated next. In a reveal-only host these are the masked positions; hosts with remasking, branching, or verification may include visible positions or partial hypotheses. Selecting item \(i\in\mathcal I(s)\) triggers the host-specific update at that position, such as reveal, remask, branch, or verify.

The generic computation has three steps:
\begin{enumerate}[leftmargin=18pt]
    \item compute a base proposal score \(\psi_i(s)\) for each \(i\in\mathcal I(s)\); 
    \item rank candidate items using an ordering score;
    \item choose an update budget \(m(s)\) and apply the host update to the top-\(m(s)\) items.
\end{enumerate}

Different hosts differ mainly in the base proposal \(\psi_i(s)\) and the update attached to each item \(i\). Random masking uses a uniform proposal. Random decoding corresponds to \(\psi_i(\cdot)=1\). Confidence-based decoding uses \(\psi_i(s)=p_i(s)\), where \(p_i(s)\) is the model's current confidence at item \(i\). In reveal-only settings, we later write \(M(z)\subseteq[L]\) for the masked coordinates and recover the simpler notation \(\mathcal I(s)=M(z)\).

We now introduce DPRM as a generic plug-in controller for this template.

\subsection{Progressive online DPRM}
\label{sec:framework_schedule}

We propose \textbf{DPRM} (\textbf{D}oob \(h\)-transform \textbf{P}rocess \textbf{R}eward \textbf{M}odel), a token-ordering module induced by a reward-tilted trajectory law.
The name comes from the following construction.
Let \(q_0\) denote the transition law induced by the host's base controller. Under \(q_0\), let \((S_0,S_1,\dots,S_T)\) be the host Markov chain, where \(S_t\) is the partially observed state at step \(t\), and let \(X_T\) be the completed sample.
For a terminal reward \(R(X_T)\) and inverse temperature \(\beta>0\), define the positive space--time harmonic function
\[
h_t(s)
:=
\mathbb E_{q_0}\!\left[\exp(\beta R(X_T))\mid S_t=s\right],
\]
where \(h_t(s)\) is the future exponentiated reward expected from state \(s\). The classical Doob \(h\)-transform \citep{doob1957conditional,leonard2014a,chen2021stochastic} tilts the base transition from \(s\) to the next state by the ratio \(h_{t+1}/h_t\).
Thus high-reward future continuations receive larger transition probability.
DPRM applies this idea to token-ordering: it changes the local ordering policy while leaving the host architecture, denoising objective, and data pipeline unchanged.

For a current state \(s\), let \(\mathcal I(s)\) be the candidate positions in the next token-order decision.
If position \(i\in\mathcal I(s)\) is selected at state \(s\), denote the resulting next state by \(s^i\).
The normalized base proposal is
\begin{equation}
q_0(i\mid s):=\frac{\psi(i;s)}{\sum_{j\in\mathcal I(s)}\psi(j;s)},
\qquad i\in\mathcal I(s),
\label{eq:local_base_proposal}
\end{equation}
where \(\psi(i;s)\) is the host's incumbent local score.
The target terminal law reweights the base-chain terminal law by \(\exp(\beta R(X_T))\); larger \(\beta\) gives terminal reward more influence.

The exact DPRM process reward is the log-moment future reward after selecting position \(i\):
\begin{equation}
R_t^\star(i;s)
:=
\frac{1}{\beta}
\log
\mathbb E\!\left[
\exp(\beta R(X_T))
\mid S_t=s,\; S_{t+1}=s^i
\right].
\label{eq:dprm}
\end{equation}
Equivalently, \(\exp(\beta R_t^\star(i;s))=h_{t+1}(s^i)\).
The Doob-transformed local update law is therefore
\begin{equation}
\pi_t^\star(i\mid s)
:=
\frac{q_0(i\mid s)\exp(\beta R_t^\star(i;s))}
{\sum_{j\in\mathcal I(s)}q_0(j\mid s)\exp(\beta R_t^\star(j;s))}.
\label{eq:local_gibbs_reveal}
\end{equation}
This is the exact reward-tilted Gibbs law targeted by DPRM.
It is also expensive: evaluating \(R_t^\star\) requires future reward information, and normalizing \(\pi_t^\star\) over all candidate positions can be costly.

Exact position values are impractical because almost every partial sequence is new. We therefore summarize a candidate with quantities available without another model call. Its \emph{phase} \(\phi_t(s_t)\) records generation progress, and its \emph{confidence bin} \(b_t(s_t,i)\) discretizes the host score \(\psi_i(s_t)\). Hosts may add a small positional or task-specific state when phase and confidence alias distinct token-order decisions. This design follows evidence that reasoning changes at sparse uncertain decisions and that reward learning reallocates probability among available reasoning paths \citep{yue2025rlcapacity,wang2025entropyminority,zhao2025absolutezero}. It is also consistent with learned order policies based on confidence, mask status, time, and positional attention, and with diffusion analyses that identify position and EOT structure as order-relevant \citep{park2026confidence,jazbec2026unmasking,ni2026flexibility}.

The pair \((\phi,b)\) is a \emph{bucket}: earlier decisions made at similar progress and confidence share a future-utility estimate. Formally, the projection is
\[
    \zeta_t(s_t,i)=(\phi_t(s_t), b_t(s_t,i)),
    \qquad
    b_t(s_t,i)=\operatorname{Bin}(\psi_i(s_t)).
\]
The full state \(s_t\) still determines the host logits; only the reward correction is shared. Thus a new sequence receives guidance when its bucket has prior evidence. For each bucket, \(N_{\phi,b}\) counts selected events and \(S_{\phi,b}\) accumulates their exponentiated terminal rewards.
A \emph{shortlist} is a subset sampled from the base proposal before reranking. Large candidate sets use a shortlist; small ones can use the full set.

\textbf{Stage 1: confidence alignment.}
Following PUMA \citep{kim2026puma}, teacher-forced training and decoding use the same confidence-progressive order. This replaces random training states by states visited by the decoder and is efficient when confidence identifies easy, stable decisions \citep{cai2026confidence}.

\textbf{Stage 2: reward-guided shortlist control.}
After warmup, the controller draws a shortlist from \(q_0(\cdot\mid s)\) and scores it with a process-reward estimate. Each past selected event \(j\) contributes its phase \(\phi_j\), confidence bin \(b_j\), and completed-trajectory reward \(R_j\).
Define
\[
\mathcal H_{\phi,b}:=\{j:\phi_j=\phi,\; b_j=b\}.
\]
For phase \(\phi\) and bucket \(b\), we maintain
\begin{equation}
    S_{\phi,b}=\sum_{j\in \mathcal H_{\phi,b}} \exp(\beta R_j),
    \qquad
    N_{\phi,b}=|\mathcal H_{\phi,b}|.
\end{equation}
This gives the estimator
\begin{equation}
    \widehat R_{\phi,b}
    =
    \begin{cases}
    \frac{1}{\beta}\log\!\left(\frac{S_{\phi,b}}{N_{\phi,b}}\right),
        & N_{\phi,b}>0,\\
    0, & N_{\phi,b}=0.
    \end{cases}
    \label{eq:dprm_estimator}
\end{equation}
For a candidate position \(i\) in phase \(\phi\) and bucket \(b_i\), the DPRM-guided score is
\begin{equation}
    g_i
    =
    \log \psi_i(s)
    +
    \beta \widehat R_{\phi,b_i}.
\end{equation}
The gate determines when this estimate is trusted. Let \(u\) be the controller schedule index, and let \(T_{\mathrm{warm}}<T_{\mathrm{switch}}\) and \(N_{\mathrm{ready}}>0\) be the warmup, full-switch, and bucket-readiness thresholds.

\begin{algorithm}[t]
\caption{Generic DPRM ordering module}
\label{alg:framework_dprm_module}
\begin{algorithmic}[1]
\REQUIRE state \(s_t\), candidates \(\mathcal I(s_t)\), scores \(\psi_i(s_t)\), phase \(\phi_t\), budget \(m_t\), shortlist size \(N_t\), schedule index \(u\), statistics \(\{N_{\phi,b},S_{\phi,b}\}\)
\STATE sample \(C_t=\{i_1,\dots,i_{N_t}\}\) from the base proposal over \(\mathcal I(s_t)\)
\FOR{each \(i\in C_t\)}
    \STATE \(b_i \gets \operatorname{Bin}(\psi_i(s_t))\)
    \STATE \(\widehat R_{\phi_t,b_i} \gets 0\) if \(N_{\phi_t,b_i}=0\), else \(\frac{1}{\beta}\log\!\left(\frac{S_{\phi_t,b_i}}{N_{\phi_t,b_i}}\right)\)
    \STATE \(\eta_i \gets \eta_u(\phi_t,b_i)\)
    \STATE \(g_i \gets \log \psi_i(s_t)+\beta\widehat R_{\phi_t,b_i}\)
    \STATE \(s_i \gets (1-\eta_i)\log\psi_i(s_t)+\eta_i g_i\)
\ENDFOR
\STATE select the top-\(m_t\) set \(\widehat A_t(s_t)\) under \(s_i\), and apply the host updates
\STATE complete the trajectory and observe \(R(X_T)\)
\FOR{each \(i\in \widehat A_t(s_t)\)}
    \STATE \(N_{\phi_t,b_i}\leftarrow N_{\phi_t,b_i}+1\), \(S_{\phi_t,b_i}\leftarrow S_{\phi_t,b_i}+\exp(\beta R(X_T))\)
\ENDFOR
\end{algorithmic}
\end{algorithm}

\begin{table*}
\centering
\caption{Model calls per output relative to confidence. Seven hosts require no extra denoiser calls. Prism and Omni estimate rewards online and therefore use more evaluations.}
\label{tab:main_inference_compute}
\setlength{\tabcolsep}{5pt}
\renewcommand{\arraystretch}{0.94}
\begin{tabular}{lcl}
\toprule
\textbf{Hosts} & \textbf{Relative cost} & \textbf{Why} \\
\midrule
PUMA, DMPO, DPLM, DCM, GenMol, SDPO, LLaDA-V & \(1\times\) & Scalar lookup; no extra denoiser evaluation \\
Prism & \textbf{\(1.76\times\) NFE} & Reward estimates computed during search \\
Omni-Diffusion & \textbf{\(5\times\) paths} & Five continuations from the same prompt and canvas \\
\bottomrule
\end{tabular}

\end{table*}

We interpolate between confidence and DPRM through
\begin{equation}
   \resizebox{0.9\linewidth}{!}{$ \eta_u(\phi,b)
    =
    \underbrace{\mathrm{clip}\!\left(\frac{u-T_{\mathrm{warm}}}{T_{\mathrm{switch}}-T_{\mathrm{warm}}},0,1\right)}_{\text{global transition}}
    \cdot
    \underbrace{\min\!\left(\frac{N_{\phi,b}}{N_{\mathrm{ready}}},1\right)}_{\text{bucket readiness}},
    \label{eq:gate}
$}\end{equation}
so the practical ranking score is
\begin{equation}
    s_i
    =
    (1-\eta_u(\phi,b_i))\log \psi_i(s)
    +
    \eta_u(\phi,b_i)\,g_i.
    \label{eq:reveal_score}
\end{equation}

\noindent\textbf{Computational cost.}
At seven hosts, training or calibration produces a reusable value table. DPRM then reranks candidates already scored by the host, leaving the number of denoiser calls unchanged from confidence. Prism has no such precomputation stage: it is a test-time scaling method, and its self-verification reward becomes available only after search trajectories have been evaluated. Its table therefore starts from confidence on each question and is populated during search. Omni has a different limitation. A table trained on earlier images failed to predict paired improvements on new prompts, so the reported controller compares five token-order continuations from the current canvas instead of transferring values across unrelated images. \Cref{tab:main_inference_compute} reports both costs; the Supplementary Material gives the failed Omni transfer test and measured wall-clock and NFE records.

The score admits two selection rules. For a shortlist drawn from \(q_0\), Soft-BoN samples with weights proportional to \(\exp(\beta\widehat R_i)\), because candidate frequency already represents the base factor; this is the finite-temperature law analyzed in \Cref{sec:theory}. The hard-tilt rule in \Cref{alg:framework_dprm_module} selects the largest guided scores and is the MAP limit used when one output is required. PUMA and DMPO use sampled shortlists. When repeated bucket statistics are unavailable but a test-time reward oracle is available, we estimate the same conditional log-moment value from position-conditioned continuations:
\begin{equation}
\begin{aligned}
\widehat R^{\rm MC}(i;s)
&=\frac1\beta\log\frac1K\sum_{k=1}^{K}
\exp\!\bigl(\beta R(X_T^{(k)})\bigr),\\
X_T^{(k)}&\sim q_0(\cdot\mid s,i).
\end{aligned}
\label{eq:mc_position_value}
\end{equation}
Omni-Diffusion uses this online form at step 96, evaluating the native position and confidence-rank alternatives \(0.70,0.85,0.90,0.95\) with one deterministic continuation each. DPRM selects by native order score plus terminal-reward tilt; checkpoint, provisional values, seed, and the 260-step schedule are shared. We report its five-path cost together with uniform-order and reward-only controls.

\begin{table*}
\centering
\caption{Theory-to-evidence map. The observable checks correspond to the stated implications of each result. Supplementary Figs.~21--25 provide the full training, coverage, concentration, and outcome diagnostics.}
\label{tab:theory_map}
\scriptsize
\setlength{\tabcolsep}{5pt}
\renewcommand{\arraystretch}{1.18}
\begin{tabular}{p{0.17\linewidth}p{0.38\linewidth}p{0.38\linewidth}}
\toprule
\textbf{Claim} & \textbf{Compact statement} & \textbf{Observable check} \\
\midrule
Bucket tracking &
\(\ell=\log(\mathcal N/\delta)\), \quad \(r(N)=\sqrt{\ell/N}+\ell/N\), \quad
\(\epsilon_t=O\!\left(\beta\eta_t r(N_{\min})+\mathrm{Bias}_t\right)\),
\quad \(\mathrm{Reg}_t\le2m_t\epsilon_t\). &
\Cref{fig:theory_mechanism_summary}(c,d) gives the learned-value/readiness and richer-state controls. Supplementary Figs.~23--24 report late bucket coverage, selected values, empirical-Bernstein radii, and drift. \\
Soft-BoN shortlist &
\(\displaystyle \mathbb E\,\mathrm{KL}(\nu_\beta\|\widehat\nu_N)
\le T\sinh^2(\beta/2)/N\). &
\Cref{tab:main_inference_compute} reports the deployed shortlist and evaluation budgets; \Cref{fig:omni_matched_intermediate_main} shows terminal changes after reranking positions from one shared state. \\
Finite-sample ordering &
\(T_{\rm conf}=\widetilde O(\kappa^2/\varepsilon)\) vs. \(T_{\rm rand}=\widetilde\Omega(e^{ad}/\varepsilon)\); late, \(T_{\rm DPRM}=\widetilde O(\log 1/\varepsilon)\) vs. \(T_{\rm conf}=\widetilde\Omega(e^{bh}\log 1/\varepsilon)\). &
\Cref{fig:theory_mechanism_summary}(a,b) summarizes lower early CE and larger hard/OOD residual gains; \Cref{tab:main_results} gives the matched pass@\(K\) comparison. Supplementary Figs.~21--23 and 25 show the training reward, confidence--CE proxy, recovered low-confidence mass, and hard/OOD outcome checks in full. \\
\bottomrule
\end{tabular}
\end{table*}

\textbf{Multi-objective terminal utility.}
Scientific benchmarks often judge each sample by several properties. We orient and normalize property \(j\) as a benefit \(y_j(X_T)\in[0,1]\), then declare \(\lambda_j\ge0\) with \(\sum_j\lambda_j=1\). A weighted sum permits compensation; when every property must remain competitive, Tchebycheff scalarization controls the largest weighted shortfall from \(y_j=1\). We use its log-sum-exp smoothing \citep{lin2024smooth}:
\begin{equation}
\resizebox{0.93\linewidth}{!}{$
R_{\rm WS}=\sum_j\lambda_jy_j,
\qquad
R_{\rm STCH}=1-\mu\log\sum_j
\exp\!\left(\frac{\lambda_j(1-y_j)}{\mu}\right)
+\rho\sum_j\lambda_jy_j .
\label{eq:main_multiobjective_reward}$}
\end{equation}
As \(\mu\downarrow0\), \(R_{\rm STCH}\) approaches the Tchebycheff reward; \(\rho\) favors larger average benefit at ties. Smoothing avoids abrupt bucket and rank changes near metric ties. Preferences and normalization are fixed before evaluation, while dataset-level properties such as uniqueness remain external axes. The Supplementary Material specifies every task reward.

\section{Theory and Empirical Implications}
\label{sec:theory}

The theory separates one background alignment fact from three DPRM claims: online value tracking, shortlist approximation, and conditional finite-sample optimization. We state only the operational conclusions here. Full forms appear in the Supplementary Material sections ``Teacher-forced alignment'', ``Online tracking theorem'', ``Soft-BoN approximation'', and ``Finite-sample forward-KL separations''.

\textbf{Background alignment result.}
By \citet{kim2026puma}, if reveal decisions depend only on the visible state and time, teacher forcing matches idealized inference marginals and preserves the Bayes denoiser; on suitable latent families, its structured trajectories also replace exponentially many random-mask samples by linearly many informative states. We use only the consequence that alignment changes which states are observed without deciding which aligned order is best. The Supplementary Material gives the full definitions and proofs.

\textbf{Online bucket tracking.}
For a reachable state \(s\), let \(N_{\min,t}(s)\) be the smallest count among candidate buckets, \(\eta_t(s)\) the largest readiness gate, and \(\mathcal N\) the number of bucket--time events controlled simultaneously. Write
\begin{equation}
\begin{aligned}
r_t(s;\delta)
&:=\sqrt{\frac{\log(\mathcal N/\delta)}{N_{\min,t}(s)}}
+\frac{\log(\mathcal N/\delta)}{N_{\min,t}(s)},\\
\epsilon_t(s;\delta)
&:=O\!\left(\beta\eta_t(s)r_t(s;\delta)
+\operatorname{Bias}_t(s;\delta)\right),
\end{aligned}
\label{eq:main_bucket_error}
\end{equation}
where \(\operatorname{Bias}_t\) collects state coarsening, incomplete warmup/readiness, and nonstationary drift.
\begin{theorem}[Informal: online bucket score tracking]
\label{thm:online_main_informal}
Under bounded rewards and the regularity conditions stated in the Supplementary Material, with high probability
\[
\begin{aligned}
\sup_i|\widehat g_t(i;s)-g_t^\star(i;s)|
&\le\epsilon_t(s;\delta),\\
\operatorname{Regret}_t(s)
&\le2m(s)\epsilon_t(s;\delta).
\end{aligned}
\]
\end{theorem}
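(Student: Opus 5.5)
We compare the practical score \(\widehat g_t(i;s)=(1-\eta_i)\log\psi_i(s)+\eta_i(\log\psi_i(s)+\beta\widehat R_{\phi_t,b_i})\) with the exact Doob score \(g_t^\star(i;s)=\log\psi_i(s)+\beta R_t^\star(i;s)\) from \eqref{eq:dprm}. The \(\log\psi_i\) terms cancel, so the difference is \(\eta_i\beta\widehat R_{\phi,b_i}-\beta R_t^\star(i;s)\). We split it through the bucket population target \(\bar R_{\phi,b}:=\frac1\beta\log \bar h_{\phi,b}\), where \(\bar h_{\phi,b}\) is the (time-averaged) conditional mean of \(\exp(\beta R(X_T))\) over the selected events landing in bucket \((\phi,b)\):
\[
\widehat g_t-g_t^\star
=\eta_i\beta(\widehat R_{\phi,b_i}-\bar R_{\phi,b_i})
+\eta_i\beta(\bar R_{\phi,b_i}-R_t^\star(i;s))
-(1-\eta_i)\beta R_t^\star(i;s).
\]
The first term is the sampling error. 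The second is state coarsening together with nonstationary drift, since the controller changes while data accumulate. The third is the warmup/readiness gap. The last two terms are exactly what \(\operatorname{Bias}_t(s;\delta)\) is defined to absorb. The substantive work is the first term.

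For the sampling term, write \(Y_j=\exp(\beta R_j)\in[e^{\beta R_{\min}},e^{\beta R_{\max}}]\) by bounded rewards. Conditioned on the history when event \(j\) is recorded, \(Y_j-\mathbb E[Y_j\mid\mathcal F_{j-1}]\) is a bounded martingale difference. I would apply a Freedman / empirical-Bernstein inequality to \(S_{\phi,b}/N_{\phi,b}\). This gives, with probability \(1-\delta/\mathcal N\),
\[
\Bigl|\tfrac{S_{\phi,b}}{N_{\phi,b}}-\bar h_{\phi,b}\Bigr|\lesssim \hat\sigma\sqrt{\ell/N_{\phi,b}}+B\ell/N_{\phi,b},
\qquad \ell=\log(\mathcal N/\delta).
\]
The counts \(N_{\phi,b}\) are random stopping times, so I would use a stitched/peeling bound or a uniform-in-\(N\) martingale inequality and pay a log factor absorbed in \(\ell\). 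A union bound over the \(\mathcal N\) bucket--time events makes this simultaneous. I then pass to log scale: \(|\log x-\log y|\le|x-y|/\min(x,y)\) with \(\min\ge e^{\beta R_{\min}}\). This gives \(|\widehat R-\bar R|\lesssim\beta^{-1}\cdot\mathrm{const}\cdot r(N_{\phi,b})\), with constants depending on the reward range hidden in \(O(\cdot)\). Multiplying by \(\eta_i\beta\le\beta\eta_t(s)\) and taking the worst bucket, bounded by \(N_{\min,t}(s)\) since \(r\) is decreasing, gives the \(\beta\eta_t r_t\) term. I expect this step to be the main obstacle. The adaptive data collection means bucket samples are neither independent nor identically distributed. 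The argument has to define \(\bar h\) as a predictable average so that the martingale bound applies, and then push the resulting mismatch with \(h_{t+1}(s^i)\) into the drift part of \(\operatorname{Bias}_t\). This is also where the "regularity conditions" must be invoked, namely bounded drift and well-defined bucket conditional means.

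For regret, let \(A^\star\) be the top-\(m(s)\) set under \(g^\star\) and \(\widehat A\) the set selected under \(\widehat g\). Define \(\operatorname{Regret}_t(s)=\sum_{i\in A^\star}g^\star_i-\sum_{i\in\widehat A}g^\star_i\). Since \(\widehat A\) maximizes \(\sum\widehat g\), we have
\[
\sum_{A^\star}g^\star-\sum_{\widehat A}g^\star\le\sum_{A^\star}(g^\star-\widehat g)+\sum_{\widehat A}(\widehat g-g^\star)\le 2m(s)\epsilon_t(s;\delta).
\]
This holds on the high-probability event from the uniform bound. When a shortlist is used, the same argument applies with \(A^\star\) taken within the shortlist.
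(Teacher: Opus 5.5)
Your proposal follows the paper's proof essentially step for step. You use the same split of $\widehat g_t-g_t^\star$ into a sampling term, an abstraction/drift term and a readiness term, an empirical-Bernstein bound moved to log scale via a lower bound on the bucket mean and made uniform by a union bound over the $\mathcal N$ bucket--time events, and the identical exchange argument giving the $2m(s)\epsilon_t(s;\delta)$ regret. The only difference is that you handle the adaptively collected bucket samples with a Freedman-type martingale bound around a predictable average, whereas the paper assumes conditionally independent samples with a common mean, applies the Maurer--Pontil inequality, and charges nonstationarity to the variation budget $D_{\phi,b}(t)$; yours is the more careful treatment, but the structure is the same.
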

The theorem predicts shrinking sampling error, confidence fallback for sparse cells, and better transfer from a less aliased state. \Cref{fig:theory_mechanism_summary}(c,d) shows the value/readiness and state-abstraction checks; the Supplementary Material gives the radius trajectory and proof.

\textbf{Soft-BoN approximation to the tilted distribution.}
Let \(\nu_\beta\) be the terminal law under the exact Gibbs rule in \cref{eq:local_gibbs_reveal}, and \(\widehat\nu_N\) the law obtained by drawing \(N\) candidates from the base proposal and applying stagewise Soft-BoN.
\begin{theorem}[Soft-BoN terminal approximation]
\label{thm:layerwise_softbon}
For finite horizon and candidate sets, with \(R(X_T)\in[0,1]\),
\begin{equation}
\mathbb E\!\left[
\mathrm{KL}\!\left(\nu_\beta \,\middle\|\, \widehat{\nu}_{N}\right)
\right]
\le
\frac{T\,\sinh(\beta/2)^2}{N}.
\label{eq:terminal_kl_rate_uniform}
\end{equation}
\end{theorem}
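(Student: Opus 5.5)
The plan is to reduce the terminal bound to a per-step bound via the chain rule for KL, and then control each step with a one-step Soft-BoN estimate. Here Soft-BoN is taken to use the exact weights \(w_i:=\exp(\beta R_t^\star(i;s))=h_{t+1}(s^i)\); this idealization is my reading of the statement. Let \(P^\star\) and \(\widehat P_N\) be the path laws of \((S_0,\dots,S_T)\) under the exact Gibbs rule \eqref{eq:local_gibbs_reveal} and under stagewise Soft-BoN. By data processing, \(\mathrm{KL}(\nu_\beta\|\widehat\nu_N)\le \mathrm{KL}(P^\star\|\widehat P_N)\). Both chains share the same initial state and the same deterministic map \(i\mapsto s^i\). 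The chain rule then gives
\[
\mathrm{KL}(P^\star\|\widehat P_N)=\sum_{t=0}^{T-1}\mathbb E_{S_t\sim P^\star}\bigl[\mathrm{KL}\bigl(\pi_t^\star(\cdot\mid S_t)\,\|\,p_{t,N}(\cdot\mid S_t)\bigr)\bigr],
\]
where \(p_{t,N}\) is the one-step Soft-BoN selection law. It therefore suffices to show \(\mathrm{KL}(\pi^\star_t(\cdot\mid s)\|p_{t,N}(\cdot\mid s))\le \sinh^2(\beta/2)/N\) uniformly in \(s\); summing over \(T\) steps then yields \eqref{eq:terminal_kl_rate_uniform}.

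For the one-step bound, fix \(s\) and write \(q=q_0(\cdot\mid s)\), \(Z=\mathbb E_q[w]\), and \(\pi(i)=q(i)w_i/Z\). Since \(R\in[0,1]\), each \(h_{t+1}\) is an average of \(e^{\beta R}\), so every weight lies in \([a,b]=[1,e^\beta]\). Draw \(I_1,\dots,I_N\) i.i.d. from \(q\) and select index \(k\) with probability proportional to \(w_{I_k}\). By exchangeability,
\[
p_N(i)=q(i)\,w_i\,\mathbb E\Bigl[\tfrac{N}{w_i+\sum_{k=2}^N w_{I_k}}\Bigr].
\]
Because \(x\mapsto 1/x\) is convex, Jensen gives \(p_N(i)\ge q(i)w_iN/(w_i+(N-1)Z)\). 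Hence
\[
\frac{\pi(i)}{p_N(i)}\le 1+\frac{w_i-Z}{NZ}.
\]
Using \(\log(1+x)\le x\) and then \(\mathbb E_\pi[w]=\mathbb E_q[w^2]/Z\),
\[
\mathrm{KL}(\pi\|p_N)\le \mathbb E_\pi\Bigl[\tfrac{w_I-Z}{NZ}\Bigr]=\frac{\mathrm{Var}_q(w)}{NZ^2}.
\]

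The remaining step is a worst-case variance bound. The Bhatia--Davis inequality gives \(\mathrm{Var}_q(w)\le (b-Z)(Z-a)\). The function \(Z\mapsto (b-Z)(Z-a)/Z^2\) on \([a,b]\) is maximized at \(Z=2ab/(a+b)\), where it equals \((b-a)^2/(4ab)\). With \(a=1\) and \(b=e^\beta\) this value is \((e^\beta-1)^2/(4e^\beta)=\sinh^2(\beta/2)\), which is the claimed constant. The outer expectation in \eqref{eq:terminal_kl_rate_uniform} needs no extra work, since the bound is deterministic once the per-step bound holds for every \(s\).

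I expect the main obstacle to be justifying the chain-rule step cleanly. One must check that the exact Gibbs chain built from \eqref{eq:local_gibbs_reveal} has terminal law \(\nu_\beta\); this is the Doob consistency \(h_t(s)=\sum_i q_0(i\mid s)h_{t+1}(s^i)\). One must also check that the Soft-BoN chain is Markov in the same state space, so that the per-step KLs are evaluated under \(P^\star\)-marginals. If several indices could map to the same next state, I would apply the argument to the induced law on next states and invoke data processing once more. A secondary subtlety is the Jensen direction together with \(\log(1+x)\le x\) when \(w_i<Z\). Both are valid for all \(x>-1\), so I expect no case split to be needed.
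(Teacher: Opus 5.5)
Your proof is correct, and its skeleton matches the paper's. Both use data processing from paths to terminals, the KL chain rule for two Markov chains sharing the update $s\mapsto s^i$, the Doob recursion $h_t(s)=\sum_i q_0(i\mid s)h_{t+1}(s^i)$, and a uniform one-step bound summed over $T$ steps. The paper carries the actions in the path, which sidesteps your non-injectivity caveat.

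The genuine difference is the one-step lemma. The paper obtains it by citing the Soft-BoN KL theorem of Verdun et al.\ as a black box. You prove it directly from the exchangeability formula for the selection law, Jensen for $1/x$, $\log(1+x)\le x$, and Bhatia--Davis. Your route buys two things.

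First, since $Z=h_t(s)$ by the recursion, it gives the sharper state-dependent bound $\mathrm{Var}_{q_0(\cdot\mid s)}\bigl[h_{t+1}(s^I)\bigr]/\bigl(N\,h_t(s)^2\bigr)$. The constant $\sinh^2(\beta/2)$ is only its worst case over weights in $[1,e^\beta]$.

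Second, it works with the shortlist-averaged kernel $p_N$. The paper instead defines $\widehat\pi_{t,N}(\cdot\mid s;A_{t,1:N})$ conditionally on the shortlist and puts the expectation outside the KL. Read literally, that KL is $+\infty$ whenever the shortlist omits a candidate with $q_0>0$. This event has positive probability unless $q_0(\cdot\mid s)$ is degenerate, and probability one if there are more than $N$ candidates. Your marginal reading is therefore the one under which the statement holds, and it supplies the absolute continuity $P^\star\ll\widehat P_N$ that the chain rule needs.

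The paper's presentation, in turn, states the pathwise bound for state-dependent shortlist sizes, $\sinh^2(\beta/2)\,\mathbb E_{\mathbb P_\beta}\bigl[\sum_t 1/N_t(S_t)\bigr]$. Your argument gives the same bound once $N$ is replaced by $N_t(s)$ in the per-step estimate.
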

This bound concerns shortlist sampling, not error in the bucket values. The Supplementary Material proves the stronger pathwise result and lists the shortlist size used by each host. \Cref{tab:main_inference_compute} reports the corresponding model calls.

\textbf{Conditional finite-sample optimization advantage.}
The background proposition implies that ordering affects finite-sample optimization rather than the population optimum. Importance-sampling SGD motivates confidence as an early proxy for gradient importance \citep{zhao2015importance,hanchi2022srg,chen2023hetersgd}; confidence gating can later under-cover useful trajectories \citep{fang2026locally}.
\begin{theorem}[Informal: conditional ordering sample complexity]
\label{thm:optimization_advantage_informal}
Under the full conditions in the Supplementary Material, if confidence is a \(\kappa_t\)-accurate importance proxy and informative aligned orders form an exponentially smaller family of difficulty \(d_t\), then
\begin{equation}
\begin{aligned}
T_t^{\rm conf}(\varepsilon)
&=\widetilde O\!\left(\frac{\kappa_t^2}{\varepsilon}\right),\\
T_t^{\rm rand}(\varepsilon)
&=\widetilde \Omega\!\left(\frac{e^{a_t d_t}}{\varepsilon}\right),
\quad a_t>0.
\end{aligned}
\label{eq:early_order_complexity}
\end{equation}
If confidence visits a necessary residual family with exponentially small probability in \(h_t\), and post-warmup score error is below half its DPRM gap, then
\begin{equation}
T_{t,\mathrm{late}}^{\rm DPRM}(\varepsilon)
=
\widetilde O\!\left(\log\frac{1}{\varepsilon}\right),
\qquad
T_{t,\mathrm{late}}^{\rm conf}(\varepsilon)
=
\widetilde \Omega\!\left(e^{b_t h_t}\log\frac{1}{\varepsilon}\right).
\label{eq:late_order_complexity}
\end{equation}
\end{theorem}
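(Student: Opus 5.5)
The plan is to prove the two parts of \Cref{thm:optimization_advantage_informal} separately. Each part pairs an importance-sampling upper bound for one ordering with a coverage lower bound for the other. I will work in a stylized model fixed by the Supplementary conditions. The learner minimizes a smooth forward-KL (cross-entropy) objective \(F_t(\theta)=\mathbb E_{\omega\sim\rho}\,\ell(\theta;\omega)\) over aligned trajectories \(\omega\), using SGD with per-sample gradients drawn under an ordering-induced sampling law \(q\): confidence, random, or DPRM. Each gradient is importance-weighted so that it stays unbiased for \(\nabla F_t\). The order therefore changes only the gradient variance (early regime) and which components of \(\rho\) are ever visited (late regime), not the population optimum. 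This is consistent with the background alignment result of \citet{kim2026puma}.

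\textbf{Early regime.} For the upper bound I will use the standard importance-sampling SGD bound \citep{zhao2015importance}. For a convex, or PL, objective, the iterations needed to reach \(\varepsilon\) scale as \(\widetilde O(\sigma_q^2/\varepsilon)\), where \(\sigma_q^2=\mathbb E_q\|\rho\nabla\ell/q\|^2\). The optimal law is \(q^\star\propto\rho\|\nabla\ell\|\), and \(\kappa_t\)-accuracy of confidence means \(q_{\rm conf}/q^\star\in[\kappa_t^{-1},\kappa_t]\). Substituting gives \(\sigma_{\rm conf}^2\le\kappa_t^2\sigma_{q^\star}^2\), hence \(T^{\rm conf}=\widetilde O(\kappa_t^2/\varepsilon)\). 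For the random-order lower bound I will use the latent-family assumption: the informative aligned orders occupy a fraction \(e^{-a_td_t}\) of the random-mask states. Random masking therefore sees an informative state with probability \(e^{-a_td_t}\). A Le Cam or two-point argument then shows that any estimator needs \(\Omega(1/\varepsilon)\) informative samples, so it needs \(\widetilde\Omega(e^{a_td_t}/\varepsilon)\) draws in total. Equivalently, the importance weight \(\rho/q_{\rm rand}\) inflates the variance by \(e^{a_td_t}\).

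\textbf{Late regime.} Here the objective restricted to the residual family is assumed strongly convex. For DPRM, I first invoke \Cref{thm:online_main_informal}: after warmup, \(\sup_i|\widehat g_t-g_t^\star|\le\epsilon_t\), and \(\epsilon_t\) is below half the DPRM gap. Under that condition the hard-tilt top-\(m\) selection picks the residual family with constant probability. The residual gradient is then sampled at a constant rate, and linear convergence of strongly convex SGD, in the interpolation or low-noise regime assumed in the Supplement, gives \(\widetilde O(\log(1/\varepsilon))\) iterations. For confidence, the residual family is visited with probability at most \(e^{-b_th_t}\). Each step thus contributes expected progress on the residual coordinates of only \(e^{-b_th_t}\) times the full step. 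Coupling the trajectory with one that ignores the residual until it is first visited, the expected contraction per step is at best \(1-ce^{-b_th_t}\). This gives \(\widetilde\Omega(e^{b_th_t}\log(1/\varepsilon))\).

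The main obstacle is the confidence late lower bound. It must hold for every step-size choice, and it cannot rely only on high-variance importance weights, since a rare visit could take a huge step. I will first try a hitting-time formulation. Until the residual family has been visited \(\Omega(\log(1/\varepsilon))\) times, the residual error cannot fall below \(\varepsilon\), because each visit contracts it by at most a constant factor under bounded step sizes. The number of visits in \(T\) steps is \(\mathrm{Bin}(T,e^{-b_th_t})\), and a Chernoff bound then gives the claimed \(\widetilde\Omega\) rate.
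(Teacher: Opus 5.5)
Your plan has the same skeleton as the paper's proof. In the early stage, the paper compares second moments of importance-weighted SGD: the optimal proposal is $\propto u_t\nu_t$, confidence is within a factor $\kappa_t^2$ of it, uniform sampling is inflated by $(1-\rho_t)^2e^{a_td_t}$ via Cauchy--Schwarz on the easy family, and the smooth/PL recursion converts this into iteration counts. In the late stage, a residual-mass comparison feeds the geometric recursion $\mathbb E[\mathcal K^{\rm res}_{k+1}]\le(1-\gamma_t p)\,\mathcal K^{\rm res}_k$. You depart from the paper in three sub-arguments, and each departure changes what must be assumed.

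\emph{Early lower bound.} The paper's bound is internal to constant-step SGD. The floor $L_t\eta_t\mathcal M_t(p_t^{\rm rand})/(2\mu_t)\le\varepsilon/2$ caps $\eta_t$, and the contraction $1-\mu_t\eta_t$ then forces the iteration count. Your remark that ``equivalently, the variance is inflated'' needs exactly this floor step to become an iteration bound. Your Le Cam route would give an algorithm-independent bound instead, but only if orders outside the informative family carry no information, i.e.\ a posterior-collapse hypothesis like Assumption~\ref{ass:posterior}. Assumption~\ref{ass:app_early_concentration} is weaker and lets those orders carry up to a $\rho_t$ fraction of the gradient mass.

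\emph{Late DPRM upper bound.} The paper analyzes a softmax of $\widehat g_t$ over $\mathcal O_t(s)$. Its residual mass is only $|\mathcal R_t|e^{\Delta-2\epsilon}/(|\mathcal R_t|e^{\Delta-2\epsilon}+|\mathcal O_t\setminus\mathcal R_t|)$, so the paper must add $\Delta_t^{\rm res}-2\epsilon_t\ge b_th_t+\log(|\mathcal O_t\setminus\mathcal R_t|/|\mathcal R_t|)+c_t$ to make $p_t^{\rm DPRM}$ constant. Your hard-tilt rule, as in Corollary~\ref{cor:app_online_margin}, ranks every residual order above every other one once $\epsilon_t<\Delta_t^{\rm res}/2$. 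Its top choice therefore lies in $\mathcal R_t(s)$ with probability one under the informal hypothesis alone, which is a cleaner match to the statement. The price is twofold. You analyze the MAP rule rather than the finite-temperature law of Theorem~\ref{thm:layerwise_softbon}. The maximization must also range over the full candidate set: a size-$N$ shortlist drawn from a proposal with residual mass $r$ contains a residual order with probability at most $Nr$, so a confidence-drawn shortlist can reintroduce the undercoverage. Finally, you do not need strong convexity plus interpolation, since Assumption~\ref{ass:app_late_residual}(3) posits the per-visit contraction $\gamma_t$ directly.

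\emph{Late confidence lower bound.} You are right that this is the delicate step, and your treatment is more careful than the paper's. Assumption~\ref{ass:app_late_residual}(3) supplies only upper bounds and even allows non-residual steps to decrease $\mathcal K^{\rm res}$. The paper obtains its $\Omega$ bound by reading that contraction recursion as if it were tight. Your argument needs two hypotheses stated explicitly:
\begin{enumerate}
\item a residual visit contracts $\mathcal K^{\rm res}$ by at most a factor $1-\bar\gamma_t$, in conditional expectation;
\item a non-residual step does not decrease it (your coupling silently assumes this).
\end{enumerate}
With both in hand you can drop the Chernoff step. Writing $\mathcal F_k$ for the history up to step $k$, they give $\mathbb E[\mathcal K^{\rm res}_{k+1}\mid\mathcal F_k]\ge(1-\bar\gamma_tC_te^{-b_th_t})\,\mathcal K^{\rm res}_k$. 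Iterating gives $\mathbb E[\mathcal K^{\rm res}_k]\ge(1-\bar\gamma_tC_te^{-b_th_t})^k\,\mathcal K^{\rm res}_0$. This yields $T^{\rm conf}_{t,\rm late}(\varepsilon)=\Omega\bigl(e^{b_th_t}\log(\mathcal K^{\rm res}_0/\varepsilon)/(\bar\gamma_tC_t)\bigr)$ directly, and it holds for any update rule satisfying the two hypotheses, not just for the recursion.
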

These conditions predict lower local loss under confidence early and larger DPRM gains on hard or OOD cases later. \Cref{fig:theory_mechanism_summary}(a,b) checks both predictions. \Cref{tab:scientific_ordering_results}(a) shows that changing the order is not enough when the host maps different orders to the same final output.

\begin{table*}[!t]
\centering
\caption{Multimodal ordering on fixed host models: Omni-Diffusion orders visual-code positions on \(512\) untouched confirmation prompts, and LLaDA-V orders image-conditioned answer positions on \(509\) strict held-out documents. \textbf{Paired normalized two-host aggregate: confidence 0.92561, DPRM 1.00000.}}
\label{tab:multimodal_ordering_results}
\scriptsize
\setlength{\tabcolsep}{4pt}
\begin{subtable}{0.54\linewidth}
\centering
\caption{Omni-Diffusion, \(512\)-prompt untouched PartiPrompts confirmation.}
\begin{tabular}{lccc}
\toprule
\textbf{Order} & \textbf{CLIP-L/14} & \textbf{CLIP-B/32} & \textbf{Paths} \\
\midrule
\omnimatchedrows
\bottomrule
\end{tabular}
\end{subtable}
\hfill
\begin{subtable}{0.42\linewidth}
\centering
\caption{LLaDA-V strict held-out accuracy.}
\begin{tabular}{lcc}
\toprule
\textbf{Task} & \textbf{Conf.} & \textbf{DPRM} \\
\midrule
RealWorldQA & 0.4735 & \bestcell{0.4892} \\
Numeric/count & 0.3205 & \bestcell{0.4103} \\
\bottomrule
\end{tabular}
\end{subtable}

\vspace{0.55em}

\footnotesize Omni DPRM minus confidence is \(+0.02464\,[+0.02249,+0.02677]\) for CLIP-L/14 and \(+0.01018\,[+0.00820,+0.01221]\) for CLIP-B/32. RealWorldQA overall changes by \(+1.57\)pp with paired \(95\%\) interval \([-0.39,+3.73]\)pp; numeric/count changes by \(+8.97\,[+2.56,+15.38]\)pp.
\end{table*}

\begin{figure*}[!t]
\centering
\includegraphics[width=0.92\linewidth]{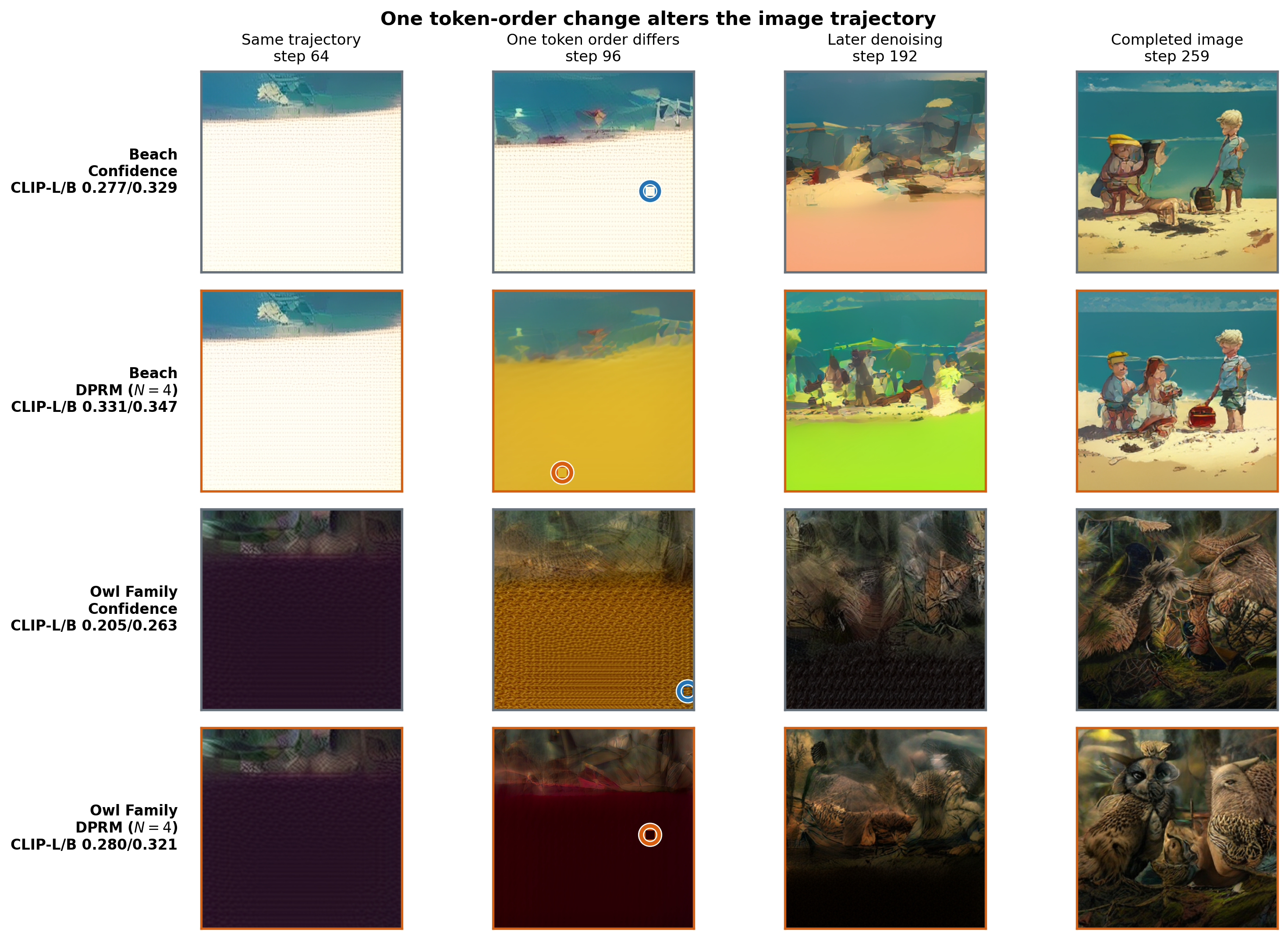}
\caption{Omni-Diffusion token-order diagnostic on a beach scene and an owl family. \textbf{Shared state: confidence and DPRM use the same canvas through step 95.} At step 96, blue marks the lowest-entropy position and orange the lower-confidence position selected by terminal utility. \textbf{Single intervention: all later token values use the same confidence continuation.} The DPRM position opens an unfilled beach region or anchors a peripheral body region, improving object separation. \textbf{Interpretation: one token-order change alters the later image trajectory.} Aggregate conclusions come from the disjoint \(512\)-prompt confirmation in \Cref{tab:multimodal_ordering_results}.}
\label{fig:omni_matched_intermediate_main}
\end{figure*}

\section{Experiments}
\label{sec:experiments}

We test nine published hosts spanning text, images, proteins, cells, molecules, and DNA. In every comparison, the host model and token-value rule are fixed. The tables report task results and uncertainty. \Cref{fig:theory_mechanism_summary,fig:omni_matched_intermediate_main,fig:puma_reveal_order_case_study} give the main ablations and order analyses, while \Cref{tab:main_inference_compute} reports model calls. The Supplementary Material contains the full experimental record. ``Experiment Details and Mechanism Controls'' gives complete settings and host-level ablations. ``Runtime and Sampling-Cost Analysis'' gives measured costs. ``Statistical Uncertainty Protocol'' gives every confidence interval and multi-seed analysis. ``Empirical Diagnostics for the Finite-Sample Theory'' gives the full diagnostic curves used to check the assumptions.

\subsection{DPRM on Multimodal Data}
\label{sec:exp_multimodal}

\noindent\textbf{Omni-Diffusion.}
The text-to-image host \citep{li2026omnidiffusion} normally commits the lowest-entropy visual position. We freeze its checkpoint and token sampler and change one token-order decision, at step \(96\). We first tested the reusable-table design used by the sequence hosts. On prompt-grouped validation, the best phase/confidence table had worse RMSE than a zero predictor (\(0.0356\) versus \(0.0344\)) and negative Pearson correlation (\(-0.067\)); on \(96\) new prompts, it changed only eight outputs and gave a mean CLIP-L/14 change of \(+0.00018\) with an interval crossing zero. Historical spatial values therefore did not transfer reliably between unrelated canvases.

\begin{figure*}
\centering
\includegraphics[width=0.98\linewidth]{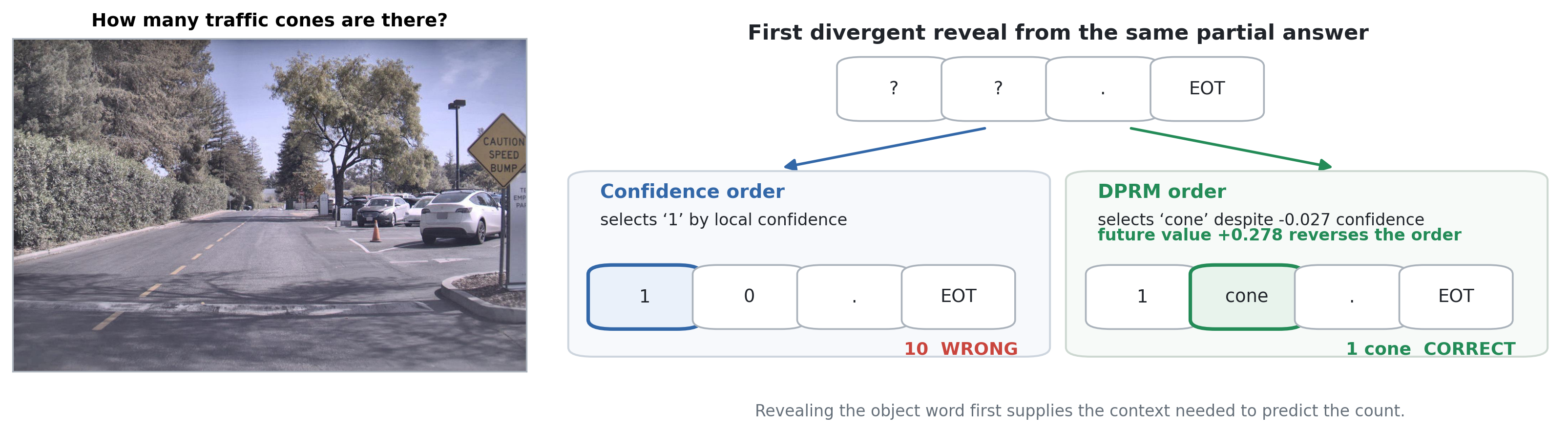}
\caption{Same-canvas RealWorldQA intervention. From the same partial answer, confidence reveals \texttt{1} and later produces \texttt{10}; DPRM uses estimated future utility to reveal \texttt{cone}, then predicts the correct count. The Supplement reports all seven DPRM-only numeric wins.}
\label{fig:rwqa_order_case_main}
\end{figure*}

\begin{table*}[!t]
\centering
\caption{Natural-language ordering results across pretraining, reasoning post-training, and test-time scaling. Each comparison preserves the host scaffold and changes the ordering controller.}
\label{tab:natural_language_results}
\scriptsize
\setlength{\tabcolsep}{4pt}
\renewcommand{\arraystretch}{0.92}

\begin{subtable}{0.54\linewidth}
\centering
\caption{PUMA on GSM8K at EMA step \(2.00\)M with unmasking width 3.}
\label{tab:puma_results}
\begin{tabular}{lcc}
\toprule
\textbf{Method} & \textbf{Accuracy} & \textbf{Paired delta} \\
\midrule
PUMA (confidence) & 30.10 & -- \\
DPRM-PUMA & \bestcell{34.50} & \bestcell{+4.40 [1.90, 6.90]} \\
\bottomrule
\end{tabular}
\end{subtable}
\hfill
\begin{subtable}{0.44\linewidth}
\centering
\caption{DPRM-Prism on GSM8K with LLaDA-2.0-mini.}
\label{tab:prism_results}
\setlength{\tabcolsep}{1.5pt}
\begin{tabular}{lcccc}
\toprule
\textbf{Method} & \textbf{Voted} & \textbf{Rank-1} & \textbf{Any-4} & \textbf{Mean NFE} \\
\midrule
Prism & 82.41 & 82.11 & 84.61 & 609 \\
DPRM-Prism & \bestcell{83.85} & \bestcell{83.70} & \bestcell{86.58} & 1{,}071 \\
\bottomrule
\end{tabular}
\end{subtable}

\vspace{0.55em}

\begin{subtable}{\linewidth}
\centering
\caption{Arithmetic mean of pass@\(K\) over \(K\in\{1,2,4,8,16,32\}\) from complete step-5k success matrices.}
\label{tab:main_results}
\begin{tabular}{lcccc}
\toprule
\textbf{Model} & \textbf{MATH} & \textbf{MATH Hard} & \textbf{Countdown} & \textbf{Countdown Hard} \\
\midrule
\dmpomatchedrows
\bottomrule
\end{tabular}
\end{subtable}

\vspace{0.28em}

\end{table*}

The reported controller instead evaluates the native position and four fixed confidence-rank alternatives on the current canvas. We select the guidance scale on \(128\) prompts, then test it once on \(512\) disjoint prompts. DPRM improves CLIP-L/14 from \(0.18661\) to \(0.21125\) and the independent CLIP-B/32 check from \(0.23836\) to \(0.24854\); both paired intervals are above zero. Uniform-order and reward-only controls use the same five paths. \Cref{fig:omni_matched_intermediate_main} shows the single changed token-order decision from an identical canvas. The Supplementary Material contains every selected position, hash, score, and output path.

\noindent\textbf{LLaDA-V.}
LLaDA-V \citep{you2025llada} provides the image encoder, discrete diffusion model, prompt, provisional token values, and four-step decoder; DPRM changes only the order of answer positions. Documents \(0\)--\(127\), \(128\)--\(255\), and \(256\)--\(764\) are used for fitting, selection, and a strict \(509\)-document test. Overall accuracy changes by \(+1.57\)pp with an interval crossing zero. Numeric/count accuracy improves by \(+8.97\)pp, with seven wins and no losses. Shuffled values, smaller state tables, a numeric-only controller, and the negative AI2D result show where the method does and does not transfer. In \Cref{fig:rwqa_order_case_main}, confidence commits a plausible number first; DPRM reveals the object word before predicting the count. The Supplementary Material reports the full format breakdown.

\subsection{DPRM on Natural Language}
\label{sec:exp_natural_language}

\begin{table*}[t]
\centering
\caption{PUMA reveal-order audit over all $1{,}319$ GSM8K examples at $2.00$M/unmasking-3. Accuracy improves $30.10\%\to34.50\%$, with $169$ DPRM-only and $111$ confidence-only wins. Number timing is normalized by decode length.}
\label{tab:main_reasoning_trace_case_study}
\scriptsize
\setlength{\tabcolsep}{4pt}
\begin{tabular}{@{}p{0.18\linewidth}cccp{0.31\linewidth}@{}}
\toprule
\textbf{Order signal} & \textbf{Confidence} & \textbf{DPRM} & \textbf{Change} & \textbf{Interpretation} \\
\midrule
Within-step adjacency & $50.1\%$ & $37.4\%$ & $-12.7$pp $[-13.2,-12.2]$ & Less contiguous grouping. \\
Mean within-step span & $5.21$ & $6.17$ & $+0.96$ $[+0.82,+1.10]$ & Wider same-step context. \\
Nonlocal steps (span $\geq 8$) & $15.6\%$ & $19.5\%$ & $+3.9$pp $[+3.4,+4.4]$ & More separated positions revealed together. \\
Steps returning earlier & $21.6\%$ & $27.7\%$ & $+6.1$pp $[+5.7,+6.6]$ & More non-left-to-right backfill. \\
First number revealed & $9.36\%$ & $11.53\%$ & $+2.18$pp $[+1.26,+3.10]$ & More context before arithmetic. \\
\bottomrule
\end{tabular}
\end{table*}

\begin{figure*}[t]
\centering
\includegraphics[width=0.92\linewidth]{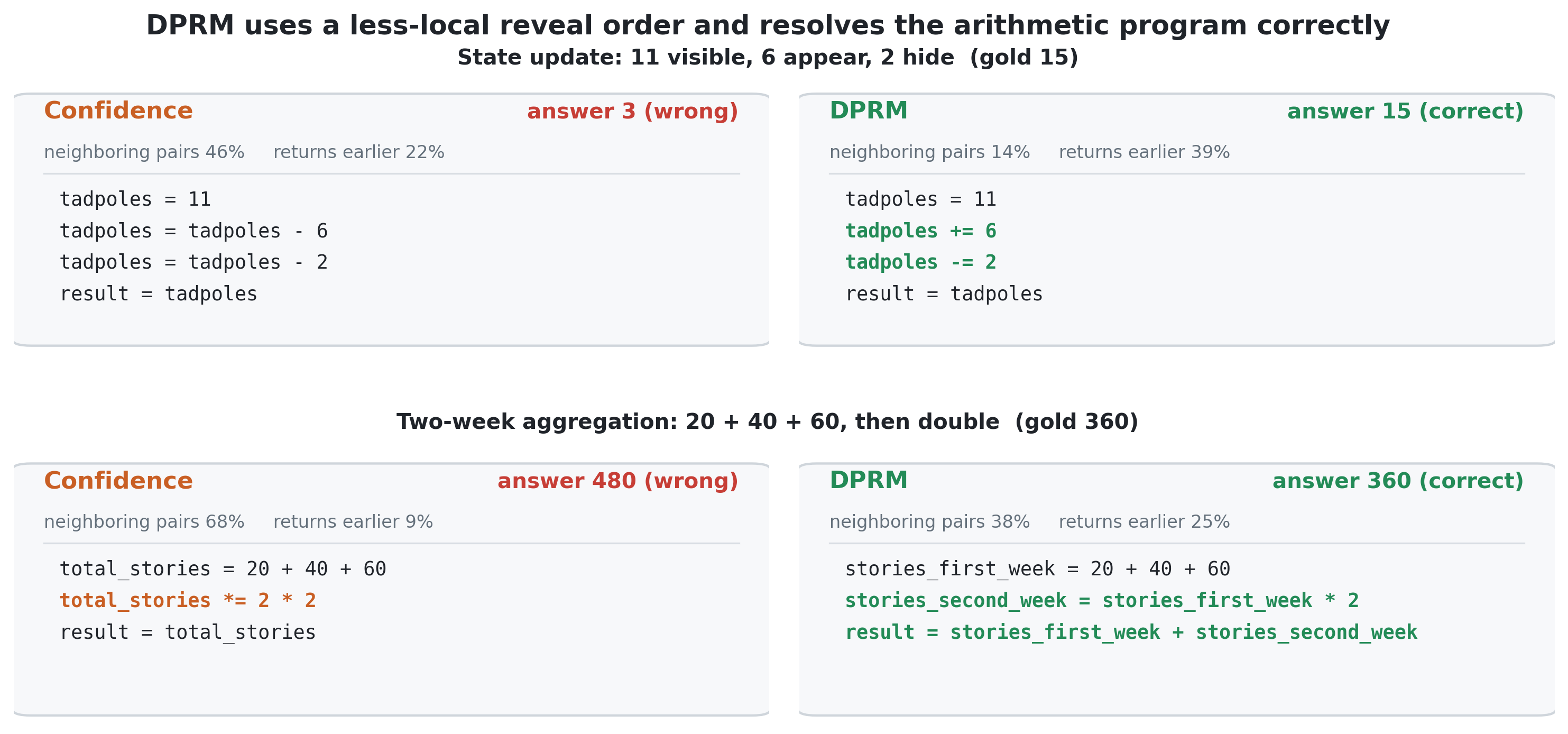}
\caption{Two DPRM-only GSM8K wins. Confidence forms more local reveal sets and produces the wrong state update or aggregation. DPRM returns to earlier positions and produces the correct arithmetic program. Aggregate evidence is in \Cref{tab:main_reasoning_trace_case_study}.}
\label{fig:puma_reveal_order_case_study}
\end{figure*}

\noindent\textbf{PUMA.}
PUMA \citep{kim2026puma} already aligns teacher-forced training with confidence-progressive inference. DPRM preserves its architecture, loss, optimizer, progressive horizon, and reveal width, replacing only confidence top-\(k\) order. At the shared \(2.00\)M EMA endpoint, all \(1{,}319\) GSM8K questions are decoded zero-shot at temperature \(0\) and reveal width \(3\). \Cref{tab:puma_results} gives the \(30.10\%\to34.50\%\) result and its \(5{,}000\)-resample paired interval.

Following DiffusionGemma's intermediate-canvas analysis \citep{engels2026transparentdiffusiongemma}, we compare every paired trajectory after removing EOT-only tails. \Cref{tab:main_reasoning_trace_case_study,fig:puma_reveal_order_case_study} show wider reveal spans, more returns to earlier masks, and later numeric commitment. These traces describe the trained DPRM pipeline; they do not isolate the final bucket table from training. The Supplementary Material gives the metric definitions, paired tests, and all examples on which the two methods disagree.

These are order proxies, not token replacement: a return moves to an earlier still-masked slot.

\noindent\textbf{DMPO.}
Progressive DMPO \citep{zhu2025dmpo} replaces random masks with confidence-aligned masks. DPRM changes only this order during the matched \(5\)k-step training and decoding runs. MATH uses \(500\) problems, Countdown uses \(5{,}120\), and each success matrix contains \(32\) samples per problem. \Cref{tab:main_results} reports the matched comparison. We also vary difficulty, pass@\(K\), \(\beta\), the base order, bucket values, and readiness gates. Gains are largest on hard or OOD questions and at larger \(K\), where confidence is most likely to miss a useful order.

\noindent\textbf{Prism.}
Prism \citep{bai2026prism} uses hierarchical search, self-verification, pruning, and survivor voting; these parts are unchanged. Unlike the training and post-training hosts, Prism begins with each test question and has no earlier trajectory stream from which to estimate verifier-conditioned token-order values. DPRM therefore starts from confidence and updates its table from self-verification scores produced during that question's search. On all \(1{,}319\) GSM8K questions, it improves voted, rank-1, and any-of-four accuracy, while mean NFE rises from \(609\) to \(1{,}071\). To test whether the gain comes only from more computation, we increase the confidence-only budget on a \(250\)-question subset; it still remains below DPRM. \Cref{tab:prism_results,tab:main_inference_compute} report accuracy and cost, and the Supplementary Material gives rank-wise intervals and the full budget sweep.

\subsection{Theory-Facing Diagnostics and Failure Boundaries}
\label{sec:exp_theory_diagnostics}

\begin{figure*}[t]
\centering
\includegraphics[width=\textwidth]{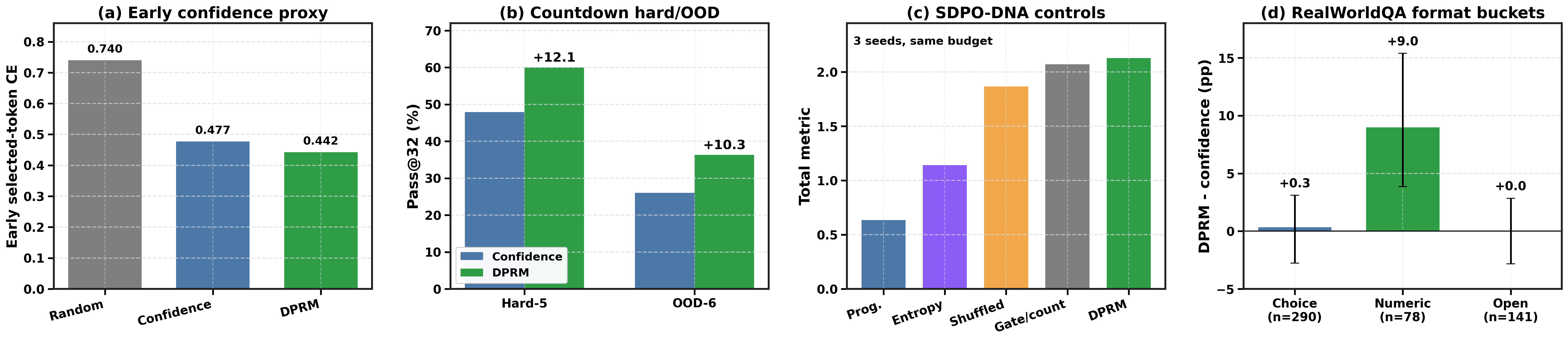}
\caption{Direct checks for the conditional theory. \textbf{(a) Early proxy:} confidence-progressive selection lowers early token CE relative to random order. \textbf{(b) Residual coverage:} DPRM improves hard/OOD Countdown trajectories undersampled by confidence. \textbf{(c) Value association:} SDPO-DNA requires learned utility values, not only uncertainty, counts, or gates. \textbf{(d) Abstraction bias:} format/EOT/position cells isolate the RealWorldQA numeric order class. Complete radius and coverage trajectories are in the Supplementary Material.}
\label{fig:theory_mechanism_summary}
\end{figure*}

The theory is conditional, so final accuracy alone is not a sufficient validation. We test its observable implications and retain cases where an assumption fails.

\begin{figure*}[t]
\centering
\includegraphics[width=0.98\textwidth]{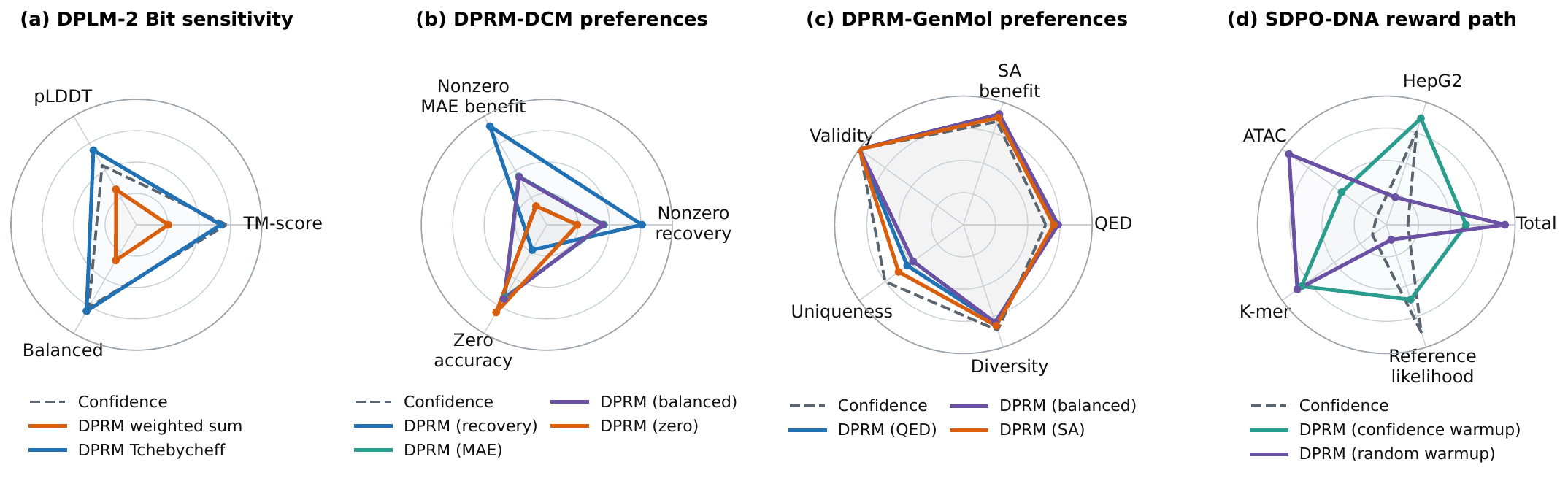}
\caption{Scientific ordering under declared terminal rewards. Higher is better on every axis; the dashed curve is confidence order and solid curves are DPRM endpoints. DPLM-2 Bit is the terminal-invariance control. DCM and GenMol show preference responses; SDPO shows how a HepG2-guided order changes external accessibility and sequence-quality metrics. Native values and intervals are in the Supplement.}
\label{fig:scientific_preference_control}
\end{figure*}
\subsection{DPRM on Scientific Data}
\label{sec:exp_scientific_data}

\begin{table*}
\centering
\caption{Scientific ordering under declared terminal rewards. Higher is better on every listed axis. Color and boldface mark the best value in each column; confidence remains best on several external diversity or likelihood axes, while DPRM controls the rewarded preference. Confidence intervals and the complete preference sweeps are in the Supplementary Material.}
\label{tab:scientific_ordering_results}
\scriptsize
\setlength{\tabcolsep}{2.2pt}
\renewcommand{\arraystretch}{0.92}

\begin{subtable}[t]{0.47\linewidth}
\centering
\caption{DPLM-2 Bit, CoGen-200 development audit.}
\begin{tabular}{@{}lccc@{}}
\toprule
\textbf{Order} & \textbf{TM} & \textbf{pLDDT} & \textbf{Balanced} \\
\midrule
Confidence & \bestcell{0.6303} & 0.8583 & 0.7355 \\
DPRM weighted & 0.5978 & 0.8428 & 0.7097 \\
DPRM Tcheby. & 0.6270 & \bestcell{0.8680} & \bestcell{0.7377} \\
\bottomrule
\end{tabular}
\end{subtable}
\hfill
\begin{subtable}[t]{0.51\linewidth}
\centering
\caption{DCM, Dentate Gyrus reconstruction.}
\begin{tabular}{@{}lccc@{}}
\toprule
\textbf{Order} & \textbf{NZ recovery} & \textbf{MAE benefit} & \textbf{Zero acc.} \\
\midrule
Confidence & 0.0183 & 0.6410 & \bestcell{0.9975} \\
DPRM-recovery & \bestcell{0.0198} & \bestcell{0.6432} & 0.9973 \\
DPRM-MAE & 0.0189 & 0.6418 & \bestcell{0.9975} \\
DPRM-balanced & 0.0188 & 0.6418 & \bestcell{0.9975} \\
DPRM-zero & 0.0182 & 0.6410 & \bestcell{0.9975} \\
\bottomrule
\end{tabular}
\end{subtable}

\vspace{0.35em}
\begin{subtable}[t]{0.49\linewidth}
\centering
\caption{GenMol V2, de novo generation.}
\setlength{\tabcolsep}{1.5pt}
\begin{tabular}{@{}lccccc@{}}
\toprule
\textbf{Order} & \textbf{QED} & \textbf{SA} & \textbf{Valid.} & \textbf{Unique} & \textbf{Diverse} \\
\midrule
Confidence & 0.6392 & 0.8425 & 0.9970 & \bestcell{0.7513} & \bestcell{0.8607} \\
DPRM-QED & 0.7254 & 0.9006 & \bestcell{0.9990} & 0.5395 & 0.7982 \\
DPRM-balanced & \bestcell{0.7350} & \bestcell{0.9032} & 0.9980 & 0.4830 & 0.7988 \\
DPRM-SA & 0.7008 & 0.8748 & 0.9980 & 0.6222 & 0.8242 \\
\bottomrule
\end{tabular}
\end{subtable}
\hfill
\begin{subtable}[t]{0.49\linewidth}
\centering
\caption{SDPO-DNA, GOSAI generation.}
\setlength{\tabcolsep}{1.5pt}
\begin{tabular}{@{}lccccc@{}}
\toprule
\textbf{Order} & \textbf{Total} & \textbf{HepG2} & \textbf{ATAC} & \textbf{K-mer} & \textbf{Log lik.} \\
\midrule
Confidence & 0.384 & 4.138 & 0.168 & 0.553 & \bestcell{-224.31} \\
DPRM-conf. & 1.423 & \bestcell{4.304} & 0.401 & 0.825 & -233.61 \\
DPRM-random & \bestcell{2.119} & 3.337 & \bestcell{0.754} & \bestcell{0.842} & -250.72 \\
\bottomrule
\end{tabular}
\end{subtable}
\end{table*}

\noindent\textbf{1. Early confidence proxy.}
On Countdown traces, confidence-bin/CE correlations range from \(-0.94\) to \(-0.97\). Early selected-token CE is \(0.740\) under random order, \(0.477\) under progressive confidence, and \(0.442\) under DPRM at \(\beta=1\). This supports the local importance-proxy premise behind the confidence-first stage; it does not assert that confidence is globally optimal.
The corresponding CE comparison is visualized in \Cref{fig:theory_mechanism_summary}(a).

\noindent\textbf{2. Sampling error, readiness, and residual coverage.}
The logged empirical-Bernstein radius decreases as bucket counts accumulate, while empty cells revert to confidence. At the coverage-supported SDPO setting \(P=1\), all \(10/10\) buckets and all selected decisions are ready; finer \(P=4/8\) tables lose readiness and utility. Late Countdown selection from confidence bins \(0\)--\(5\) rises from \(3.1\%\) under confidence to \(4.2\%\) under DPRM (\(\beta=1\)); hard/OOD pass@32 then rises \(47.9\to60.0\) and \(26.0\to36.3\), respectively. These are the predicted sampling-radius and residual-family directions.
The outcome-level residual check is \Cref{fig:theory_mechanism_summary}(b); the Supplementary Material reports the radius and coverage trajectories.

\noindent\textbf{3. Learned values and abstraction bias.}
On the three-seed SDPO audit, DPRM total utility is \(2.129\), compared with \(1.866\) after shuffling the learned values, \(2.070\) for gate/count only, and \(1.141\) for entropy order. At a fixed DMPO checkpoint, entropy and random order are also more nonlocal than DPRM but less accurate. Simply reversing confidence is therefore not enough: uncertainty identifies candidate positions, while the terminal-reward estimate determines which uncertain positions are useful. RealWorldQA shows the complementary role of state design: a phase/confidence table transfers negatively (\(0.418\) versus \(0.460\)), whereas adding format/EOT/position cells yields \(+8.97\)pp on numeric/count questions. \Cref{fig:theory_mechanism_summary}(c,d) visualizes the value and state-abstraction checks.

\noindent\textbf{4. Terminal sensitivity boundary.}
DPLM provides a necessary counterexample. Weighted and Tchebycheff controllers change \(97.67\%\) and \(97.68\%\) of structure-order decisions with full selected readiness, yet all \(163\) CAMEO terminal files are identical. Active guidance is therefore insufficient when the host sampler erases the intervention; \Cref{tab:scientific_ordering_results}(a) reports this boundary. The Supplementary Material gives the terminal-file audit.

\noindent\textbf{Common protocol.}
Scientific hosts expose several valid objectives. We declare a preference vector \(\lambda\) before evaluation and use the weighted or smooth-Tchebycheff utility in \cref{eq:main_multiobjective_reward}; set-level properties that cannot score one trajectory remain external axes. \Cref{tab:scientific_ordering_results,fig:scientific_preference_control} therefore report both rewarded response and non-rewarded trade-offs.

\noindent\textbf{Uncertainty and repeated runs.}
We use paired tests when methods share evaluation examples and ordinary bootstrap intervals otherwise. The Supplementary Material section ``Statistical Uncertainty Protocol'' gives the evaluation unit and resampling rule for every host, together with the multi-seed results for stochastic scientific runs. Seed-level samples are analyzed within each seed before results are averaged. DPLM is deterministic at the final-file check, so it is evaluated by exact file comparison rather than repeated decoding.

\noindent\textbf{DPLM-2 Bit: activation without terminal sensitivity.}
The protein host \citep{wang2025dplm2} uses matched \(5\)k-step controllers and all \(163\) CAMEO targets. DPRM changes almost every structure-order decision, yet the \(500\)-step sampler produces exactly the same final files. On the CoGen-200 development test, the selected Tchebycheff setting has a small positive point estimate but its interval crosses zero, so we do not run the planned confirmation. This result shows that a host must be sensitive to order for DPRM to help.

\noindent\textbf{DCM: paired preference control.}
We keep the single-cell denoiser fixed \citep{bhattacharya2026discrete} and decode the same \(293\) held-out cells under recovery, MAE, balanced, and zero-expression preferences. Each preference moves its target metric in the intended direction, with a positive interval and full bucket readiness. Two additional decode seeds reproduce all three directions, for three seeds in total. The change therefore comes from the order and stated preference, not from a different checkpoint.

\noindent\textbf{GenMol V2: rewarded gain with an external trade-off.}
The SAFE tokenizer, denoiser, temperature, and \(1{,}000\)-molecule budget are fixed \citep{lee2025genmol}. Balanced DPRM raises QED by \(0.0956\) and SA benefit by \(0.0607\) over confidence, with positive bootstrap intervals. Uniqueness and diversity fall, so the result is a quality--diversity trade-off rather than an improvement on every metric.

\noindent\textbf{SDPO-DNA: value and bucket ablations.}
All rows use the same pretrained DNA model, two training epochs, \(640\) generated sequences, and evaluation oracles \citep{wang2025finetuning}. Only HepG2 expression is used as the controller reward; ATAC, k-mer correlation, and reference likelihood are held out for evaluation. A single phase (\(P=1\)) gives complete bucket coverage and the best total utility. Finer tables (\(P=4/8\)) leave more buckets unready and perform worse. Across three independently fine-tuned seeds, mean total utility is \(2.129\) for DPRM(random), \(1.389\) for native SDPO, \(1.141\) for entropy order, \(0.779\) for DPRM(confidence), and \(0.634\) for confidence-progressive order. Shuffling the learned values also lowers utility, as shown in \Cref{fig:theory_mechanism_summary}(c). The gain therefore depends on both useful reward values and enough data per bucket.

\FloatBarrier
\raggedbottom
\section{Conclusion, Limitation and Future Work}
\label{sec:conclusion}

Confidence-based decoding is a strong practical heuristic, but it optimizes local certainty rather than terminal quality. DPRM replaces this heuristic with a principled target: tilt the host trajectory distribution by terminal reward, derive its Doob process reward for each candidate position, and approximate that correction with a compact table. Confidence remains the base order and the safe fallback, while phase, position, and task state provide only the additional structure needed for reward transfer. The host denoiser, token values, supervision, and objective remain unchanged.

Across nine hosts, this lightweight correction improves language reasoning, numeric VQA, visual-token generation, and scientific objectives. Entropy and random-order controls show that the improvement is not generic exploration. PUMA and Omni traces instead show DPRM selecting a position whose revealed context improves later generation. DPLM marks the necessary limit: token order cannot help when the host sampler maps different orders to the same final output.

Where reward statistics transfer, DPRM adds only table lookup and scalar reranking. Prism lacks a training history, and Omni's cross-prompt table did not transfer, so these two hosts estimate values online at additional cost. Reducing that cost, extending repeated-run coverage for expensive hosts, and adding blinded human evaluation for visual generation remain important directions.

\begin{table}[!h]
\centering
\small
\caption{Practical deployment rule distilled from the nine-host audit. DPRM is used only when its value estimate is supported and the host is order-sensitive.}
\label{tab:practical_deployment_rule}
\setlength{\tabcolsep}{3pt}
\begin{tabular}{p{0.29\columnwidth}p{0.63\columnwidth}}
\toprule
\textbf{Observed condition} & \textbf{Evidence-based response} \\
\midrule
Supported value cells & \textbf{Apply DPRM:} selected readiness is high, the radius shrinks, and value controls pass. \\
Sparse or aliased cells & \textbf{Fall back to confidence:} counts are low, the radius is large, or a richer-state ablation transfers negatively. \\
Order-insensitive host & \textbf{Do not promote:} decisions change but terminals do not, as in DPLM. \\
Prompt-local values only & \textbf{Report rollout cost:} values cannot transfer safely across states, as in Omni. \\
\bottomrule
\end{tabular}
\end{table}

\ifdefined\DPRMArxivVersion
\section*{Acknowledgements}
DB and HW are supported in part by the Research Grants Council of the Hong Kong Special Administrative Region (Project No. CityU 11206622). WH is supported by JSPS KAKENHI (24K20848) and JST BOOST (JPMJBY24G6). TS was partially supported by JSPS KAKENHI (24K02905) and JST CREST (PMJCR2015). This research is supported by the National Research Foundation, Singapore and the Ministry of Digital Development and Information under the AI Visiting Professorship Programme (award number AIVP-2024-004). Any opinions, findings, conclusions, or recommendations expressed in this material are those of the authors and do not reflect the views of the National Research Foundation, Singapore or the Ministry of Digital Development and Information.
\fi

\bibliographystyle{IEEEtranN}
\bibliography{ref}

\ifdefined\DPRMArxivVersion
\def\DPRMEmbeddedSupplement{1}
\ifdefined\DPRMEmbeddedSupplement
\else
\PassOptionsToPackage{dvipsnames,svgnames,table}{xcolor}
\documentclass[10pt,journal,compsoc,onecolumn]{IEEEtran}

\usepackage[utf8]{inputenc}
\usepackage[T1]{fontenc}
\usepackage{microtype}
\usepackage{graphicx}
\usepackage{subcaption}
\usepackage{booktabs}
\usepackage{amsmath,amssymb,amsfonts,mathtools,amsthm}
\usepackage{bm}
\usepackage{enumerate}
\usepackage{enumitem}
\usepackage{pifont}
\usepackage{placeins}
\usepackage{multirow}
\usepackage[normalem]{ulem}
\usepackage[most]{tcolorbox}
\usepackage{colortbl}
\usepackage{wrapfig}
\usepackage{float}
\usepackage{xurl}
\usepackage{xcolor}
\usepackage{algorithm}
\usepackage{algorithmic}
\usepackage[numbers,sort&compress]{natbib}
\usepackage[hidelinks]{hyperref}
\usepackage[capitalize,noabbrev]{cleveref}

\graphicspath{{figs/}}
\DeclareGraphicsExtensions{.pdf,.png,.jpg,.jpeg}

\providecommand{\theHalgorithm}{\arabic{algorithm}}
\theoremstyle{plain}
\newtheorem{theorem}{Theorem}[section]
\newtheorem{proposition}[theorem]{Proposition}
\newtheorem{lemma}[theorem]{Lemma}
\newtheorem{corollary}[theorem]{Corollary}
\theoremstyle{definition}
\newtheorem{definition}[theorem]{Definition}
\newtheorem{assumption}[theorem]{Assumption}
\theoremstyle{remark}
\newtheorem{remark}[theorem]{Remark}

\crefname{equation}{}{}
\crefname{figure}{Fig.}{Figs.}
\crefname{section}{Sec.}{Secs.}
\crefname{appendix}{App.}{Apps.}
\crefname{table}{Tab.}{Tabs.}
\crefname{theorem}{Thm.}{Thms.}
\crefname{lemma}{Lem.}{Lems.}
\crefname{assumption}{Assump.}{Assumps.}
\crefname{proposition}{Prop.}{Props.}
\crefname{corollary}{Cor.}{Cors.}
\crefname{definition}{Def.}{Defs.}
\crefname{remark}{Rmk.}{Rmks.}
\crefname{algorithm}{Alg.}{Algs.}

\definecolor{myred}{RGB}{190,30,45}
\definecolor{myblue}{RGB}{28,76,170}
\definecolor{mygreen}{RGB}{20,120,60}
\definecolor{myorange}{RGB}{210,120,20}
\definecolor{mygray}{RGB}{70,70,70}
\definecolor{boxborder}{RGB}{40,55,90}
\definecolor{boxbg}{RGB}{248,248,248}
\definecolor{mypurple}{RGB}{120,55,160}
\definecolor{lightblue}{RGB}{232,241,255}
\definecolor{lightgreen}{RGB}{232,247,238}
\definecolor{lightpurple}{RGB}{242,235,250}
\definecolor{LightCyan}{rgb}{0.88,1,1}

\newcommand{\bbE}{\mathbb{E}}
\newcommand{\KL}{\mathrm{KL}}
\newcommand{\Bias}{\operatorname{Bias}}
\newcommand{\rad}{\operatorname{rad}}
\newcommand{\drift}{\operatorname{drift}}
\newcommand{\calK}{\mathcal K}
\newcommand{\calO}{\mathcal O}
\newcommand{\calE}{\mathcal E}
\newcommand{\calR}{\mathcal R}
\newcommand{\tO}{\widetilde O}
\newcommand{\tOm}{\widetilde \Omega}
\newcommand{\AH}[1]{}
\newcommand{\cmark}{\ding{51}}
\newcommand{\xmark}{\ding{55}}
\newcommand{\bestcell}[1]{\cellcolor{LightCyan}\textbf{#1}}

\hyphenation{op-tical net-works semi-conduc-tor}

\title{Supplementary Material for ``DPRM: A Plug-in Token-Ordering Module for Discrete Diffusion Models''}
\author{Dake Bu, Wei Huang, Andi Han, Si Wu, Hau-San Wong, Qingfu Zhang, Taiji Suzuki, and Atsushi Nitanda}

\begin{document}
\maketitle

This document contains the proofs, implementation details, additional diagnostics, and extended experimental results for the paper.
\fi

\ifdefined\DPRMEmbeddedSupplement
\clearpage
\onecolumn
\fi

\appendices

\section{Supplement Roadmap}
\label{app:roadmap}

The supplement contains four items not repeated from the main paper: controller settings, host-level audits, diagnostic trajectories, and full proofs. \Cref{fig:supp_nine_host_evidence_map} is the index; each cell points to evidence retained in the corresponding host section.

\begin{figure}[!htbp]
\centering
\includegraphics[width=0.96\linewidth]{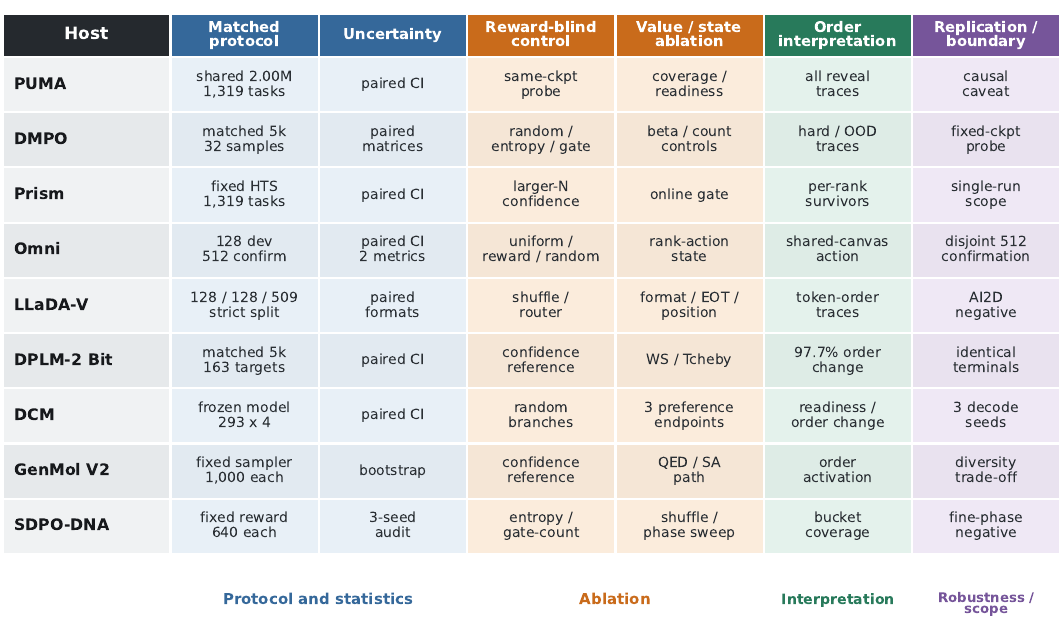}
\caption{Supplement-only evidence map. Column colors group protocol and uncertainty (blue), controls and ablations (orange), order interpretation (green), and robustness or scope (purple).}
\label{fig:supp_nine_host_evidence_map}
\end{figure}

\section{Extended Related Work}\label{sec:add_related}
\paragraph{Ordering and reward tilt.}
Adaptive inference order and PUMA's teacher-forced masking establish confidence as a strong aligned baseline \citep{kim2025train,kim2026puma}. Confidence is nevertheless a heuristic for local certainty, not an objective for terminal quality; it can suppress exploration or commit EOT too early \citep{cai2026confidence,fang2026locally,park2026confidence}. High-entropy tokens can carry important reasoning signal \citep{wang2025entropyminority}, but entropy alone does not identify which uncertain position improves the completion. Decode-time planners address related failures \citep{liu2025think,lee2026lookahead,hong2026wide,israel2026planned}; DPRM instead derives a local process reward from the terminal-reward-tilted trajectory law \citep{doob1957conditional,leonard2014a,chen2021stochastic,bu2026distributionalbiasesposttrainingmarkovian}.

\paragraph{Compact state, training, and search.}
Phase and confidence summarize progress and uncertainty. This choice is consistent with learned unmasking policies that map token confidence, mask status, and time through a small position-aware transformer \citep{jazbec2026unmasking}, as well as analyses of reasoning forks, EOT behavior, and uncertain-token coverage \citep{yue2025rlcapacity,wang2025entropyminority,park2026confidence,ni2026flexibility}. DPRM uses these variables to share only the reward correction; their information loss appears explicitly in the abstraction-bias term. In DMPO and Prism we keep the post-training objective and search scaffold fixed \citep{zhu2025dmpo,bai2026prism}; caching and KV reuse remain orthogonal model-evaluation optimizations \citep{liu2025dllm,ma2025dkv,hu2025accelerating}.

\paragraph{Non-text tokens.}
Masked visual generation exposes an order over codebook positions, while LLaDA-V orders answer tokens under image conditioning \citep{chang2022maskgit,li2026omnidiffusion,you2025llada}. DPLM-2, DCM, GenMol, and SDPO expose the same decision over protein, count-bin, SAFE, and DNA tokens \citep{wang2025dplm2,bhattacharya2026discrete,lee2025genmol,wang2025finetuning}. These studies hold the tokenizer, host model, and evaluator fixed.

\section{Experiment Details and Mechanism Controls}

\subsection{DPRM Utility Oracles and Temperature Settings}
\label{app:dprm_reward_oracles}

DPRM reuses a scalar already available to the host or evaluator; it does not replace the host objective. Every controller uses
\[
g_i(s)=\log \psi_i(s)+\eta_i(s)\,\beta \widehat R_{\phi,b_i},
\qquad
\widehat R_{\phi,b}
=
\frac{1}{\beta}
\log
\frac{\sum_{j\in\mathcal H_{\phi,b}}\exp(\beta R_j)}
{|\mathcal H_{\phi,b}|}.
\]
\Cref{fig:supp_host_configuration_map} shows what is ordered, stored, rewarded, and selected in each host.

\begin{figure}[!htbp]
\centering
\includegraphics[width=0.98\linewidth]{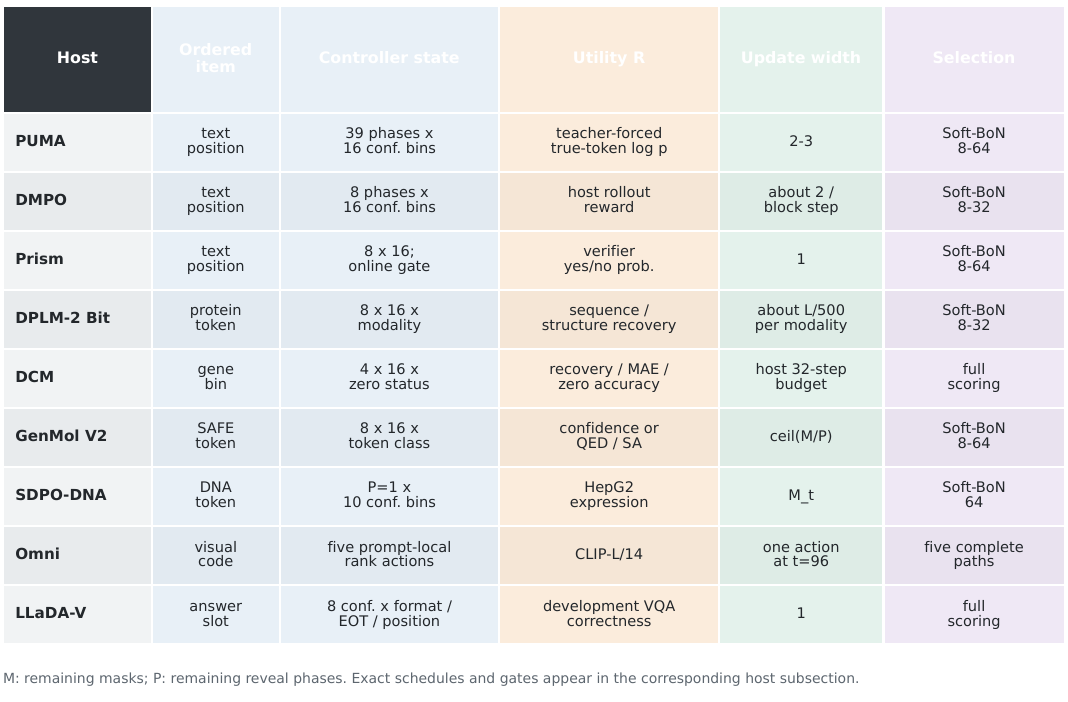}
\caption{Controller specification for all nine hosts. Exact training and evaluation settings appear in the corresponding host subsection.}
\label{fig:supp_host_configuration_map}
\end{figure}

The reward temperature is \(\beta=1\) except for DPLM-2 Bit (\(\beta=8\)). DMPO's DPRM temperature is distinct from its target temperature \(\alpha\). The gate tuples \((T_{\rm warm},T_{\rm switch},N_{\rm ready})\) are
\[
\begin{array}{ll}
\text{PUMA train/decode: }(2\mathrm{k},60\mathrm{k},128)/(0,16,16),
& \text{DMPO, DPLM, GenMol: }(500,2000,128),\\
\text{Prism: }(6,22,64),
& \text{SDPO: }(100,400,64),\quad \text{LLaDA-V: }(0,4,4).
\end{array}
\]
DCM uses readiness \(64\) and development-selected guidance \(2\). Omni has no reusable table: development selects guidance \(8\) with reward scale \(0.03\), then confirmation evaluates five prompt-local token-order choices.

\paragraph{Scientific scalar rewards.}
Each component is oriented as a benefit \(y_j(X)\in[0,1]\). We use a weighted sum or the smooth Tchebycheff reward (\(\mu=\rho=0.05\)):
\begin{align}
R_{\rm WS}(X;\lambda)
&=\sum_{j=1}^{J}\lambda_j y_j(X),\\
R_{\rm STCH}(X;\lambda)
&=1-\mu\log\sum_{j=1}^{J}
\exp\!\left(\frac{\lambda_j(1-y_j(X))}{\mu}\right)
+\rho\sum_{j=1}^{J}\lambda_j y_j(X).
\label{eq:scientific_scalarizations}
\end{align}
Preferences are fixed before evaluation. Set-level metrics are excluded because they cannot score one trajectory.

\begin{table}[!htbp]
\centering
\small
\caption{Scientific sample-level rewards. Letters name preference endpoints used later.}
\label{tab:scientific_scalarization_config}
\setlength{\tabcolsep}{4pt}
\begin{tabular}{p{0.15\linewidth}p{0.31\linewidth}p{0.46\linewidth}}
\toprule
Host & Benefits & Scalarization / preference weights \\
\midrule
DPLM-2 Bit & sequence; structure recovery & WS and STCH: \((.5,.5)\) \\
DCM & nonzero recovery; \(1-\mathrm{MAE}_{\rm nz}/7\); zero accuracy & STCH: R \((.90,.075,.025)\), M \((.05,.90,.05)\), B \((.45,.35,.20)\), Z \((.025,.075,.90)\) \\
GenMol V2 & QED; \((10-\mathrm{SA})/9\) & STCH: Q \((.95,.05)\), B \((.55,.45)\), S \((.05,.95)\) \\
SDPO-DNA & HepG2 expression & single objective; ATAC, k-mer, and likelihood stay external \\
\bottomrule
\end{tabular}
\end{table}

\subsection{Inference Procedure and Computational Cost}
\label{app:dprm_inference_cost}

For masked positions \(M_t\), DPRM replaces an exact state-position value by its bucket projection:
\[
R_t^\star(i;s_t)
=
\frac{1}{\beta}
\log
\mathbb E\!\left[
\exp(\beta R(X))
\mid S_t=s_t,\ I_t=i
\right],
\qquad
(\phi_t,b_i)
=
\bigl(\operatorname{Phase}(t,s_t),\operatorname{Bin}(\psi_i(s_t))\bigr).
\]
The full state still determines \(\psi_i\) and the provisional token; only the reward correction is shared. One denoising step is:
\begin{enumerate}[leftmargin=18pt]
    \item run the host denoiser and compute \(\psi_i(s_t)\);
    \item build the bucket key, read \((N,\widehat R)\), and apply the readiness gate;
    \item rank the full set or shortlist by \(\log\psi_i+\eta_i\beta\widehat R_i\);
    \item reveal \(m_t\) positions with host-provided token values, then update visited cells after reward arrives.
\end{enumerate}
The scalar overhead is
\[
O(|M_t|)+O(N_t\log N_t),
\qquad \text{memory }O(PBA),
\]
for \(P\) phases, \(B\) confidence bins, and \(A\) auxiliary states. This adds no denoiser call in seven hosts. Prism is initialized separately for each test question because its verifier values exist only inside test-time search. Omni evaluates five complete paths because its historical cross-prompt table fails to transfer. Matched observed costs are reported once in \Cref{tab:multimodal_runtime_cost}; the main paper contains the unified deployment table.

\paragraph{Unseen states and fallback.}
An unseen partial sequence can still use a populated coarse cell. An empty cell has \(\widehat R=0\) and gate \(0\), exactly recovering the host confidence order. \Cref{fig:supp_bucket_support_map} separates cell coverage from selected-decision activation; these should not be conflated.

\begin{figure}[!htbp]
\centering
\includegraphics[width=0.98\linewidth]{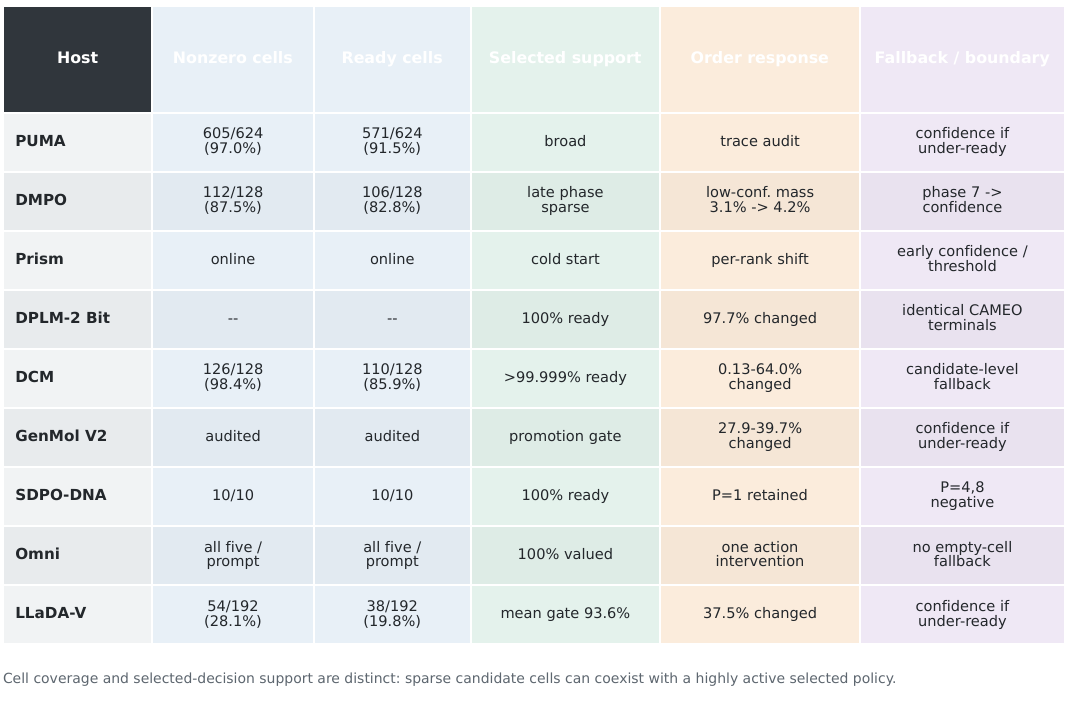}
\caption{Bucket support and fallback. Coverage refers to stored cells; selected support reports the decisions actually used by the controller.}
\label{fig:supp_bucket_support_map}
\end{figure}

The main paper reports all endpoint comparisons once. The remaining subsections therefore focus on additional protocol, uncertainty, mechanism, and negative-result evidence.

\subsection{DPRM-PUMA: Setup, Evaluation, and Interpretation}
\label{app:exp_detail_DPRM_PUMA}

Both runs use TinyGSM (hidden size \(512\), \(14\) layers, \(8\) heads, block \(256\), batch \(32\), learning rate \(3\times10^{-4}\), EMA \(0.9999\)) and the same progressive horizon. DPRM adds \(16\) confidence bins, \(\beta=1\), gate \((2\mathrm{k},60\mathrm{k},128)\), and shortlist \(N_t=\min\{64,\max(8,4m_t)\}\). At the shared \(2.00\)M EMA endpoint, all \(1{,}319\) GSM8K questions are decoded at temperature \(0\) and reveal width \(3\). The saved per-question outputs and reveal events support the paired interval \([1.90,6.90]\)pp and the trace audit below.

\subsection{Reasoning-Behavior Trace Audit}
\label{app:reasoning_behavior_trace_audit}

We audit saved reveal events rather than inferred chain-of-thought. PUMA supplies \(1{,}319\) paired GSM8K traces plus a \(512\)-example same-checkpoint controller probe; DMPO supplies a \(64\)-example fixed-checkpoint probe over confidence, DPRM, entropy, random, gate, and count controls.

EOT-only commitments are removed. For each content-bearing step, we compute adjacency, selected-position span, nonlocal rate (span at least eight), centroid backfill, and first-digit timing. These are order statistics: monotone reveal traces cannot establish literal token replacement or hidden-logit self-correction. The main paper reports the five paired aggregate shifts; the supplement below records category concentration and the causal boundary.

Paired intervals exclude zero for span \([+0.820,+1.102]\), nonlocal rate \([+0.0338,+0.0442]\), adjacency \([-0.1319,-0.1215]\), backfill \([+0.0568,+0.0658]\), and first-number timing \([+0.0130,+0.0313]\).
The same directions hold within the (169) DPRM-only wins, so the aggregate contrast is not solely produced by examples on which both methods fail.

The largest PUMA gains occur on comparison (\(+5.00\)pp), fraction/percent (\(+4.96\)pp), many-number (\(+4.40\)pp), and aggregation (\(+3.77\)pp) problems. This supports broader context before arithmetic commitment, not cleaner left-to-right completion.

\paragraph{Causal boundary.}
On the DPRM-PUMA checkpoint, top-\(k\) and DPRM Soft-BoN are behavior-identical on the \(512\)-example probe. PUMA therefore links the trained pipeline to the trace shift but does not isolate the decode controller. DMPO supplies that isolation: at one fixed checkpoint, DPRM lowers adjacency by \(0.1100\), raises backfill by \(0.0821\), raises nonlocal rate by \(0.1453\), and widens span by \(1.2244\). Entropy and random order are even more nonlocal but less accurate, so neither uncertainty nor nonlocality alone explains the gain.

Buckets share only the reward correction: token identity and partial context still enter through the host logits. The exact theory conditions on the full state and selected position; the bucket approximation does not.

\subsection{DPRM-DMPO: Setup, Pass@K, and Difficulty Analysis}
\label{app:exp_detail_DPRM_DMPO}
\paragraph{Details of DPRM-DMPO}

Training remains teacher-forced: DPRM changes the visible-state order, while WDCE labels remain the ground-truth tokens. The host specialization is:

\begin{algorithm}[!htbp]
\caption{DPRM specialization for DMPO training and decoding}
\label{alg:dprm-dmpo-compact}
\begin{algorithmic}[1]
\REQUIRE DMPO model and rollouts, phases \(K\), bins \(B\), reward \(R\), gate schedule, reveal budget \(m_t\)
\STATE Initialize bucket counts \(N_{\phi,b}\) and log-moment sums \(S_{\phi,b}\)
\FOR{each rollout refresh}
    \STATE Generate DMPO completions; compute host rewards and WDCE weights
    \FOR{each reuse update and sample}
        \STATE Run the denoiser; compute WDCE on masked positions
        \STATE Map each masked position \(i\) to phase \(\phi\), confidence \(p_i\), and bin \(b_i\)
        \STATE Set \(s_i=\log p_i+\eta(\phi,b_i)\widehat R_{\phi,b_i}\)
        \STATE Sample the host-confidence shortlist and select top-\(m_t\) positions under \(s_i\)
        \STATE Teacher-force selected positions and add \(\exp(\beta R)\) to their cells
    \ENDFOR
    \STATE Update the DMPO model using WDCE
\ENDFOR
\STATE At decode time, load the table and replace teacher-forced tokens by host provisional tokens
\end{algorithmic}
\end{algorithm}

\paragraph{Protocol.}
Both methods start from LLaDA-8B-Instruct and use \(5{,}000\) updates, eight reveal phases, \(128\) diffusion steps, temperature \(0.2\), eight rollout generations, and the same reveal budget. Progressive DMPO uses confidence order; DPRM adds \(16\) bins, gate \((500,2000,128)\), and shortlist \(N_t=\min\{32,\max(8,4m_t)\}\). Retained success matrices contain \(32\) samples for each of \(500\) MATH and \(5{,}120\) Countdown problems.

\begin{figure}[H]
\centering
\includegraphics[width=\linewidth]{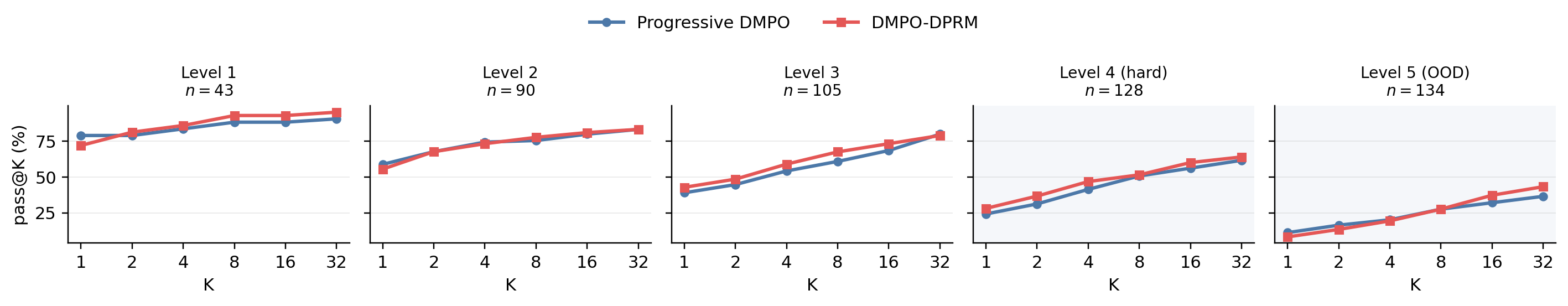}
\caption{MATH pass@$K$ curves by standard difficulty level (1--5). DPRM-DMPO provides its largest gains on the higher-difficulty tiers.}
\label{fig:passk_math_levels}
\end{figure}

Decoding uses \texttt{fast\_dllm}, block length \(32\), generation length \(256\), and the native transfer schedule (initially two tokens per local step). The checkpoint table is loaded with the full gate. Learning rates are \(3\times10^{-6}\) for MATH and \(1\times10^{-6}\) for Countdown; all other task-specific optimizer fields are shared within each pair. The comparison is end to end: it does not separate train-time from decode-time DPRM.

\begin{table}[H]
\centering
\small
\caption{Paired deltas in mean pass@$K$ over \(K\in\{1,2,4,8,16,32\}\), in percentage points. Intervals use \(5{,}000\) bootstrap resamples over shared problems.}
\label{tab:dmpo_bootstrap_deltas}
\begin{tabular}{lc}
\toprule
Task & DMPO-DPRM $-$ Progressive DMPO \\
\midrule
MATH & $+2.10~[-0.33,\ 4.53]$ \\
Countdown & $+1.67~[0.82,\ 2.55]$ \\
\bottomrule
\end{tabular}
\end{table}

The Countdown interval excludes zero; the MATH interval does not. \Cref{fig:passk_math_levels,fig:passk_countdown_levels} locates the larger gaps in MATH levels 4--5 and the five-/six-operand Countdown subsets. Hard Countdown pass@32 rises from (47.9) to (60.0); OOD gains are likewise concentrated at larger (K).

\begin{figure}[H]
\centering
\includegraphics[width=\linewidth]{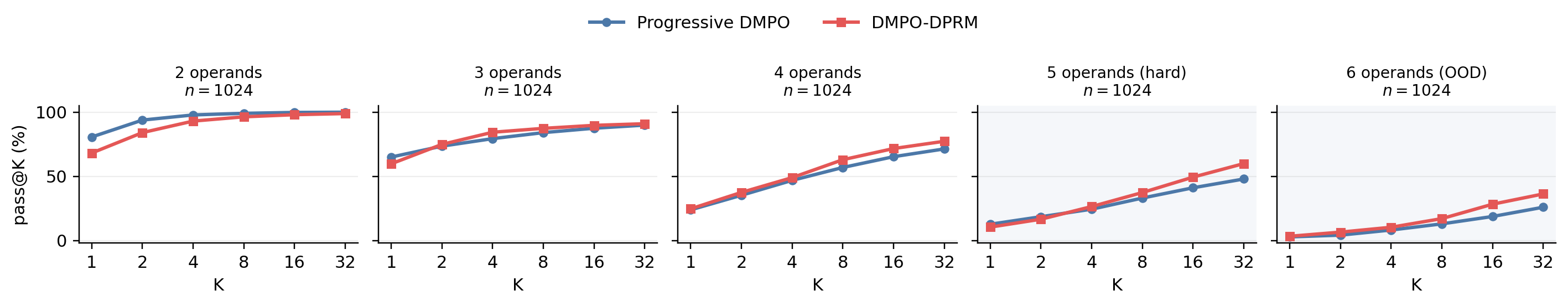}
\caption{Countdown pass@$K$ curves by number of target operands (2--6). The largest DPRM gaps occur on the five-operand hard and six-operand OOD subsets.}
\label{fig:passk_countdown_levels}
\end{figure}

These results support a narrower claim than uniform dominance: reward-guided order helps most where confidence under-covers successful trajectories.

\subsection{DPRM-Prism: Test-Time Search and Cost-Quality Trade-off}
\label{app:exp_detail_DPRM_Prism}

\paragraph{Protocol.}
Both rows use LLaDA-2.0-mini and the same Prism HTS/SVF scaffold \citep{bie2025llada2,bai2026prism}: branching \(N=16\), four final survivors, pruning interval \(3\), block and decode length \(32\), generation length \(256\), and temperature \(0.7\). DPRM changes confidence top-\(k\) to Soft-BoN with \(8\times16\) cells, gate \((6,22,64)\), and shortlist \(8\)--\(64\). The full \(1{,}319\)-question comparison uses one run per method, so intervals measure dataset rather than run-to-run variation.

\begin{figure}[h]
\centering
\includegraphics[width=0.72\linewidth]{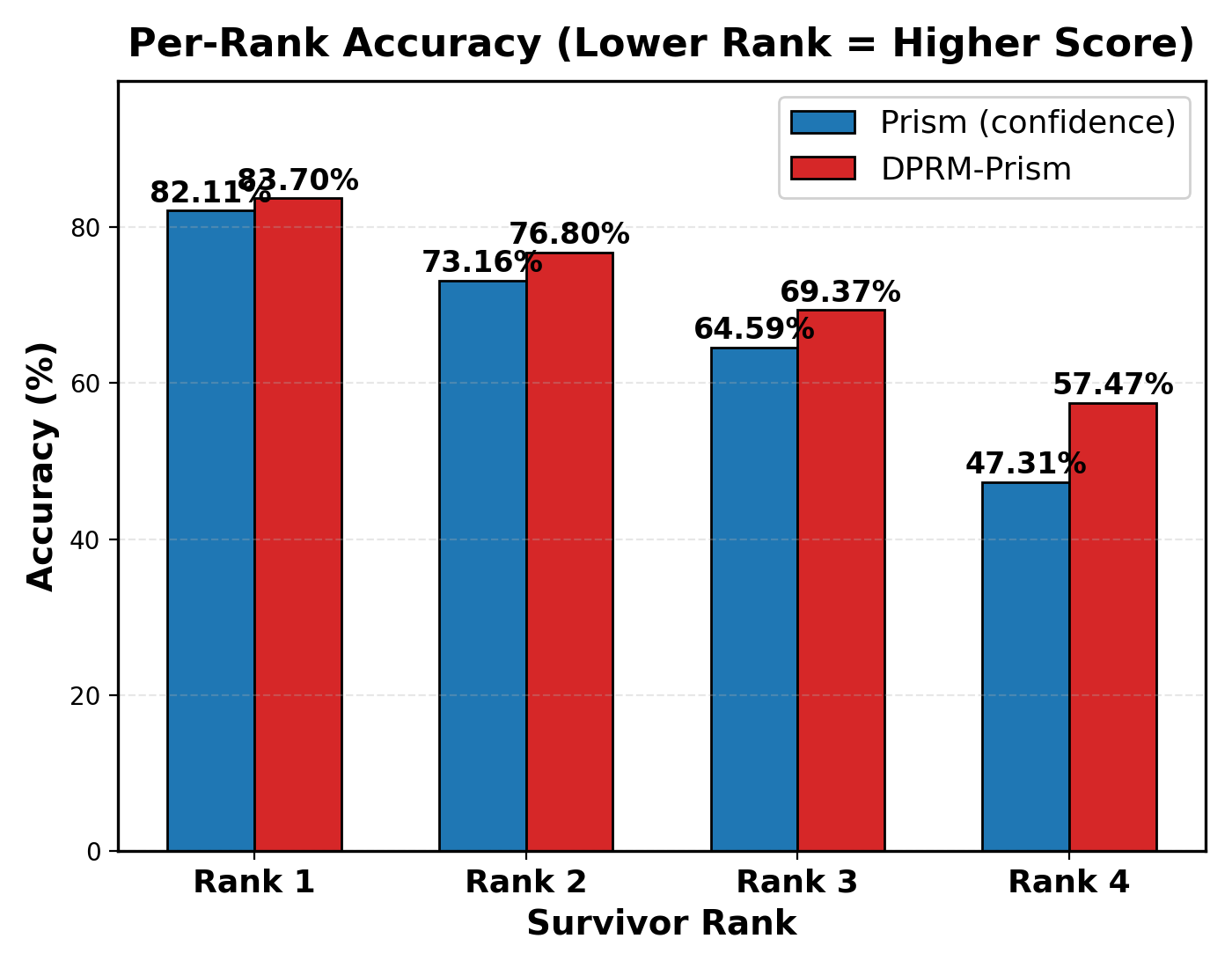}
\caption{GSM8K accuracy by survivor rank. The largest gap is at rank 4 (\(+10.2\)pp).}
\label{fig:prism_per_rank}
\end{figure}

\begin{figure}[h]
\centering
\includegraphics[width=\linewidth]{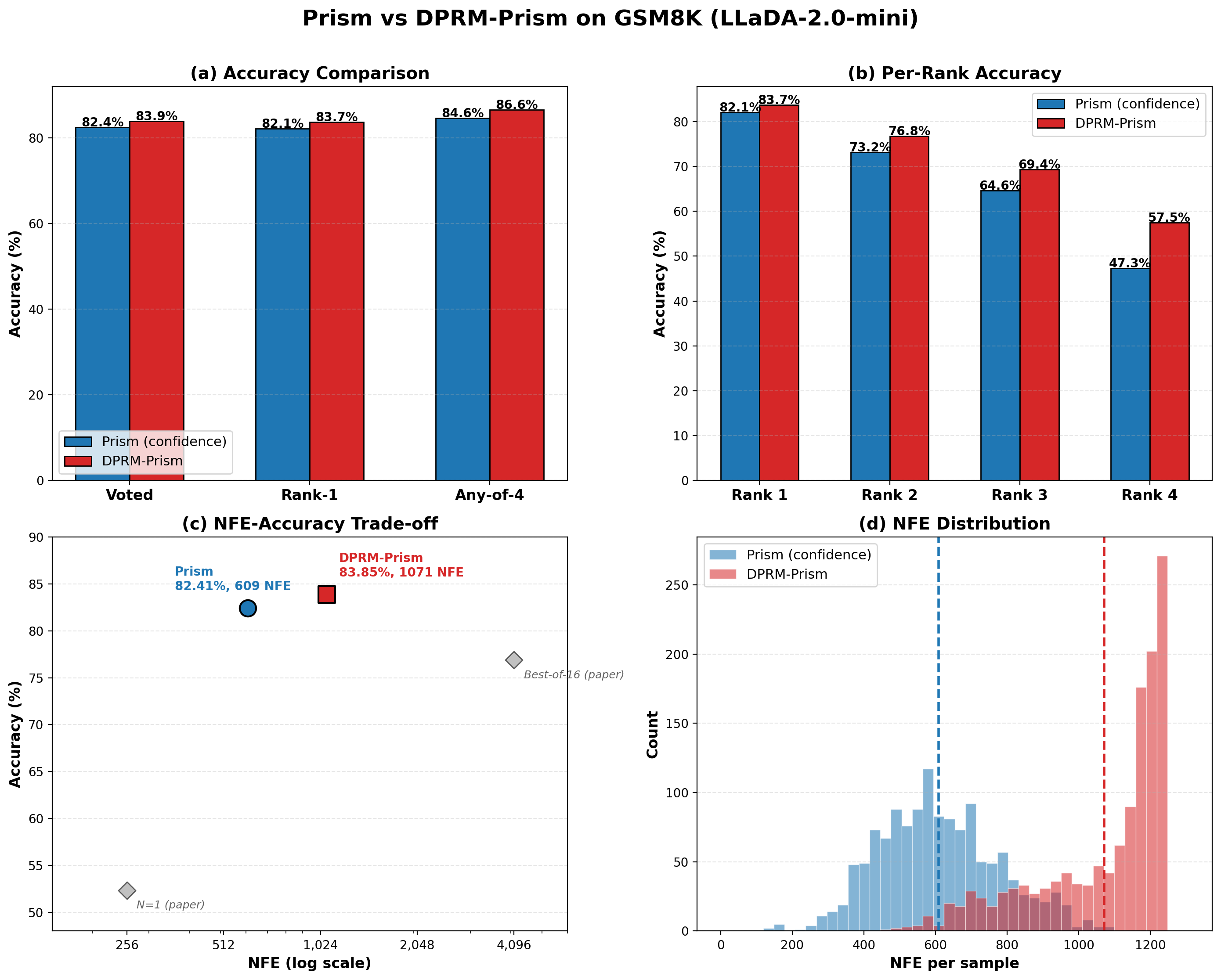}
\caption{Prism quality and cost. Left: voted accuracy versus mean NFE. Right: per-question NFE distributions.}
\label{fig:prism_nfe}
\end{figure}

Mean NFE rises \(609\to1{,}071\) while each SVF call remains \(29\) NFE. The \(250\)-question sweep below raises the confidence-only budget; it improves, but remains below DPRM on rank-1 and any-of-4 accuracy.

\begin{table}[H]
\centering
\small
\caption{Prism cost control on \(250\) GSM8K questions. Larger-\(N\) confidence spends more compute without matching DPRM accuracy.}
\label{tab:prism_compact_cost_quality}
\begin{tabular}{lccc}
\toprule
Policy & Rank-1 & Any-of-4 & Mean NFE \\
\midrule
Confidence Prism & 0.828 & 0.856 & 610.6 \\
Confidence Prism, larger \(N\) & 0.844 & 0.872 & 745.2 \\
DPRM, warmup \(0.2T\) & \bestcell{0.860} & \bestcell{0.896} & 1061.1 \\
\bottomrule
\end{tabular}
\end{table}

Prism is therefore a quality--cost result, not a compute-free gain. The reason is structural rather than a missing implementation optimization: Prism is applied only at test time, and its SVF reward is observed only after a candidate trajectory has been evaluated. The DPRM controller is initialized afresh for each question, starts from confidence, and fills its cells from that question's verified search trajectories. There is no pretraining or post-training stream from which to preload the table. The larger-\(N\) confidence control in \Cref{tab:prism_compact_cost_quality} shows that the gain is not reproduced by spending a smaller amount of extra NFE on confidence alone.

\subsection{Protein Ordering with DPLM-2 Bit}
\label{app:exp_detail_protein}
\subsubsection{DPRM-DPLM: Setup and Protein Metrics}
\label{app:exp_detail_DPRM_DPLM}

The comparison uses one confidence reference and two \(5\)k-step DPRM controllers with separate sequence and structure rewards. All share the model, loss, data, optimizer, reveal budget, \(8\times16\) cells, and gate \((500,2000,128)\). Forward folding uses all \(163\) CAMEO 2022 targets. The predeclared CoGen-200 criterion is
\[
U_{\mathrm{bal}}=\sqrt{\mathrm{TM}\cdot(\mathrm{pLDDT}/100)}.
\]
A variant opens the \(50\)-sample, five-length confirmation only if the \(95\%\) lower bound of its gain is positive and neither component falls by more than \(0.02\).

\begin{table}[H]
\centering
\small
\caption{DPLM terminal-sensitivity audit. CoGen-200 is the predeclared development gate; balanced deltas use paired bootstrap intervals.}
\label{tab:dplm_terminal_audit}
\begin{tabular}{lcccccc}
\toprule
Method & CAMEO RMSD & CAMEO TM & CoGen TM & Norm. pLDDT & Balanced & Balanced delta \\
\midrule
Progressive-DPLM & 35.4667 & 0.3066 & 0.6303 & 0.8583 & 0.7355 & -- \\
DPRM weighted & 35.4667 & 0.3066 & 0.5978 & 0.8428 & 0.7097 & $-0.0258~[-0.1025,0.0166]$ \\
DPRM Tcheby. & 35.4667 & 0.3066 & 0.6270 & 0.8680 & 0.7377 & $+0.0022~[-0.0043,0.0091]$ \\
\bottomrule
\end{tabular}
\end{table}

Both controllers are fully active and change \(97.7\%\) of structure-order decisions, yet all \(163\) CAMEO terminal files are identical. Tchebycheff has a small positive CoGen point estimate, but its interval crosses zero; weighted sum is worse. Neither passes the gate, so confirmation is not opened. Earlier runs with different controller designs are excluded from this result and are not treated as replications of the selected configuration.

\subsection{Multimodal Scope and Shared Controls}
\label{app:multimodal_controls}

The multimodal extension uses two hosts because they expose two different order targets.
Omni-Diffusion\footnote{\url{https://github.com/VITA-MLLM/Omni-Diffusion}} orders visual codebook-token positions during text-to-image generation.
LLaDA-V \citep{you2025llada} orders generated answer-token positions under image conditioning.

Both hosts leave the token-value kernel fixed. Omni estimates prompt-local confidence-rank position values online; LLaDA-V loads a frozen confidence/answer-structure table. The main paper reports their endpoint metrics. This section adds split construction, token-order controls, order traces, and the negative AI2D diagnostic.

\subsection{DPRM-Omni: Visual-Token Ordering}
\label{app:exp_detail_DPRM_Omni}

Omni orders visual-code positions after applying the host's top-\(p\), repetition, and position penalties. Its native score is negative entropy,
\[
c_i(s_t)=-H[p_\theta(\cdot\mid s_t,i)]
\]
and the decoder commits its maximum. The 260-token template contains 256 visual codes and four fixed format tokens. Every policy keeps the provisional argmax token and changes only its position.

We first attempted to reuse token-order values learned from earlier prompts. Prompt-grouped five-fold validation selected an \(8\)-phase, \(4\)-confidence-bin table, but its RMSE was \(0.03562\), worse than the zero predictor's \(0.03441\), and its held-out Pearson and Spearman correlations were \(-0.067\) and \(-0.022\). The next-best tables showed the same near-zero or negative correlation. On a separate \(96\)-prompt check, the best surviving configuration changed only eight outputs and produced a mean CLIP-L/14 difference of \(+0.00018\), with paired interval \([-0.00065,+0.00106]\). Thus phase and confidence were not sufficient to transfer the value of a visual position across unrelated canvases.

The reported controller avoids this failed transfer. At step \(96\), five positions are fixed: the native confidence choice and confidence-rank quantiles \(0.70,0.85,0.90,0.95\). Each starts from the same current canvas and receives the same deterministic continuation. The deployed score is
\[
g(i;s_{96})=c_i(s_{96})+
\gamma\frac{C_{\mathrm{L/14}}(X_T^i)-C_{\mathrm{L/14}}(X_T^{i_0})}{0.03}.
\]
CLIP-L/14 supplies the reward; CLIP-B/32 is held out from selection. Uniform-order and reward-only controls reuse the five paths. Rank, step, and reward scale are fixed before \(128\) development prompts; \(\gamma=8\) is then frozen for \(512\) disjoint confirmation prompts. The release stores shared-canvas hashes, selected coordinates, scores, and paths for all prompts.

The beach and Owl Family trajectories already appear in the main paper and are not repeated here. \Cref{fig:omni_curated_four_policy} instead gives independent four-policy examples for visual inspection; it does not enter controller selection or aggregate evaluation.

\begin{figure}[H]
\centering
\includegraphics[width=0.98\linewidth]{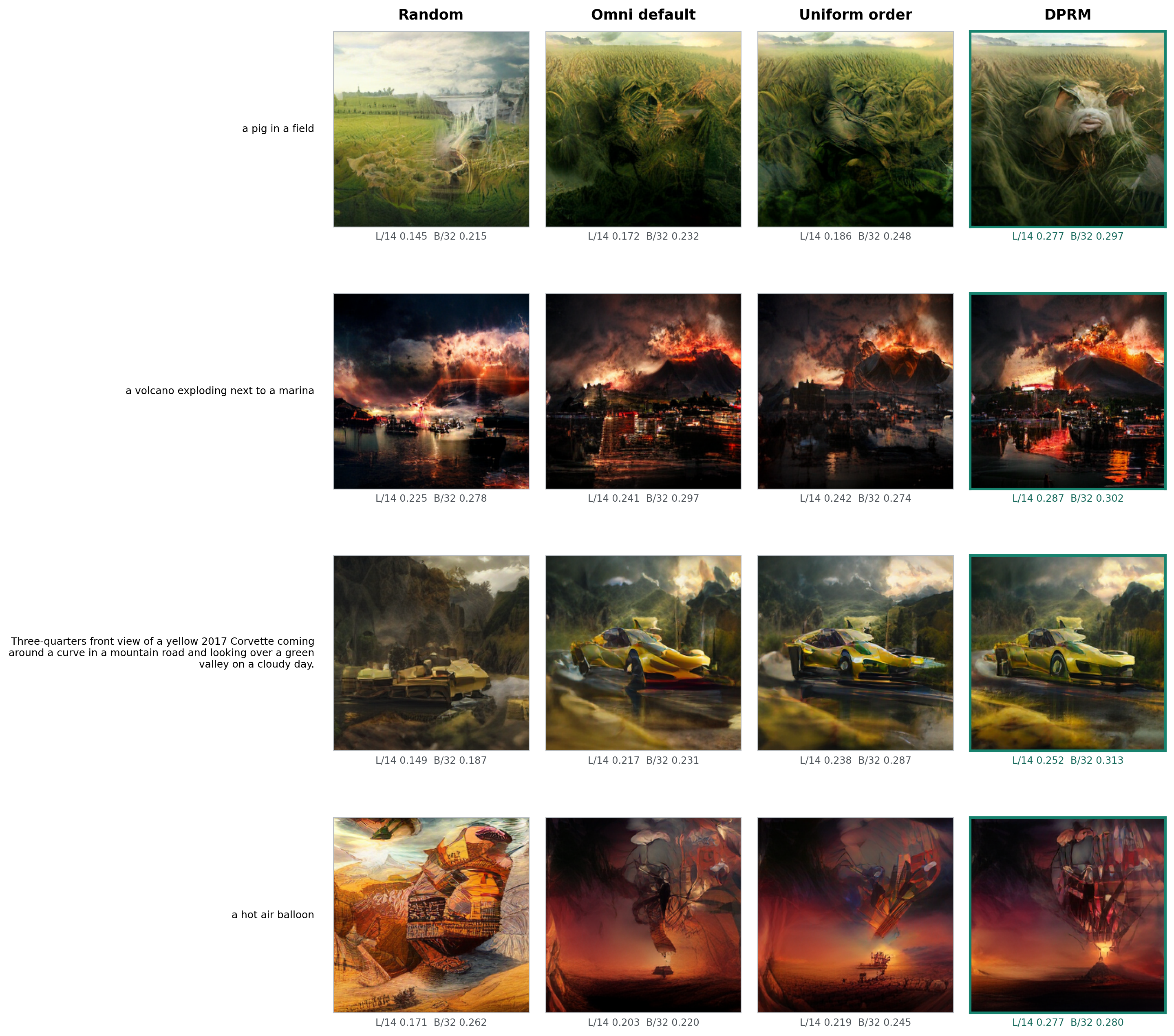}
\caption{Four-policy Omni-Diffusion comparisons from the frozen confirmation split. Random order, confidence, the uniform-order control, and DPRM use the same prompts and initial noise. These examples are for visual inspection only; controller selection uses the full confirmation set.}
\label{fig:omni_curated_four_policy}
\end{figure}

\subsection{DPRM-LLaDA-V: Image-Conditioned Text Ordering}
\label{app:exp_detail_DPRM_LLaDAV}

LLaDA-V orders four generated answer slots under image conditioning; provisional token values and target-normalized evaluation remain fixed. Because RealWorldQA mixes choice, count, and open answers, the table uses one phase, eight confidence bins, and prompt format \(\times\) candidate-EOT \(\times\) relative position (\(24\) auxiliary cells). Documents \(0\)--\(127\) fit the table, \(128\)--\(255\) select among six settings, and \(256\)--\(764\) form the strict test. The selected controller changes \(37.5\%\) of reveal orders, with mean gate \(0.936\). A symmetric zero-count normalization affects four responses; without it the overall direction remains positive (\(+1.18\)pp).

\begin{figure}[!htbp]
\centering
\includegraphics[width=0.98\linewidth]{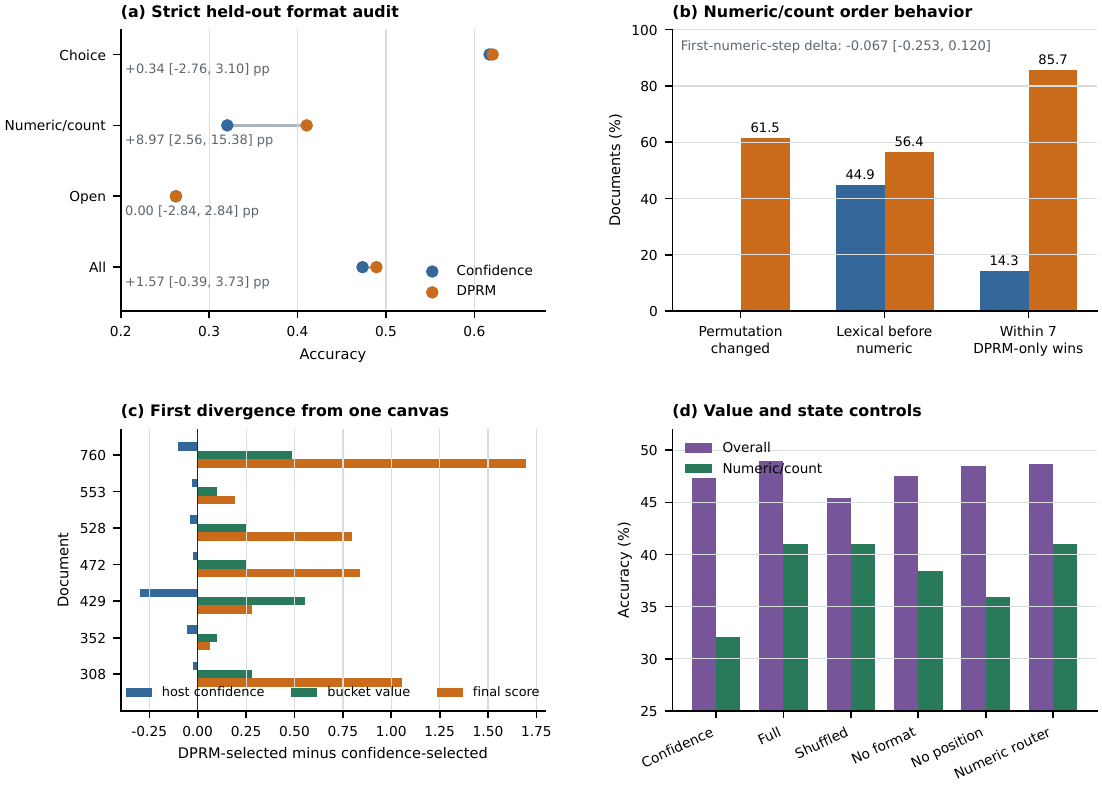}
\caption{RealWorldQA mechanism audit: format effects, numeric-token order, same-canvas score reversal, and state/value controls.}
\label{fig:supp_lladav_mechanism_audit}
\end{figure}

\paragraph{Mechanism readout.}
\Cref{fig:supp_lladav_mechanism_audit}(a) localizes the positive interval to numeric/count questions: all seven discordant outcomes favor DPRM. Panel (b) shows that \(48/78\) reveal permutations change and that six of seven DPRM-only wins expose lexical context before the first number, versus one under confidence. The mean first-number timing remains near zero, so the controller does not simply delay every number.

Panel (c) isolates the first divergence from an identical canvas. DPRM chooses the lower-confidence position in all seven wins (mean confidence difference \(-0.083\)); its mean bucket-value advantage \(+0.291\) reverses the final score by \(+0.702\). Panel (d) shows that shuffling value-to-cell assignments damages overall accuracy, while removing format or position weakens the numeric point estimate. These controls support a covered residual order class, not a uniform RealWorldQA improvement.

\paragraph{Strict held-out visual examples.}
\Cref{fig:lladav_realworldqa_gallery_1,fig:lladav_realworldqa_gallery_2} shows all seven DPRM-only wins in the prompt-defined numeric/count class.
No DPRM losses occur in this class.
Each row pairs the official RealWorldQA image with the matched confidence-order answer, DPRM answer, and target; the examples are the same records used by the paired audit above.

\begin{figure}[p]
\centering
\includegraphics[width=0.84\linewidth]{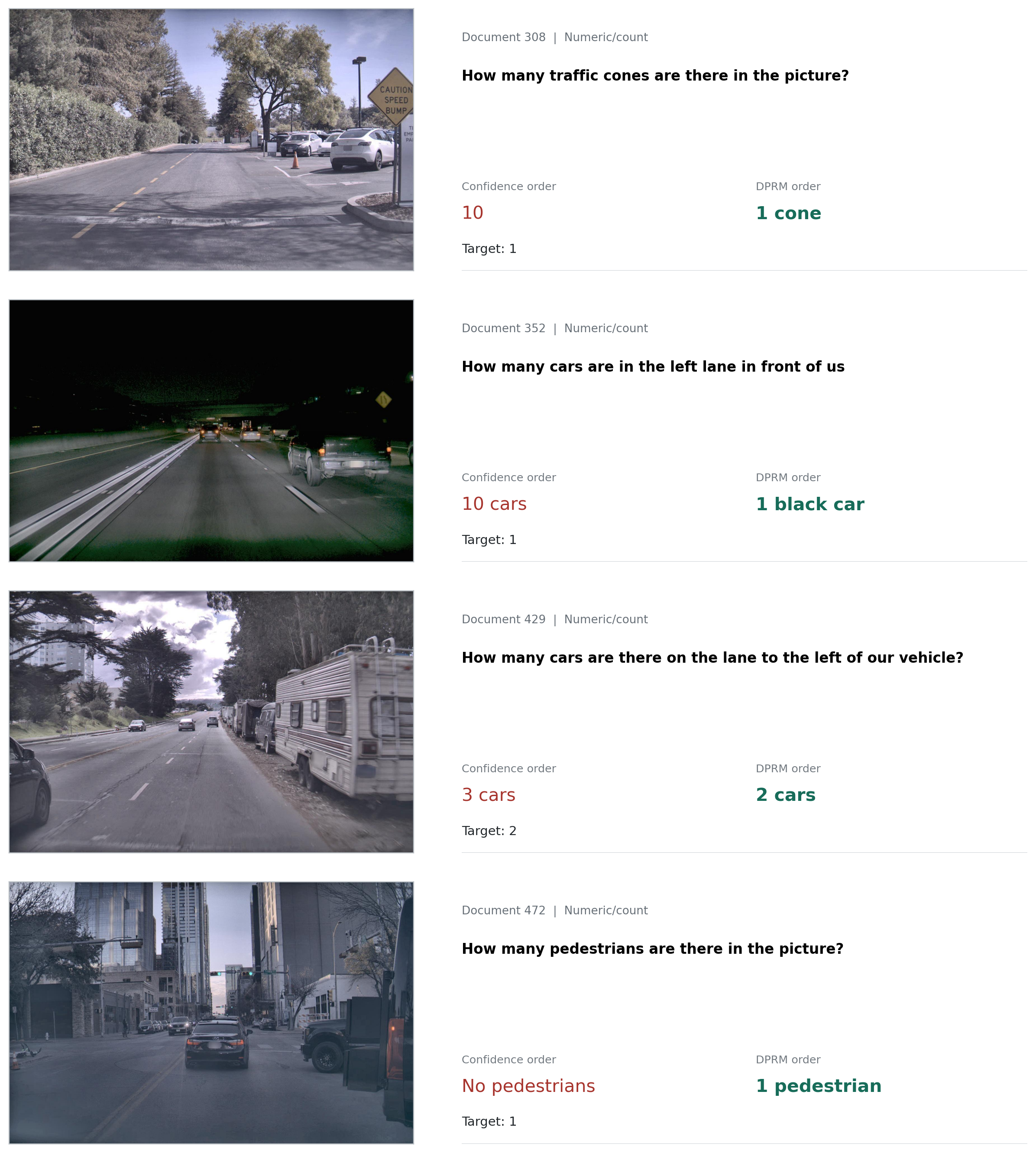}
\caption{RealWorldQA numeric/count wins (1/2). \textbf{DPRM exposes lexical context before committing the count.}}
\label{fig:lladav_realworldqa_gallery_1}
\end{figure}

\clearpage
\begin{figure}[p]
\centering
\includegraphics[width=0.94\linewidth]{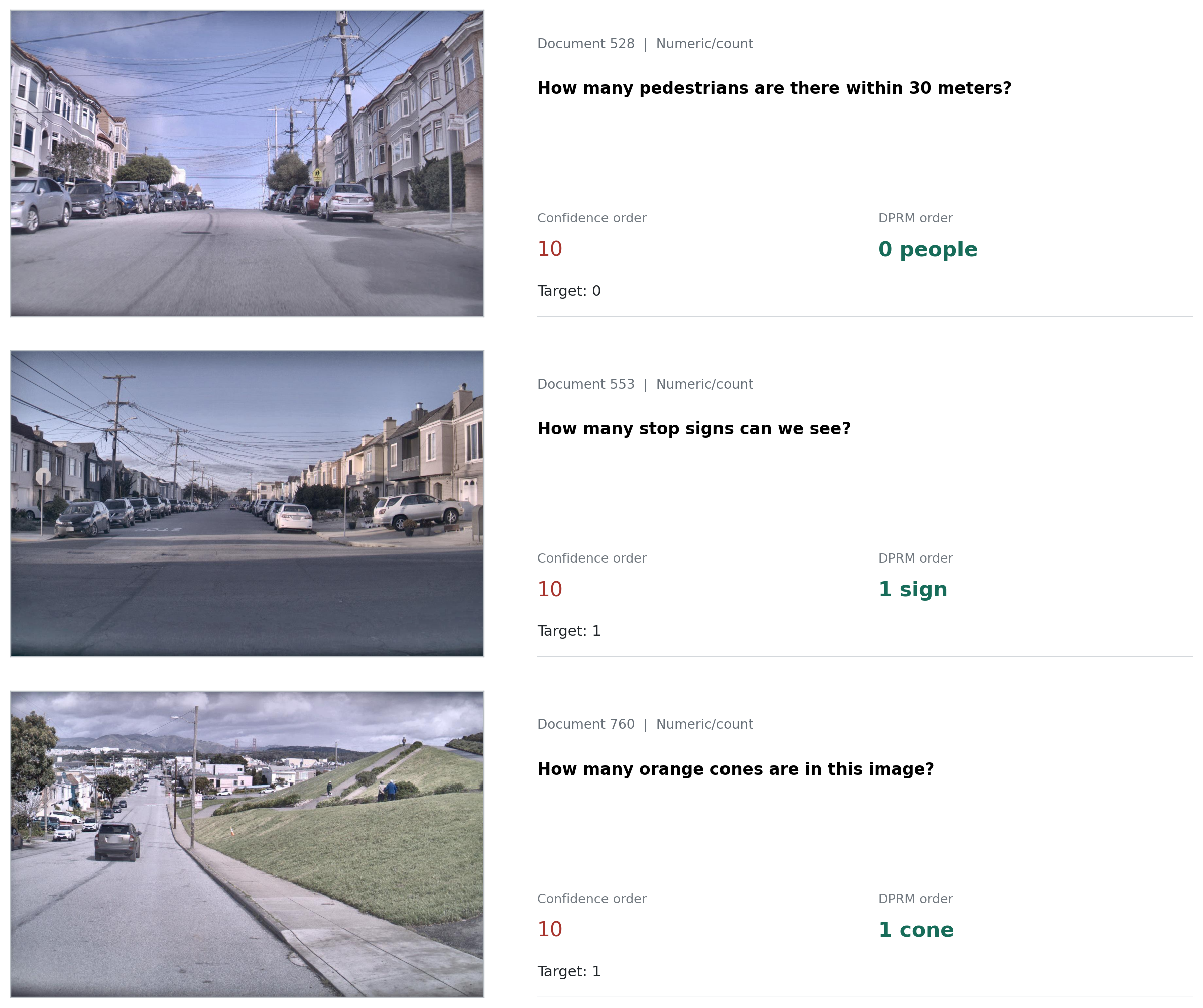}
\caption{RealWorldQA numeric/count wins (2/2). Together the two pages show all seven DPRM-only wins and no loss in this prompt-defined class.}
\label{fig:lladav_realworldqa_gallery_2}
\end{figure}
\clearpage

\paragraph{Preregistered AI2D diagnostic.}
We separately use AI2D documents (0)--(127) to fit tables, (128)--(255) to select among phase counts ({1,4}) and guidance scales ({1,2,4,8}), and (256)--(499) for one independent confirmation. Development selects one phase, eight confidence bins, candidate-EOT state, four relative-position cells, guidance (8), readiness (4), and gate ((0,4)). The controller is active on confirmation, changing (65.2

\subsection{DPRM-DCM: Single-Cell Gene-Token Ordering}
\label{app:exp_detail_DPRM_DCM}

We use the public Dentate Gyrus data from DCM \citep{bhattacharya2026discrete}: \(2{,}930\) cells, the top \(5{,}000\) variable genes, log transform, quantile-binned nonzero expression, and a fixed \(90/10\) split with \(293\) validation cells.

\paragraph{Controller and protocol.}
One SEDD-style Progressive-DCM checkpoint is frozen. On \(256\) training cells, shared states at steps \(0,8,16,24\) branch to confidence, predicted-nonzero, predicted-zero, and random position choices; every branch then uses the same confidence decoder. A table keyed by four phases, \(16\) confidence bins, and provisional zero/nonzero status stores the terminal vector in \Cref{tab:scientific_scalarization_config}. Guidance is selected from \(\{0.5,1,2,4\}\) on another \(96\) training cells; scale \(2\) passes the written utility and order-change gate. Validation then decodes all cells from all-mask for \(32\) steps, with four samples per cell and \(5{,}000\) paired-bootstrap resamples. The table is read-only on validation.
The data split uses seed \(42\); controller development uses \(20260812\). The formal decode and two independent replication decodes use \(20260812\), \(20260815\), and \(20260816\), respectively; each retains its own paired bootstrap.

\paragraph{Results.}
All three declared directions have positive \(95\%\) intervals: recovery focus improves nonzero recovery by \(0.00155\), MAE focus reduces nonzero MAE by \(0.00575\), and zero focus improves zero accuracy by \(0.000173\), each against the opposite endpoint. Selected-decision readiness is \(1.000\). All three decode seeds reproduce every direction; \Cref{tab:dcm_preference_replications} and the native-value table report the complete values.

\IfFileExists{artifacts/dcm_preference_replications.tex}{%
\begin{table}[H]
\centering\small
\caption{DCM preference response across decode seeds. Rows use separate paired cell bootstraps; positive follows the requested preference.}
\label{tab:dcm_preference_replications}
\begin{tabular}{lccc}
\toprule
Decode seed & Recovery response & MAE reduction & Zero response \\
\midrule
20260812 & $+0.00155~[+0.00141,+0.00168]$ & $+0.00575~[+0.00536,+0.00613]$ & $+0.000172~[+0.000140,+0.000205]$ \\
20260815 & $+0.00154~[+0.00141,+0.00167]$ & $+0.00560~[+0.00521,+0.00598]$ & $+0.000161~[+0.000129,+0.000192]$ \\
20260816 & $+0.00156~[+0.00142,+0.00170]$ & $+0.00557~[+0.00518,+0.00595]$ & $+0.000174~[+0.000143,+0.000208]$ \\
\bottomrule
\end{tabular}
\end{table}

}{}

\subsection{DPRM-GenMol: Molecular SAFE-Token Ordering}
\label{app:exp_detail_DPRM_GenMol}

All runs use GenMol V2 with bracket-SAFE \citep{lee2025genmol}. The checkpoint, tokenizer, denoising loss, sampler temperature, and RDKit metrics are fixed.

\paragraph{Controller and protocol.}
The rewards and three principal preferences are in \Cref{tab:scientific_scalarization_config}; weighted-sum and intermediate \((.80,.20)/(.20,.80)\) points are path controls. Each controller uses \(5{,}000\) steps, random warmup, eight phases, \(16\) confidence bins, \(\beta=1\), gate \((500,2000,128)\), and shortlist \(N_t=\min\{64,\max(8,4m_t)\}\). An auxiliary key groups provisional SAFE tokens into eight fixed chemistry/syntax classes. Under-ready cells fall back to confidence.

\paragraph{Evaluation protocol.}
Each checkpoint generates \(1{,}000\) molecules at temperature \(1.0\), randomness \(0.3\), and minimum added length \(60\). We use \(5{,}000\) independent-bootstrap resamples because generated molecules are not paired. QED and SA benefit test the declared reward; validity, canonical uniqueness, and Morgan diversity expose external trade-offs and are never folded into the terminal score.

\IfFileExists{artifacts/scientific_preference_native_values.tex}{%
\begin{figure}[H]
\centering
\includegraphics[width=0.98\linewidth]{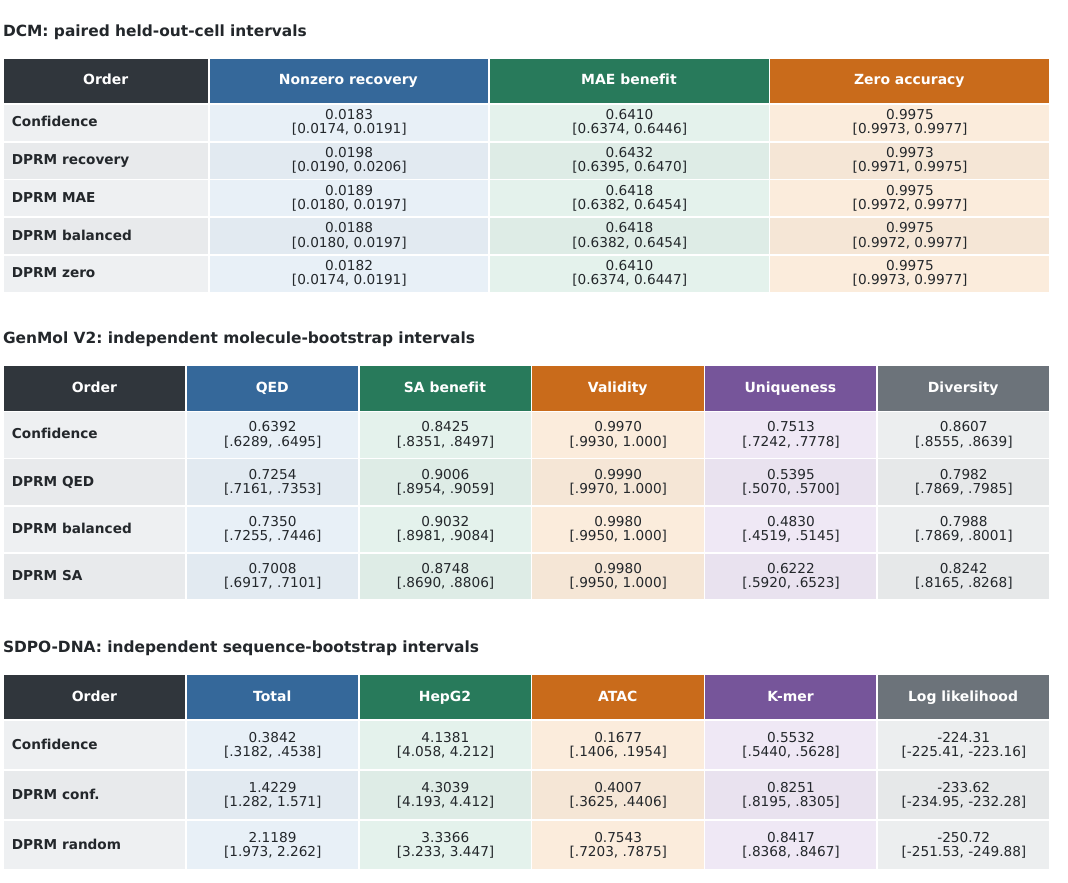}
\caption{Scientific uncertainty audit. DCM uses paired held-out-cell intervals; GenMol and SDPO use method-specific sample intervals.}
\label{fig:scientific_preference_native_values}
\end{figure}

}{%
\PackageWarning{DPRM}{Scientific preference native-value table is not yet generated}%
}

\paragraph{Results.}
Balanced DPRM improves QED by \(0.0956\) and SA benefit by \(0.0607\) over confidence, with positive intervals. It lowers uniqueness and diversity, so we report a quality--diversity Pareto path rather than uniform dominance or bidirectional QED--SA control. The three principal controllers alter \(27.9\%/39.7\%/31.8\%\) of reveal rows; the native-value table gives all metrics, intervals, and endpoints.

\subsection{DPRM-SDPO: DNA Reward-Optimized Ordering}
\label{app:exp_detail_DPRM_SDPO_DNA}

This subsection documents the DNA reward-optimization comparison reported in the main-paper SDPO-DNA table. SDPO \citep{wang2025finetuning} fine-tunes discrete diffusion models with reward optimization and includes DNA regulatory-sequence design experiments. Our goal here is narrower than a full SDPO reproduction: we test whether the DPRM ordering module can be inserted into the SDPO code path while keeping the pretrained diffusion checkpoint, reward optimizer, oracle suite, and sequence representation fixed.

\paragraph{Model and variants.}
All runs start from the same public pretrained DNA diffusion checkpoint and use the same SDPO fine-tuning configuration: \(K=2000\), two epochs, learning rate \(10^{-5}\), and SDPO temperature \(0.5\). We compare the original SDPO ordering, confidence-progressive ordering, DPRM-SDPO, and DPRM(random)-SDPO. To measure the effect of the bucket abstraction itself, we sweep the number of phase buckets \(P\in\{1,4,8\}\) while keeping \(10\) confidence bins, \(\beta=1.0\), \((T_{\mathrm{warm}},T_{\mathrm{switch}},N_{\mathrm{ready}})=(100,400,64)\), and Soft-BoN shortlist size \(N_t=64\). The main table reports the \(P=1\) abstraction because it has full selected-decision readiness and the best total metric among the phase variants. DPRM-SDPO starts from confidence ordering before switching to DPRM, whereas DPRM(random)-SDPO starts from random ordering before the same DPRM phase.

\paragraph{Evaluation protocol.}
Each seed generates \(10\) batches of \(64\) DNA sequences per method. We report the GOSAI HepG2 expression score, ATAC success rate, high-expression k-mer Pearson correlation, reference-model log-likelihood, and the product-style total metric used by the SDPO evaluation. The full \(P=1\) family is independently fine-tuned and evaluated for three seeds. Within each seed, \(95\%\) intervals use \(1{,}000\) ordinary bootstrap resamples over generated sequences; seed-level records are not pooled.

\begin{figure}[H]
\centering
\includegraphics[width=\linewidth]{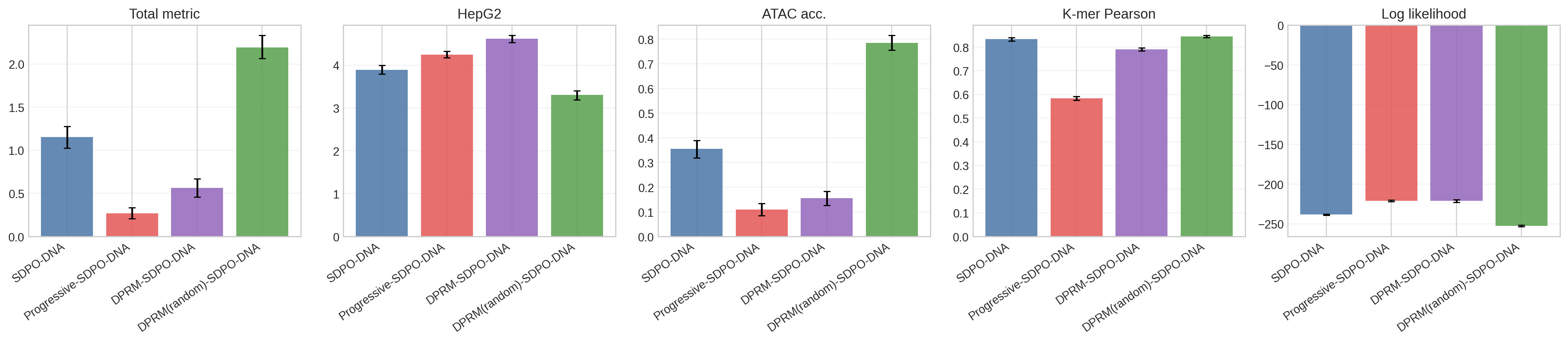}
\caption{DPRM-SDPO DNA reward-optimization metrics. All rows use the same pretrained DNA diffusion checkpoint, SDPO objective, oracle suite, and sampling budget; only the reveal-order controller changes. DPRM(random)-SDPO gives the highest total metric, ATAC success, and k-mer Pearson correlation, while DPRM-SDPO gives the highest HepG2 score.}
\label{fig:sdpo_dna_ordering_metrics}
\end{figure}

\begin{table}[H]
\centering
\small
\caption{SDPO-DNA phase-granularity audit. Finer phase buckets reduce ready coverage and degrade the total metric; the reported SDPO rows therefore use the phase-independent \(P=1\) abstraction.}
\label{tab:sdpo_phase_granularity}
\setlength{\tabcolsep}{4pt}
\begin{tabular}{lccc}
\toprule
Policy & Phases \(P\) & Ready buckets & Total metric \\
\midrule
DPRM(conf.) & 1 & \(10/10\) & 1.423 \\
DPRM(conf.) & 4 & \(34/40\) & 0.592 \\
DPRM(conf.) & 8 & \(60/80\) & 0.317 \\
DPRM(random) & 1 & \(10/10\) & \bestcell{2.119} \\
DPRM(random) & 4 & \(35/40\) & 1.197 \\
DPRM(random) & 8 & \(64/80\) & 1.032 \\
\bottomrule
\end{tabular}
\end{table}

\paragraph{Interpretation.}
The SDPO result depends on bucket granularity. With \(P=1\), DPRM(random)-SDPO produces the highest product-style total, ATAC success, and k-mer correlation, improving all three over the SDPO baseline, while DPRM(conf.)-SDPO achieves the highest HepG2 expression score. The independent three-seed total-utility means are \(2.1287\) for DPRM(random), \(1.3885\) for native SDPO, \(1.1411\) for entropy order, \(0.7788\) for DPRM(confidence), and \(0.6344\) for confidence-progressive order. The same three DPRM(random) checkpoints give \(1.8660\) after shuffling bucket values and \(2.0699\) for gate/count-only scoring. With \(P=4\) or \(P=8\), coverage and selected-decision readiness drop and the total metric becomes worse. Thus the useful abstraction level is task dependent, and learned values help without accounting for the entire effect.

\subsection{Runtime and Sampling-Cost Analysis}
\label{app:runtime_cost}

The main paper reports the normalized deployment count for all nine hosts. Because seconds, NFE, and complete paths are not commensurate, \Cref{tab:multimodal_runtime_cost} retains only the four matched empirical records available in the release.

\begin{table}[H]
\centering
\small
\setlength{\tabcolsep}{3pt}
\begin{tabular}{p{0.14\linewidth}p{0.18\linewidth}p{0.15\linewidth}p{0.15\linewidth}p{0.24\linewidth}}
\toprule
Host & Record & Confidence & DPRM & Reading \\
\midrule
PUMA & wall clock & \(387.5\) s & \(417.5\) s & \(1.08\times\); scalar reranking \\
Prism & mean NFE & \(609\) & \(1{,}071\) & \(1.76\times\); online search \\
Omni & complete paths & \(1\) & \(5\) & \(5\times\); prompt-local values \\
LLaDA-V & wall clock & \(2148\) s & \(2127\) s & \(0.99\times\); same model calls \\
\bottomrule
\end{tabular}
\caption{Matched empirical cost records. Ratios are interpreted only within a row.}
\label{tab:multimodal_runtime_cost}
\end{table}

Reusable tables add only scalar work. Prism and Omni instead estimate values online; their gains are quality--compute trade-offs.

\subsection{Statistical Uncertainty Protocol}
\label{app:stats_protocol}

\Cref{fig:supp_statistical_protocol_map} records the evaluation unit, pairing rule, split, and retained uncertainty artifact for every host. Unless marked otherwise, intervals use \(5{,}000\) percentile-bootstrap resamples at \(95\%\) confidence.

\begin{figure}[!htbp]
\centering
\includegraphics[width=0.98\linewidth]{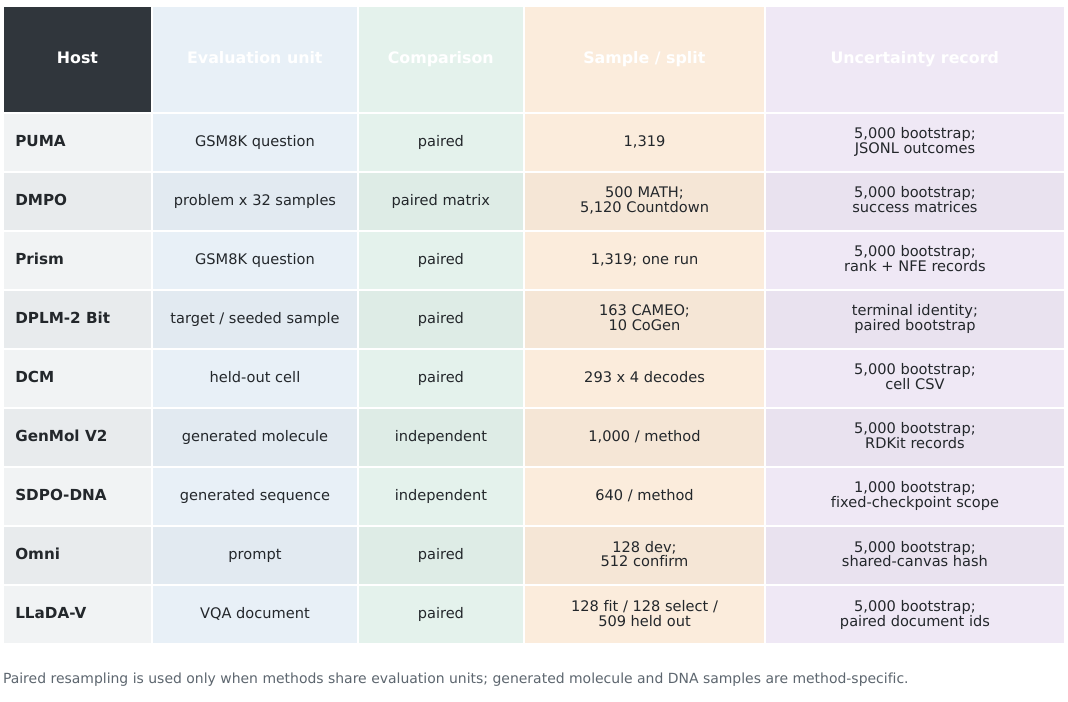}
\caption{Statistical protocol by host. Paired resampling is used only for shared evaluation units.}
\label{fig:supp_statistical_protocol_map}
\end{figure}

For DMPO, each example has a Boolean success row \(S_e\in\{0,1\}^{32}\), and the reported per-example statistic is
\[
\bar p_e=\frac{1}{6}\sum_{K\in\{1,2,4,8,16,32\}}
\mathbf 1\!\left\{\max_{j\le K}S_{e,j}=1\right\}.
\]
Method deltas bootstrap \(\bar p_e^{(B)}-\bar p_e^{(A)}\). Prism uses one run per method, so its interval measures dataset variation, not run-to-run variation. DPLM confirmation remains unopened because its development gate failed. GenMol and SDPO samples are method-specific and therefore use independent resampling.

\subsection{Empirical Diagnostics for the Finite-Sample Theory}
\label{app:theory_observations}

\Cref{app:optimization_advantage} predicts three observable signatures: confidence tracks easier local decisions; reward tilt restores a useful residual order family; and bucket uncertainty shrinks with evidence. Countdown logs phase, confidence, selection, CE, reward, DPRM value, gate, and a gradient proxy for random, confidence, and \(\beta\in\{0.5,1,2\}\). These are tests of assumptions, not estimates of theorem constants.

\paragraph{Optimization trace.}
\Cref{fig:countdown_reward_curve} gives the training-level check. Aligned confidence separates early from random masking, while warmup keeps DPRM close to confidence until bucket values are ready.

\begin{figure}[H]
\centering
\includegraphics[width=0.76\linewidth]{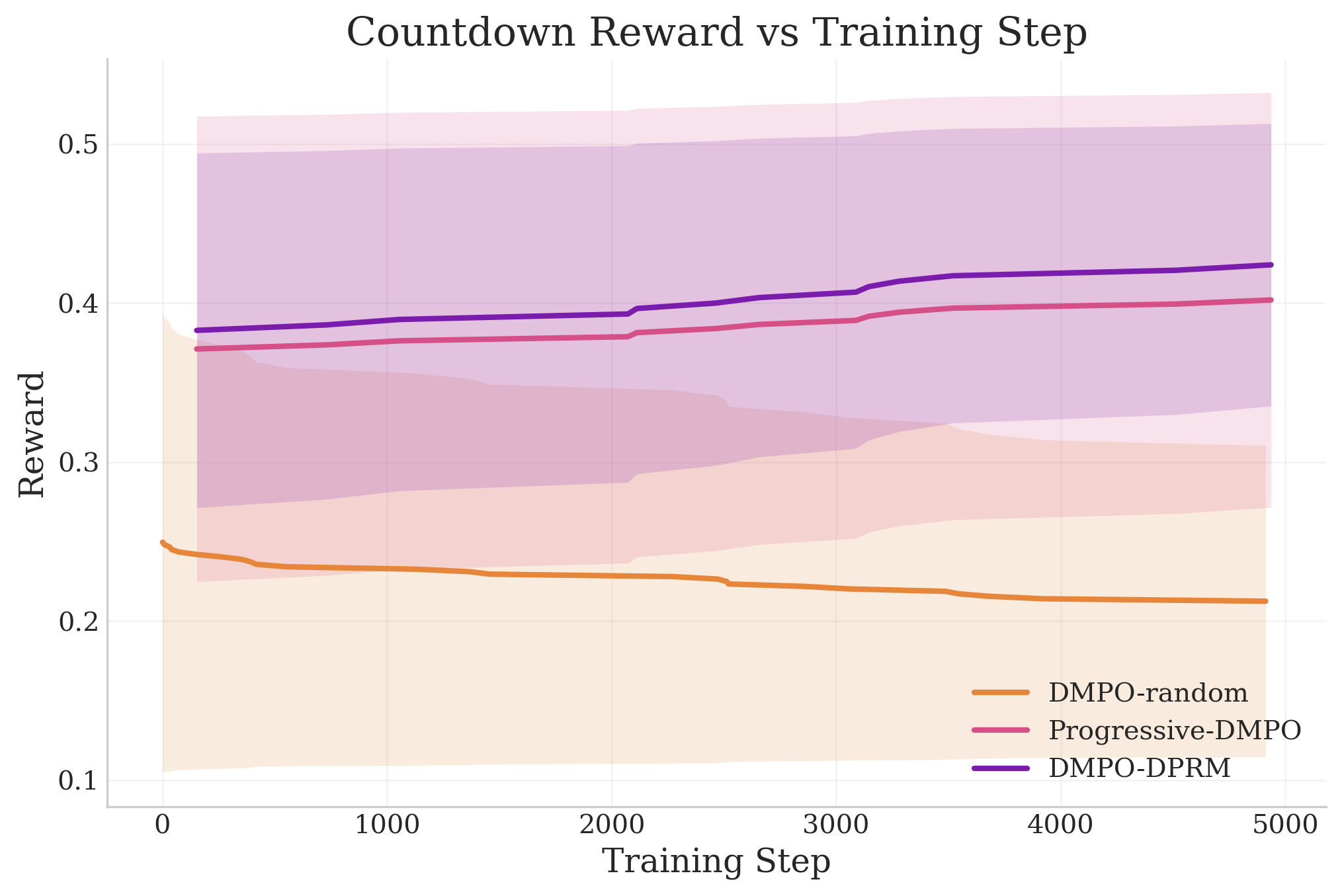}
\caption{Countdown training reward. Bands are logged standard deviations; curves are optimization diagnostics, not evaluation endpoints.}
\label{fig:countdown_reward_curve}
\end{figure}

\paragraph{Token-level local proxy.}
\Cref{fig:countdown_theory_token_diagnostics}(a) shows that confidence and CE have correlation \(-0.94\) to \(-0.97\). Panel (b) then shows lower selected CE for confidence (\(0.477\)) and DPRM-\(\beta{=}1\) (\(0.442\)) than random (\(0.740\)); panel (c) measures the late residual region.

\begin{figure}[H]
\centering
\includegraphics[width=\linewidth]{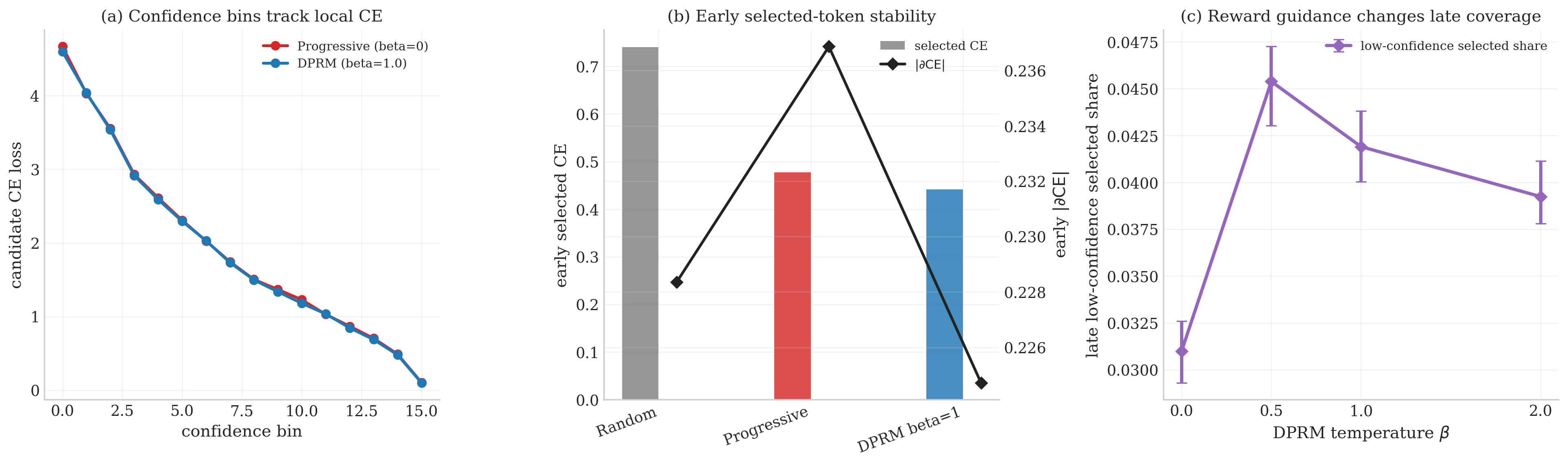}
\caption{Countdown token diagnostics: confidence--CE relation, early selected CE, and late low-confidence coverage.}
\label{fig:countdown_theory_token_diagnostics}
\end{figure}

\paragraph{Low-confidence residual coverage.}
Random order visits many uncertain tokens but has worse selected CE. The relevant comparison in \Cref{fig:countdown_theory_bin_coverage} is confidence versus DPRM: late mass in bins \(0\)--\(5\) changes from \(3.1\%\) to \(4.5\%\), \(4.2\%\), and \(3.9\%\) for \(\beta=0.5,1,2\), while selected DPRM value becomes positive.

\begin{figure}[H]
\centering
\includegraphics[width=\linewidth]{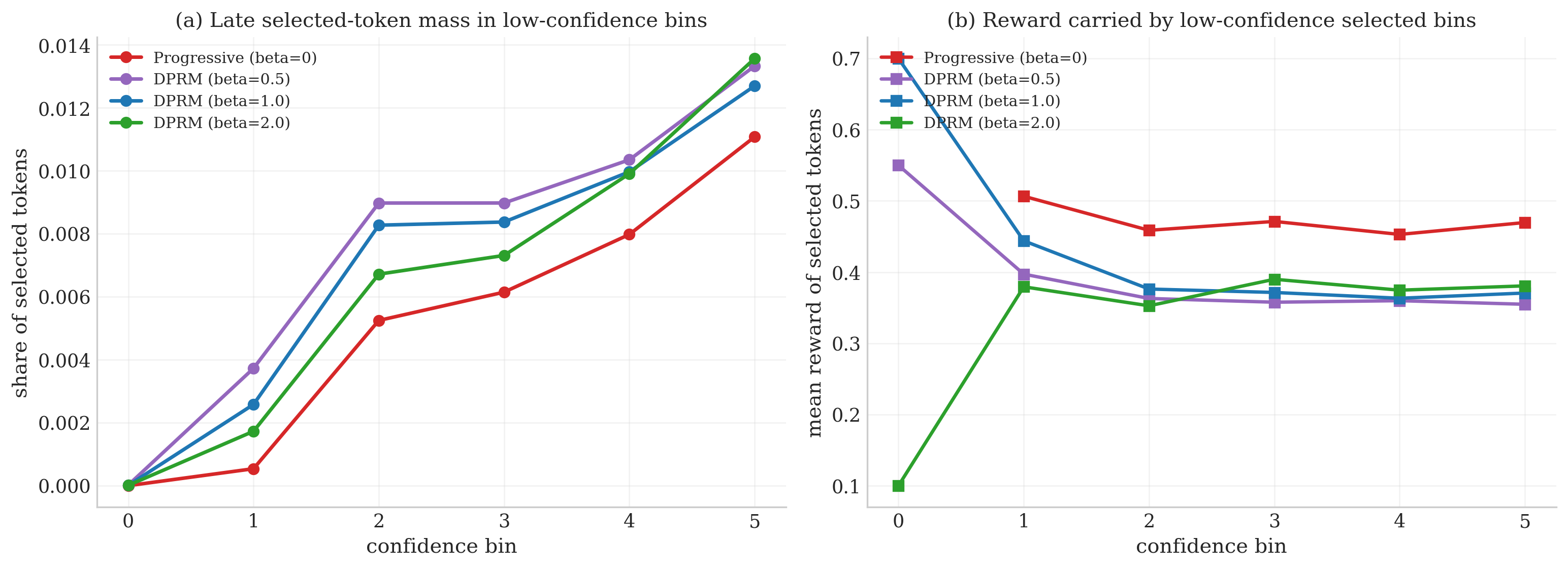}
\caption{Late Countdown coverage. DPRM restores a small, rewarded part of the low-confidence region left by confidence ordering.}
\label{fig:countdown_theory_bin_coverage}
\end{figure}

\paragraph{Empirical-Bernstein rate diagnostic.}
\Cref{thm:online_main} separates sampling error from abstraction and drift. \Cref{fig:countdown_theory_variance_bernstein} shows the predicted radius shrinkage as counts grow and reports drift separately; it does not measure the unobserved exact-state abstraction error.

\begin{figure}[H]
\centering
\includegraphics[width=\linewidth]{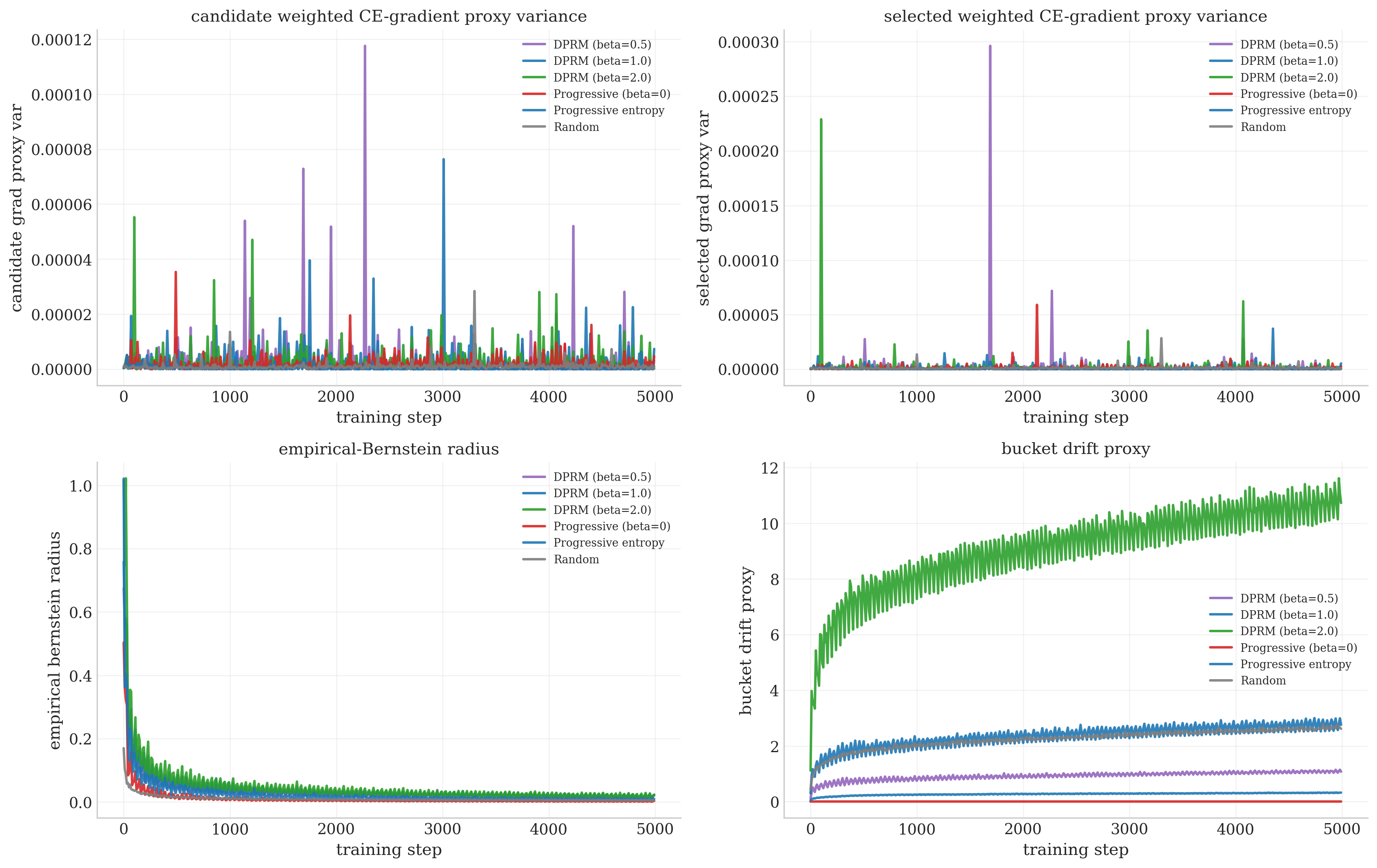}
\caption{Online-estimator diagnostics: gradient proxies, empirical-Bernstein radius, and bucket drift.}
\label{fig:countdown_theory_variance_bernstein}
\end{figure}

\paragraph{Countdown pass@\(K\) consequence.}
On the fixed \(5{,}120\)-example Countdown split, \Cref{fig:countdown_theory_diagnostics}(a) tests the early proxy: confidence gains most over random on easier subsets. Panels (b)--(c) test residual rescue: hard pass@32 rises \(47.9\%\to60.0\%\), and OOD pass@32 rises \(26.0\%\to36.3\%\).

\begin{figure}[H]
\centering
\includegraphics[width=\linewidth]{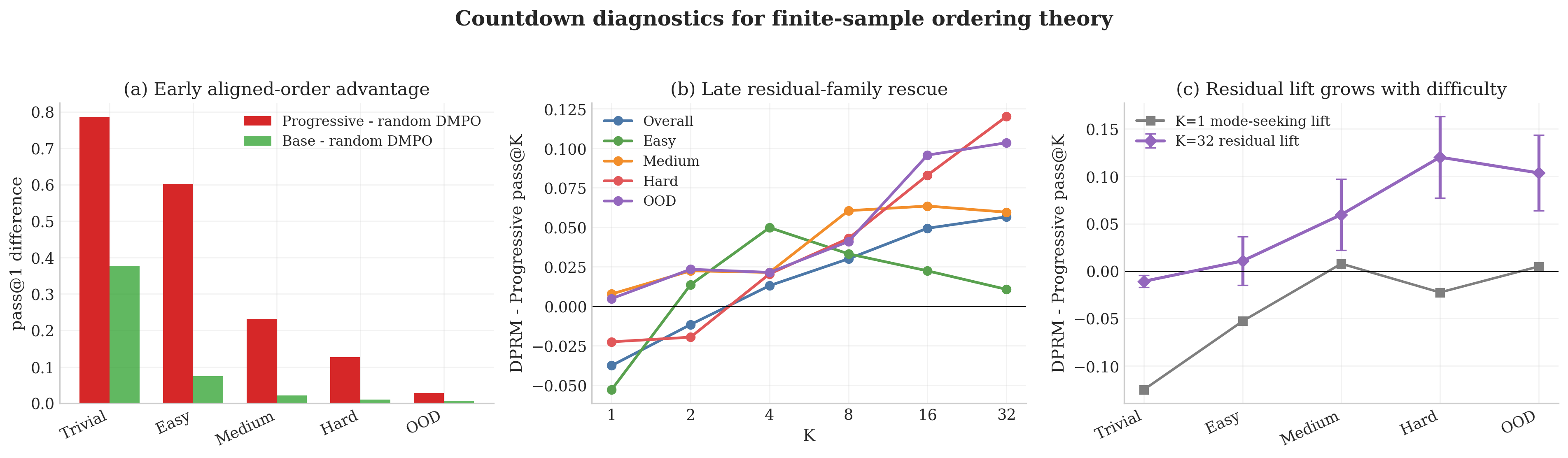}
\caption{Outcome checks for the finite-sample model: early confidence advantage and late hard/OOD DPRM gains.}
\label{fig:countdown_theory_diagnostics}
\end{figure}

\paragraph{Entropy, bucket, and EOT controls.}
The reward-blind control uses
\[
u_i(s_t)=1-\max_v p_\theta(v\mid s_t),
\]
with the same reveal budget. The main-paper mechanism figure reports these controls once. The key readings are: SDPO-DNA DPRM(random) exceeds entropy, shuffled-value, and gate/count controls; RealWorldQA phase/confidence transfers negatively, whereas format/EOT/position state changes \(37.5\%\) of orders and isolates the numeric/count gain. Thus abstraction quality is host-dependent, and reward tilt is not equivalent to generic uncertainty.

\section{Formal Statements and Proofs}

\Cref{fig:supp_proof_dependency_map} shows the proof dependencies. Each row can be read independently; the main paper states only the row endpoint.

\begin{figure}[!htbp]
\centering
\includegraphics[width=0.98\linewidth]{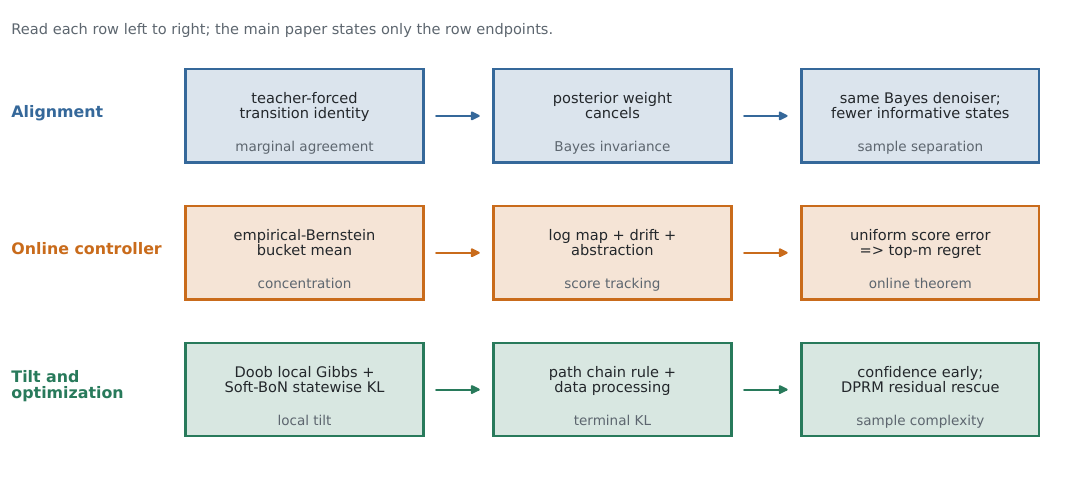}
\caption{Proof guide: alignment, online score tracking, and reward-tilted optimization.}
\label{fig:supp_proof_dependency_map}
\end{figure}

\subsection{Teacher-forced alignment, minimizer preservation, and statistical separation}
\label{app:alignment_minimizer}

This section restates the three PUMA results used by our analysis: marginal agreement, Bayes-predictor invariance, and a latent-variable sample separation \citep{kim2026puma}.

\subsubsection{Teacher-forced alignment}

We use the single-index reveal setting of \citet[Appendix~A.1]{kim2026puma}.

\begin{proposition}[Teacher-forced marginal agreement]
\label{prop:teacher_forced_alignment}
Fix an integer $K\ge 1$ and a time grid $t_j:=1-j/K$ for $j=0,1,\dots,K$.
Fix a prompt $q$ and let $O\sim p_*(\cdot\mid q)$.
Consider the following two Markov chains on partially masked response states, both initialized at the fully masked response state.

\begin{enumerate}[leftmargin=18pt]
    \item \textbf{Idealized posterior-based inference.}
    Given $Z_{t_j}=z$, sample an index
    \[
    I_j\sim g_\phi(\cdot\mid z,q,t_j),
    \]
    then sample a token
    \[
    U_j\sim p_*(O_{I_j}=\cdot\mid Z_{t_j}=z,Q=q),
    \]
    and set
    \[
    Z_{t_{j+1}}:=z^{(I_j\leftarrow U_j)}.
    \]
    Let $q_{t_j}(\cdot\mid q)$ denote the law of $Z_{t_j}$.

    \item \textbf{Teacher-forced chain.}
    First sample $O\sim p_*(\cdot\mid q)$.
    Given $\widetilde Z_{t_j}=z$, sample
    \[
    I_j\sim g_\phi(\cdot\mid z,q,t_j),
    \]
    and set
    \[
    \widetilde Z_{t_{j+1}}:=z^{(I_j\leftarrow O_{I_j})}.
    \]
    Let $\widetilde q_{t_j}(\cdot\mid q)$ denote the marginal law of $\widetilde Z_{t_j}$ after marginalizing over $O$.
\end{enumerate}

Then
\[
q_{t_j}(\cdot\mid q)=\widetilde q_{t_j}(\cdot\mid q)
\qquad
\text{for every }j=0,1,\dots,K.
\]
\end{proposition}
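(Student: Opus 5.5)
We prove a stronger statement by induction on $j$: for every $j$ and every partially masked state $z$,
\[
\Pr\bigl(\widetilde Z_{t_j}=z,\ O\in\cdot\,\bigr)=q_{t_j}(z\mid q)\,p_*\bigl(O\in\cdot\mid Z=z,Q=q\bigr),
\]
that is, the joint law of $(\widetilde Z_{t_j},O)$ factors as the idealized marginal times the posterior of the clean response given the visible state. Here $p_*(O\in\cdot\mid Z=z,Q=q)$ means $p_*(\cdot\mid q)$ conditioned on the event that $O$ agrees with $z$ on its unmasked coordinates, written $O\sim z$. The claimed equality $q_{t_j}=\widetilde q_{t_j}$ is then the marginal of this identity in $z$.

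The base case $j=0$ is immediate. Both chains start at the fully masked state, and the event $O\sim z$ is vacuous there, so the conditional law of $O$ is just $p_*(\cdot\mid q)$.

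For the inductive step, assume the factorization holds at $t_j$. First note a consistency property of the teacher-forced chain: every revealed coordinate of $\widetilde Z_{t_j}$ equals the corresponding coordinate of $O$, so $\widetilde Z_{t_j}\sim O$ almost surely. Next, the index rule $g_\phi(\cdot\mid z,q,t_j)$ depends only on $(z,q,t_j)$ and not on $O$. Hence, conditional on $\widetilde Z_{t_j}=z$, the index $I_j$ is independent of $O$. Therefore, conditional on $\widetilde Z_{t_j}=z$ and $I_j=i$, the token $O_i$ has law $p_*(O_i=\cdot\mid Z=z,Q=q)$ by the inductive hypothesis. It follows that the one-step transition kernel $z\mapsto z^{(i\leftarrow u)}$ of the teacher-forced chain, after marginalizing over $O$, equals $g_\phi(i\mid z,q,t_j)\,p_*(O_i=u\mid z,q)$, which is exactly the idealized kernel. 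Combined with $\widetilde q_{t_j}=q_{t_j}$, this gives $\widetilde q_{t_{j+1}}=q_{t_{j+1}}$.

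It remains to propagate the stronger claim about the conditional law of $O$. Fix $z'=z^{(i\leftarrow u)}$ and compute $\Pr(O=o\mid \widetilde Z_{t_{j+1}}=z')$ by Bayes' rule. Every predecessor pair $(z,i)$ leading to $z'$ contributes a term proportional to $q_{t_j}(z)\,g_\phi(i\mid z)\,p_*(o\mid z)\,\mathbf 1\{o_i=u\}$. Since $p_*(o\mid z)\mathbf 1\{o_i=u\}\propto p_*(o)\mathbf 1\{o\sim z'\}$, with a normalizing constant depending only on $(z,i,u)$ and not on $o$, each term is proportional in $o$ to $p_*(o\mid q)\mathbf 1\{o\sim z'\}$. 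A mixture of identical normalized laws is that same law, so the conditional law of $O$ given $\widetilde Z_{t_{j+1}}=z'$ is $p_*(\cdot\mid Z=z',Q=q)$. This closes the induction.

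The main obstacle is this last step. When a state $z'$ has several predecessors, one must check that the mixture over predecessors does not distort the posterior on $O$. The argument above handles it because each component is already proportional to $p_*(o)\mathbf 1\{o\sim z'\}$, with weights independent of $o$. This is exactly where the hypothesis that $g_\phi$ sees only the visible state and time is essential: if $I_j$ could depend on hidden coordinates of $O$, the weights would depend on $o$ and the factorization would fail.
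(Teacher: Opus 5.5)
Your proof is correct and follows the paper's route: show that the conditional law of $O$ given the teacher-forced state $z$ is the true posterior $p_*(\cdot\mid q)$ restricted to responses consistent with $z$, deduce identical one-step kernels because $g_\phi$ depends only on $(z,q,t_j)$, and induct on $j$. The one difference is that you carry the posterior identity through a strengthened inductive hypothesis, applying Bayes' rule over all predecessors of a state. This explicitly proves the $o$-independence of the factor $\alpha_j(q,z)$ in $\mathbb P(\widetilde Z_{t_j}=z\mid Q=q,O=o)$, which the paper asserts in one line.
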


\begin{proof}
We show that the two chains have the same one-step transition kernel.

Fix a step $j$ and a masked state $z$.
Under the teacher-forced chain, the event $\{\widetilde Z_{t_j}=z\}$ is possible only if the sampled response $O$ agrees with $z$ on the revealed coordinates.
Thus, for some scalar $\alpha_j(q,z)\ge 0$ and every response $o$,
\[
\mathbb P(\widetilde Z_{t_j}=z\mid Q=q,O=o)
=
\alpha_j(q,z)\,
\mathbf 1\{o_{\mathrm{um}(z)}=z_{\mathrm{um}(z)}\}.
\]
By Bayes' rule,
\[
\mathbb P(O=o\mid Q=q,\widetilde Z_{t_j}=z)
\propto
p_*(o\mid q)\,
\mathbf 1\{o_{\mathrm{um}(z)}=z_{\mathrm{um}(z)}\}.
\]
Hence, for every masked coordinate $i\in M(z)$ and every token $u$,
\[
\mathbb P(O_i=u\mid Q=q,\widetilde Z_{t_j}=z)
=
p_*(O_i=u\mid Q=q,Z_{t_j}=z).
\]

Now fix $i\in M(z)$ and a token value $u$.
Using the conditional identity above and the fact that the reveal policy depends only on the current visible state and time,
\begin{align*}
\mathbb P\!\left(\widetilde Z_{t_{j+1}}=z^{(i\leftarrow u)} \mid Q=q,\widetilde Z_{t_j}=z\right)
&=
g_\phi(i\mid z,q,t_j)\,
\mathbb P(O_i=u\mid Q=q,\widetilde Z_{t_j}=z) \\
&=
g_\phi(i\mid z,q,t_j)\,
p_*(O_i=u\mid Q=q,Z_{t_j}=z).
\end{align*}
On the other hand, the idealized inference chain satisfies
\[
\mathbb P\!\left(Z_{t_{j+1}}=z^{(i\leftarrow u)} \mid Q=q,Z_{t_j}=z\right)
=
g_\phi(i\mid z,q,t_j)\,
p_*(O_i=u\mid Q=q,Z_{t_j}=z).
\]
Thus the two chains have the same one-step transition kernel and the same initial state.
An induction on $j$ gives
\[
q_{t_j}(\cdot\mid q)=\widetilde q_{t_j}(\cdot\mid q)
\qquad
\text{for all }j.
\]
\end{proof}

\begin{remark}
Proposition~\ref{prop:teacher_forced_alignment} is the formal version of the first part of the main-paper informal PUMA proposition.
It is the direct analogue of the discrete-time single-index result in \citet[Appendix~A.1]{kim2026puma}.
\end{remark}

\subsubsection{Minimizer preservation}

We now formalize the second part of the main-paper informal PUMA proposition.
This is the Appendix~A.2 argument of \citet{kim2026puma}: the forward-process weight cancels in Bayes' rule, so the Bayes predictor does not depend on the particular masking weights.

For a partially masked response state $z$, let
\[
\mathrm{um}(z):=\{i:z_i\neq \mask\},
\qquad
M(z):=\{i:z_i=\mask\}.
\]

\begin{lemma}[Weighted cross-entropy is minimized by the true conditional]
\label{lem:weighted_ce}
Let $(X,Y)$ be jointly distributed random variables, where $Y$ takes values in a finite set $\mathcal V$.
Let $w(X)\ge 0$ be measurable.
Consider
\[
\mathcal J(r):=\mathbb E\big[w(X)\,(-\log r(Y\mid X))\big],
\]
where $r(\cdot\mid x)$ ranges over all conditional distributions on $\mathcal V$.
Then every minimizer $r^\star$ satisfies
\[
r^\star(\cdot\mid x)=\mathbb P(Y=\cdot\mid X=x)
\]
for almost every $x$ with $w(x)>0$.
\end{lemma}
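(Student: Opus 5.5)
The plan is to reduce the weighted objective to a pointwise problem via the tower property, then apply Gibbs' inequality at each $x$. Write $p(\cdot\mid x):=\mathbb P(Y=\cdot\mid X=x)$, a regular conditional distribution on the finite set $\mathcal V$. Conditioning on $X$ and pulling out the $X$-measurable weight gives
\[
\mathcal J(r)=\mathbb E\Big[w(X)\sum_{y\in\mathcal V}p(y\mid X)\,\bigl(-\log r(y\mid X)\bigr)\Big]
=\mathbb E\big[w(X)\,\bigl(H(p(\cdot\mid X))+\KL(p(\cdot\mid X)\,\|\,r(\cdot\mid X))\bigr)\big],
\]
where $H$ is Shannon entropy. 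The decomposition uses the convention $0\log 0=0$. If $r(y\mid x)=0$ for some $y$ with $p(y\mid x)>0$, the KL term is $+\infty$, consistently with $-\log r=+\infty$.

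Since $\mathcal V$ is finite, $H(p(\cdot\mid x))\le\log|\mathcal V|$. Hence $\mathbb E[w(X)H(p(\cdot\mid X))]\le\log|\mathcal V|\,\mathbb E[w(X)]$. I will assume $\mathbb E[w(X)]<\infty$; otherwise all values of $\mathcal J$ may be $+\infty$ and ``minimizer'' is vacuous, and the statement should be read under this integrability. Under it, the entropy term is a finite constant independent of $r$. So
\[
\mathcal J(r)-\mathcal J(p)=\mathbb E\big[w(X)\,\KL(p(\cdot\mid X)\|r(\cdot\mid X))\big]\ge 0 .
\]
In particular, the true conditional $p$ is itself a minimizer, and the minimum value is finite.

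Now let $r^\star$ be any minimizer. Then $\mathcal J(r^\star)=\mathcal J(p)$, so $\mathbb E[w(X)\KL(p(\cdot\mid X)\|r^\star(\cdot\mid X))]=0$. The integrand is nonnegative, so it vanishes almost surely. Therefore $\KL(p(\cdot\mid x)\|r^\star(\cdot\mid x))=0$ for $P_X$-almost every $x$ with $w(x)>0$. By the equality case of Gibbs' inequality on a finite set, this forces $r^\star(\cdot\mid x)=p(\cdot\mid x)$ at those $x$, which is the claim.

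The only delicate step is the measure-theoretic bookkeeping. One point is existence of the regular conditional distribution, which is automatic for finite $\mathcal V$. Another is measurability of $x\mapsto\KL(p(\cdot\mid x)\|r(\cdot\mid x))$ for measurable kernels $r$. The last is the handling of infinite values together with the integrability assumption above. None of these should cause real difficulty.
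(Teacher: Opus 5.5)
Your proposal is correct and follows the paper's proof: condition on $X$, split the conditional cross-entropy into the entropy of $\mathbb P(Y=\cdot\mid X)$ plus $\mathrm{KL}(\mathbb P(Y=\cdot\mid X)\,\|\,r(\cdot\mid X))$, and use that the KL term is nonnegative and vanishes only at the true conditional. Your integrability caveat is a point the paper leaves implicit; the sharp condition is $\mathcal J(p)<\infty$, which your $\mathbb E[w(X)]<\infty$ implies, and without it every $r$ attains $+\infty$, so the conclusion fails rather than being vacuous.
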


\begin{proof}
Conditioning on $X=x$ gives
\[
\mathbb E[-\log r(Y\mid X)\mid X=x]
=
H(\mathbb P(Y=\cdot\mid X=x))
+
\mathrm{KL}\!\left(\mathbb P(Y=\cdot\mid X=x)\,\middle\|\,r(\cdot\mid x)\right).
\]
Multiply by $w(x)\ge 0$ and integrate over $X$.
The entropy term is independent of $r$, whereas the KL term is nonnegative and vanishes exactly at the true conditional.
\end{proof}

\begin{proposition}[Forward-process invariance of the Bayes predictor]
\label{prop:bayes_invariance}
Let the forward process take the form
\begin{equation}
\mathbb P(Z_t=z\mid Q=q,O=o,t)
\propto
\alpha_t(q,z)\,
\mathbf 1\{o_{\mathrm{um}(z)}=z_{\mathrm{um}(z)}\},
\label{eq:forward_factorized}
\end{equation}
where $\alpha_t(q,z)\ge 0$ does not depend on $o$.
Let
\[
p_f(q,o,t,z)
:=
\nu_*(q,o)\,\mathbf 1_{[0,1]}(t)\,\mathbb P(Z_t=z\mid Q=q,O=o,t).
\]
Consider the population loss
\begin{equation}
\mathcal L_{\mathrm{fwd}}(\theta)
:=
\mathbb E_{(Q,O,t,Z)\sim p_f}
\left[
\frac{1}{t}\sum_{i\in M(Z)} -\log p_\theta(O_i\mid Z,Q)
\right].
\label{eq:forward_loss}
\end{equation}
Then:
\begin{enumerate}[leftmargin=18pt]
    \item the posterior satisfies
    \begin{equation}
    \mathbb P(O=o\mid Q=q,Z=z,t)
    \propto
    p_*(o\mid q)\,
    \mathbf 1\{o_{\mathrm{um}(z)}=z_{\mathrm{um}(z)}\},
    \label{eq:posterior_cancelled}
    \end{equation}
    so it is independent of $\alpha_t(q,z)$;
    \item every minimizer $\theta^\star$ of \cref{eq:forward_loss} satisfies
    \[
    p_{\theta^\star}(u\mid z,q)
    =
    \mathbb P(O_i=u\mid Q=q,Z=z)
    \qquad
    \text{for almost every }(q,z,i)\text{ with }i\in M(z).
    \]
\end{enumerate}
\end{proposition}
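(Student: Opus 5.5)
The plan has two parts, one for each claim.

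For claim 1, apply Bayes' rule to the joint density $p_f$. Conditionally on $(Q,t)=(q,t)$, we have
\[
\mathbb P(O=o\mid Q=q,Z=z,t)\propto p_*(o\mid q)\,\mathbb P(Z_t=z\mid Q=q,O=o,t).
\]
Then substitute the factorized forward law \cref{eq:forward_factorized}. The normalizing constant of that law must be handled with care. Because $\alpha_t(q,z)$ is already indexed by $z$ and does not depend on $o$, I will read the proportionality in \cref{eq:forward_factorized} as being over $z$ with an $o$-free normalizer $c_t(q)$. This is the same structure used in the proof of Proposition~\ref{prop:teacher_forced_alignment}. Under this reading the factor $\alpha_t(q,z)/c_t(q)$ is constant in $o$ and cancels in the posterior normalization. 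What remains is $p_*(o\mid q)\mathbf 1\{o_{\mathrm{um}(z)}=z_{\mathrm{um}(z)}\}$, which is \cref{eq:posterior_cancelled}.

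This normalizer is the main obstacle. If the normalizer were allowed to depend on $o$, the cancellation would fail. So the first thing I would do is check that $\sum_z \alpha_t(q,z)\mathbf 1\{o_{\mathrm{um}(z)}=z_{\mathrm{um}(z)}\}$ is independent of $o$. This holds whenever $\alpha_t$ depends on $z$ only through its mask pattern, since then the sum counts the same mask patterns for every $o$. If it does not hold, I would adopt the $o$-free-normalizer reading as part of the hypothesis.

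For claim 2, first rewrite the loss as a sum over positions:
\[
\mathcal L_{\mathrm{fwd}}(\theta)=\sum_{i}\mathbb E\big[w_i(t,Z)\,(-\log p_\theta(O_i\mid Z,Q))\big],\qquad w_i(t,z)=t^{-1}\mathbf 1\{i\in M(z)\}.
\]
Take $X=(Q,Z,t,i)$ and $Y=O_i$. Apply Lemma~\ref{lem:weighted_ce} to each term, conditioning on $(Q,Z,t)$. Because the predictor $p_\theta(\cdot\mid z,q)$ at coordinate $i$ is chosen separately for each $(q,z,i)$, the minimization decouples across coordinates and across states. The lemma then forces
\[
p_{\theta^\star}(u\mid z,q)=\mathbb P(O_i=u\mid Q=q,Z=z,t)
\]
wherever the weight is positive, that is, for $i\in M(z)$ and $t>0$.

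By claim 1, this conditional does not depend on $t$ or $\alpha_t$; it equals $p_*(O_i=u\mid Q=q,Z=z)$. So the $t$-dependence disappears, and the identity holds for almost every $(q,z,i)$ with $i\in M(z)$.

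Two smaller points need care. First, the $1/t$ weight and possible infinite loss: I would restrict to the subset of $(q,z)$ with positive mass under $p_f$, since "almost every" excludes the rest. Second, the expressiveness of $p_\theta$: the minimizer is over all conditionals, or equivalently over a class rich enough to realize them.
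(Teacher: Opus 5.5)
Your proposal is correct and follows the same route as the paper. Part 1 uses Bayes' rule with cancellation of the $o$-free factor $\alpha_t(q,z)$, and part 2 applies the weighted cross-entropy lemma per coordinate with weight $t^{-1}\mathbf 1\{i\in M(z)\}$. Beyond that, you make explicit two points the paper's proof uses silently: that the forward-law normalizer must not depend on $o$ (without this the cancellation can genuinely fail), and that part 1 is what removes the $t$-dependence of the conditional.
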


\begin{proof}
Fix $q$, $t$, and $z$.
By Bayes' rule and \cref{eq:forward_factorized},
\[
\mathbb P(O=o\mid Q=q,Z=z,t)
\propto
\mathbb P(Z_t=z\mid Q=q,O=o,t)\,p_*(o\mid q)
\propto
\alpha_t(q,z)\,
\mathbf 1\{o_{\mathrm{um}(z)}=z_{\mathrm{um}(z)}\}\,
p_*(o\mid q).
\]
Since $\alpha_t(q,z)$ does not depend on $o$, it cancels under normalization.
This proves \cref{eq:posterior_cancelled}.

Now fix a coordinate $i$ and define
\[
X:=(Q,Z),
\qquad
Y:=O_i.
\]
The contribution of coordinate $i$ to \cref{eq:forward_loss} is
\[
\mathcal L_i(\theta)
=
\mathbb E\!\left[
\frac{1}{t}\,\mathbf 1\{i\in M(Z)\}\,
\bigl(-\log p_\theta(Y\mid X)\bigr)
\right].
\]
By the first part, the conditional law of $Y$ given $X=(q,z)$ is the ground-truth posterior and does not depend on $\alpha_t$.
Applying Lemma~\ref{lem:weighted_ce} with weight
\[
w_i(Q,Z,t):=\frac{1}{t}\,\mathbf 1\{i\in M(Z)\}
\]
shows that every minimizer matches that conditional posterior on every active masked coordinate.
Summing over $i$ proves the claim.
\end{proof}

\begin{corollary}[Minimizer preservation for admissible teacher-forced masking]
\label{cor:minimizer_preservation}
Suppose that an admissible teacher-forced progressive masking law $q_\pi$ is induced by a forward process of the form \cref{eq:forward_factorized}.
Then every minimizer of the progressive population risk satisfies
\[
p_{\theta^\star}(u\mid z,q)
=
\nu_*(O_i=u\mid Q=q,Z=z)
\]
for $\bar\rho_\pi$-almost every $(q,z,i)$ with $i\in M(z)$.
Hence admissible progressive masking changes only the occupancy measure over masked states, not the Bayes-optimal denoiser.
\end{corollary}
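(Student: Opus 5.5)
The plan is to reduce the corollary to Proposition~\ref{prop:bayes_invariance}. The progressive population risk under $q_\pi$ should be written in the same weighted cross-entropy form as \cref{eq:forward_loss}, with the weight $1/t$ replaced by whatever nonnegative occupancy weight the progressive schedule assigns to a masked state. By hypothesis, $q_\pi$ is induced by a forward process of the form \cref{eq:forward_factorized}. Step one is therefore to record this explicitly. For each schedule time $t$, the conditional probability of a masked state $z$ given $(q,o)$ equals $\alpha^\pi_t(q,z)\,\mathbf 1\{o_{\mathrm{um}(z)}=z_{\mathrm{um}(z)}\}$, where $\alpha^\pi_t$ does not depend on $o$. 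Proposition~\ref{prop:teacher_forced_alignment} supports this when the reveal policy depends only on the visible state and time. The teacher-forced chain then reaches $z$ with a probability that depends on $o$ only through this consistency indicator, which is exactly the factorization used in its proof.

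Step two applies part (1) of Proposition~\ref{prop:bayes_invariance}. The posterior of $O$ given $(Q,Z)=(q,z)$ under the progressive joint law is proportional to $\nu_*(o\mid q)\,\mathbf 1\{o_{\mathrm{um}(z)}=z_{\mathrm{um}(z)}\}$, because $\alpha^\pi_t$ cancels. Mixing over $t$ with any schedule-dependent weights leaves this conclusion unchanged, since the cancellation holds for each fixed $t$ and the resulting posterior is the same for every $t$. Consequently, for each masked coordinate $i\in M(z)$, the conditional law of $O_i$ given $(q,z)$ is $\nu_*(O_i=\cdot\mid Q=q,Z=z)$.

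Step three decomposes the progressive risk coordinatewise as $\sum_i \mathbb E[w_i\,(-\log p_\theta(O_i\mid Z,Q))]$, where $w_i=\omega_\pi(t)\mathbf 1\{i\in M(Z)\}\ge 0$. Applying Lemma~\ref{lem:weighted_ce} to each term with $X=(Q,Z)$ and $Y=O_i$ shows that any minimizer matches the true conditional almost everywhere where $w_i>0$. This is exactly $\bar\rho_\pi$-almost every $(q,z,i)$, with $\bar\rho_\pi$ defined as the weighted occupancy measure over masked triples. We assume a nonparametric (realizable) class so that the coordinatewise minimizers can be achieved jointly; otherwise the statement is read over conditional distributions, as in the lemma. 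The final sentence of the corollary follows because $q_\pi$ enters only through $\bar\rho_\pi$ and never through the minimizing conditional.

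The main obstacle is step one. One must check that an "admissible" confidence-progressive law genuinely has an $o$-independent factor $\alpha^\pi_t$. This holds only if the reveal decisions use visible information; it would fail if the masking peeked at hidden tokens. I would establish it by induction along the teacher-forced chain. At each step, the transition probability $g(i\mid z,q,t)$ is $o$-free, and the revealed value is forced to equal $o_i$. The path probability to $z$ is therefore a sum over reveal orders of $o$-free products times the consistency indicator.
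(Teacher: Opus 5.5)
Your proposal is correct and follows the paper's route. The paper proves the corollary in one line, as the second part of Proposition~\ref{prop:bayes_invariance} rewritten in progressive notation, and your steps two and three re-run exactly that argument (cancel the $o$-free factor in the posterior, then apply Lemma~\ref{lem:weighted_ce} coordinatewise) with a general occupancy weight in place of $1/t$. One remark: the $o$-independent factorization in your step one is already the corollary's hypothesis, so your induction along the teacher-forced chain, which mirrors the first step in the proof of Proposition~\ref{prop:teacher_forced_alignment}, is a consistency check rather than the main obstacle.
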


\begin{proof}
This is exactly the second part of Proposition~\ref{prop:bayes_invariance}, rewritten under the progressive population-risk notation.
\end{proof}

\subsubsection{Statistical separation}

We now formalize the third part of the main-paper informal PUMA proposition.
This is the latent-variable separation result corresponding to \citet[Proposition~3]{kim2026puma}.

\begin{lemma}[Chernoff bound]
\label{lem:chernoff}
Let $P$ and $Q$ be distributions on a finite space, and let $Z_1,\dots,Z_T\sim P$ be i.i.d.
Then
\[
\mathbb P\!\left(\prod_{t=1}^T \frac{Q(Z_t)}{P(Z_t)} \ge 1\right)
\le
e^{-T C(P,Q)},
\]
where $C(P,Q)$ is the Chernoff information between $P$ and $Q$.
\end{lemma}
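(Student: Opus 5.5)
We will prove the bound by the exponential Markov (Chernoff) method applied to the log-likelihood ratio. Write $L_T:=\sum_{t=1}^T \log\frac{Q(Z_t)}{P(Z_t)}$, so the event of interest is $\{L_T\ge 0\}$. If $Q(z)=0$ for some $z$ in the support of $P$, the corresponding factor vanishes. We adopt the convention that the ratio is then $0$ and the logarithm is $-\infty$; this only shrinks the event. For any $\lambda\in[0,1]$, Markov's inequality applied to $e^{\lambda L_T}$ gives
\[
\mathbb P(L_T\ge 0)\le \mathbb E\bigl[e^{\lambda L_T}\bigr].
\]

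We then use independence to factor the moment generating function:
\[
\mathbb E\bigl[e^{\lambda L_T}\bigr]=\Bigl(\sum_{z}P(z)^{1-\lambda}Q(z)^{\lambda}\Bigr)^{T}.
\]
The sum runs over the support of $P$. Because the space is finite, every quantity here is a finite sum and no integrability issues arise. Set $\Lambda(\lambda):=-\log\sum_z P(z)^{1-\lambda}Q(z)^{\lambda}$. With this notation, the previous two displays combine to
\[
\mathbb P(L_T\ge 0)\le e^{-T\Lambda(\lambda)}\quad\text{for every }\lambda\in[0,1].
\]

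Finally we optimize over $\lambda$. The Chernoff information is defined by
\[
C(P,Q)=\max_{\lambda\in[0,1]}\Lambda(\lambda)=-\min_{\lambda\in[0,1]}\log\sum_z P(z)^{1-\lambda}Q(z)^{\lambda}.
\]
On a finite space $\lambda\mapsto\sum_z P^{1-\lambda}Q^\lambda$ is continuous on $[0,1]$, so the maximum is attained. Choosing the maximizing $\lambda^\star$ in the bound above yields $e^{-TC(P,Q)}$.

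The only delicate step is identifying this $\Lambda$ with the paper's definition of Chernoff information, in particular restricting $\lambda$ to $[0,1]$. For that we will note that $\Lambda$ is concave, which follows from Hölder's inequality, and that $\Lambda(0)=-\log P(\operatorname{supp}Q\cup\operatorname{supp}P)\ge 0$ and $\Lambda(1)\ge0$. Hence the maximum over $[0,1]$ is nonnegative and coincides with the standard definition. The handling of zero-probability points is the only other care needed, and the convention above settles it.
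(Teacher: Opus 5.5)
Your proof is correct and is the same argument as the paper's: exponential Markov applied to $\bigl(\prod_t Q(Z_t)/P(Z_t)\bigr)^{s}$, factorization by independence into $\bigl(\sum_z P(z)^{1-s}Q(z)^{s}\bigr)^T$, and optimization over $s$. The paper takes $s\in(0,1)$ and passes to the infimum, so neither attainment nor concavity of $\Lambda$ is needed. One small slip: $P(\operatorname{supp}Q\cup\operatorname{supp}P)$ always equals $1$, and under the convention that makes your continuity claim true you should have $\Lambda(0)=-\log P(\operatorname{supp}Q)$; this is still $\ge 0$, so nothing downstream changes.
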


\begin{proof}
For any $s\in(0,1)$, Markov's inequality gives
\[
\mathbb P\!\left(\prod_{t=1}^T \frac{Q(Z_t)}{P(Z_t)} \ge 1\right)
=
\mathbb P\!\left(\Bigl(\prod_{t=1}^T \frac{Q(Z_t)}{P(Z_t)}\Bigr)^s \ge 1\right)
\le
\mathbb E\!\left[\prod_{t=1}^T \Bigl(\frac{Q(Z_t)}{P(Z_t)}\Bigr)^s\right].
\]
By independence, the right-hand side equals
\[
\Bigl(\sum_z P(z)^{1-s}Q(z)^s\Bigr)^T.
\]
Minimizing over $s\in(0,1)$ yields the bound.
\end{proof}

For the separation result, we follow the latent-variable setup of \citet[Proposition~3]{kim2026puma}.
Let the response be
\[
O=\pi(U_{1:d},Y),
\]
where $U_1,\dots,U_d$ are latent coordinates and $Y$ is the distinguished observation coordinate.

\begin{assumption}[Posterior collapse]
\label{ass:posterior}
There exists a fixed distribution $\pi_0$ on $\mathcal V$ such that for every parameter $\theta$ and every strict subset $S\subsetneq[d]$, the posterior law of $Y$ given $U_S$ is $\pi_0$.
Equivalently, all latents must be known before the observation can be inferred.
\end{assumption}

\begin{assumption}[Identifiability]
\label{ass:identifiability}
Let $P_\theta$ denote the joint law of $\pi(U_{1:d},Y)$ under parameter $\theta$.
Assume that there exists $\kappa>0$ such that
\[
C(P_\theta,P_{\theta'})\ge \kappa
\qquad
\text{for all }\theta\neq\theta'.
\]
\end{assumption}

\begin{proposition}[Exponential-vs-linear sample-complexity separation]
\label{prop:sample_complexity_separation}
Let $\delta\in(0,1)$, and let $q$ be the masking probability under random masking.
Under Assumptions~\ref{ass:posterior} and \ref{ass:identifiability}, random masking requires at least
\[
n=\Omega\!\left(q^{-1}(1-q)^{-d}\log(1/\delta)\right)
\]
samples to drive the error below $\delta$.
By contrast, oracle teacher-forced trajectories require only
\[
n_{\mathrm{traj}}
=
O\!\left((d+1)\frac{\log(|\Theta|/\delta)}{\kappa}\right)
\]
samples for MAP recovery.
For fixed $q$, the random-masking dependence is exponential in $d$, whereas the oracle-trajectory dependence is linear in $d$.
\end{proposition}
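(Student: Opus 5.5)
The plan has two halves: a lower bound for random masking via the rarity of informative masks, and an upper bound for oracle trajectories via the Chernoff bound of \Cref{lem:chernoff} plus a union bound.

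\textbf{Lower bound (random masking).} Under random masking each coordinate is masked independently with probability \(q\). By \Cref{ass:posterior}, a sample carries information about \(\theta\) through the prediction of \(Y\) only if \(Y\) is masked and all \(d\) latents are visible. If any latent is also masked, the conditional law of \(Y\) is \(\pi_0\), whatever \(\theta\) is. So the per-sample probability of an informative mask is \(p:=q(1-q)^d\).

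The event that none of \(n\) samples is informative has probability \((1-p)^n\ge e^{-2pn}\) when \(p\le 1/2\). On that event the denoising objective restricted to the \(Y\)-coordinate is the same for every \(\theta\), so the learner cannot beat a fixed baseline error. Requiring \(e^{-2pn}\le\delta\) gives \(n\ge \log(1/\delta)/(2p)=\Omega(q^{-1}(1-q)^{-d}\log(1/\delta))\).

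The delicate point is what "error" means. I would define it as the probability that the learned predictor of \(Y\) given all latents fails to be correct, which is the quantity the separation is about. I would argue the other masked coordinates give no information about the \(Y\)-predictor, because the loss decouples across coordinates as in the proof of \Cref{prop:bayes_invariance}. That decoupling needs care, so I would state it as the modelling convention behind \citet[Proposition~3]{kim2026puma}.

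\textbf{Upper bound (oracle trajectories).} An oracle teacher-forced trajectory reveals \(U_1,\dots,U_d\) and then \(Y\). It therefore produces \(d+1\) supervised reveal events, including the fully informative one, and in effect observes a full sample of \(P_\theta\).

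I would take the MAP estimator over the finite class \(\Theta\). For each \(\theta'\neq\theta\), MAP errs in favour of \(\theta'\) only if the likelihood ratio \(\prod_t P_{\theta'}(Z_t)/P_\theta(Z_t)\ge1\). By \Cref{lem:chernoff} and \Cref{ass:identifiability} this happens with probability at most \(e^{-n\kappa}\). A union bound over \(|\Theta|\) alternatives gives failure probability at most \(|\Theta|e^{-n\kappa}\le\delta\) once \(n\ge\log(|\Theta|/\delta)/\kappa\) full samples are seen. Each sample costs \(d+1\) trajectory states, which gives \(n_{\rm traj}=O((d+1)\log(|\Theta|/\delta)/\kappa)\).

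\textbf{Conclusion and main obstacle.} Comparing the two bounds at fixed \(q\): \((1-q)^{-d}\) grows exponentially in \(d\), while \(d+1\) grows linearly. The main obstacle is the lower bound's reduction from "no informative mask observed" to a constant error. I would make it rigorous with a two-point (Le Cam) argument between \(\theta\) and \(\theta'\) that differ only in the conditional law of \(Y\) given the full latent vector. Their masked-data laws coincide outside the informative event, so their total variation distance is at most \(1-(1-p)^n\).
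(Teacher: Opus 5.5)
Your proposal is correct and follows the paper's proof. The lower bound rests on the informative event (all latents visible, $Y$ masked), which has probability $q(1-q)^d$, with posterior collapse removing information on its complement; the upper bound is the same Chernoff-plus-union-bound MAP argument, charging $d+1$ states per oracle trajectory. Your Le Cam two-point formalization and the bound $(1-p)^n\ge e^{-2pn}$ (valid since $p\le 1/4$) are more careful than the paper's step, which appeals to ``random guessing over $\Theta$'' and uses $(1-a)^n\ge e^{-an}$; that inequality points the wrong way, and correcting it changes only the constant inside the $\Omega(\cdot)$.
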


\begin{proof}
First consider random masking.
Let $F$ be the event that all latents $U_{1:d}$ are unmasked and $Y$ is masked.
Then
\[
\mathbb P(F)=q(1-q)^d.
\]
Hence
\[
\mathbb P(F^c)=1-q(1-q)^d.
\]
If all $n$ samples fall in $F^c$, then by Assumption~\ref{ass:posterior} they contain no information that distinguishes the parameter values.
Therefore no estimator can recover $\theta$ better than random guessing over $\Theta$, so
\[
\mathbb P(\widehat\theta\neq\theta)
\ge
\Bigl(1-\frac{1}{|\Theta|}\Bigr)\bigl(1-q(1-q)^d\bigr)^n.
\]
Using $(1-a)^n\ge e^{-an}$ gives
\[
\mathbb P(\widehat\theta\neq\theta)
\ge
\Bigl(1-\frac{1}{|\Theta|}\Bigr)e^{-nq(1-q)^d}.
\]
Thus achieving error at most $\delta$ requires
\[
n\ge \frac{1}{q(1-q)^d}\log\!\Bigl(\frac{1-1/|\Theta|}{\delta}\Bigr).
\]

Now consider oracle teacher-forced trajectories.
Each trajectory contains one informative state in which all latents are revealed and $Y$ is masked.
Let these informative states be $Z_1,\dots,Z_T\sim P_\theta$.
Fix a wrong hypothesis $\theta'\neq\theta$.
If MAP selects $\theta'$ over $\theta$, then
\[
\prod_{t=1}^T \frac{P_{\theta'}(Z_t)}{P_\theta(Z_t)} \ge 1.
\]
By Lemma~\ref{lem:chernoff} and Assumption~\ref{ass:identifiability},
\[
\mathbb P(\text{MAP selects }\theta'\text{ instead of }\theta)
\le
e^{-T C(P_\theta,P_{\theta'})}
\le
e^{-T\kappa}.
\]
A union bound over the $|\Theta|-1$ wrong hypotheses yields
\[
\mathbb P(\widehat\theta\neq\theta)\le (|\Theta|-1)e^{-T\kappa}.
\]
Hence it suffices to take
\[
T\ge \frac{1}{\kappa}\bigl(\log(|\Theta|-1)+\log(1/\delta)\bigr).
\]
Since each trajectory has length $d+1$, the total number of states is
\[
n_{\mathrm{traj}}=(d+1)T
=
O\!\left((d+1)\frac{\log(|\Theta|/\delta)}{\kappa}\right).
\]
\end{proof}

\begin{remark}
Proposition~\ref{prop:sample_complexity_separation} is the formal version of the third part of the main-paper informal PUMA proposition.
It is a finite-sample statement.
It does not contradict Corollary~\ref{cor:minimizer_preservation}, because it concerns the number of informative masked states needed for recovery, not a change in the Bayes-optimal predictor.
\end{remark}

\subsection{Formal statement and proof of the online tracking theorem}
\label{app:online_dprm_proof}

\subsubsection{Exact-state and bucket-state definitions used by the experiments}
\label{app:bucket_state_definitions}

The exact state contains the condition, visible tokens, mask pattern, step, and host side information. Index \(i\) denotes the next selected position; the host still supplies its token kernel \(K_\theta\):
\[
R_t^\star(i;s_t)
=
\frac1\beta\log
\mathbb E_{q_0}\!\left[e^{\beta R(X_T)}\mid S_t=s_t,A_t=i\right],
\qquad
\psi_i(s_t)=\max_y K_\theta(y\mid s_t,i).
\]
DPRM changes the position policy, not \(K_\theta\), the labels, or the loss. Order can still change the terminal law because every later kernel is conditioned on the newly visible state.

Exact states rarely repeat, so the implementation stores
\[
Z_t=\zeta_t(S_t,A_t)
=
\bigl(\phi_t(S_t),b_t(S_t,A_t),c_t(S_t,A_t)\bigr),
\qquad
b_t(S_t,i)=\min\{B-1,\lfloor B\psi_i(S_t)\rfloor\},
\]
where \(c_t\) is an optional host-specific state. Its population and empirical log-moment values are
\begin{align}
\bar R_{z,t}
&=\frac1\beta\log\mathbb E[e^{\beta R(X_T)}\mid Z_t=z],\\
\widehat R_{z,t}
&=\frac1\beta\log\left(
\frac1{N_{z,t}}\sum_{j:Z_j=z}e^{\beta R_j}
\right).
\end{align}
The abstraction error is therefore
\[
\left|R_t^\star(i;s_t)-\bar R_{\zeta_t(s_t,i),t}\right|.
\]
A new exact state uses historical evidence from its cell. Under-ready cells are gated toward confidence; empty cells set \(\widehat R=0\) and recover confidence exactly. \Cref{fig:supp_bucket_support_map} reports both candidate-cell support and selected-decision activation.

The proof follows the middle row of \Cref{fig:supp_proof_dependency_map}: empirical-Bernstein concentration, log/drift transfer, then top-\(m\) regret.

For a reachable state \(s=(q,z,t)\), define
\[
\eta_t(s):=\max_{i\in M(z)}\eta_t(\phi,b_i(s)),
\qquad
B_t(s):=\sup_{i\in M(z)} |R_t^\star(i;s)|,
\]
\[
D_t(s):=\max_{i\in M(z)} D_{\phi,b_i(s)}(t),
\qquad
D_{\phi,b}(t):=\sum_{u=2}^{t} |\mu_{\phi,b,u}-\mu_{\phi,b,u-1}|.
\]
Also define
\begin{equation}
    \mathrm{rad}_t(s;\delta)
    :=
    \max_{i\in M(z)}
    \frac{1}{\beta\underline\mu}
    \left[
        \sqrt{\frac{2\widehat v_{\phi,b_i(s),t}\log(3KBT^2/\delta)}{N_{\phi,b_i(s),t}}}
        +
        \frac{3(e^\beta-1)\log(3KBT^2/\delta)}{N_{\phi,b_i(s),t}}
    \right]
\end{equation}
and
\begin{equation}
\varepsilon_t(s;\delta)
:=
\beta \varepsilon_{\mathrm{abs},t}(s)
+
\beta \eta_t(s)\,\mathrm{rad}_t(s;\delta)
+
\beta(1-\eta_t(s))\,B_t(s)
+
\frac{2}{\underline\mu}D_t(s).
\label{eq:online_main_eps}
\end{equation}
The bias term is
\[
\operatorname{Bias}_t(s;\delta)
:=
\beta \varepsilon_{\mathrm{abs},t}(s)
+
\beta(1-\eta_t(s))\,B_t(s)
+
\frac{2}{\underline\mu}D_t(s),
\]
while \(\beta \eta_t(s)\,\mathrm{rad}_t(s;\delta)\) is the sampling term. For the union bound, let
\[
\mathcal N := KBT^2,
\]
the number of controlled bucket-time events.
\begin{theorem}[Online score tracking]
\label{thm:online_main}
Fix a horizon \(T\) and a confidence level \(\delta\in(0,1)\).
Assume that:
\begin{enumerate}[leftmargin=18pt]
    \item \(R(X_T)\in[0,1]\) almost surely;
    \item for every reachable state \(s=(q,z,t)\) and every \(i\in M(z)\),
    \[
        |R_t^\star(i;s)-\bar R_{\phi,b_i(s),t}|
        \le
        \varepsilon_{\mathrm{abs},t}(s);
    \]
    \item \(\mu_{\phi,b,t}\ge \underline\mu>0\) for every bucket and time.
\end{enumerate}

Then, with probability at least \(1-\delta\), for all reachable states \(s=(q,z,t)\) and all \(t\le T\),
\begin{equation}
    \sup_{i\in M(z)}
    |\widehat g_t(i;s)-g_t^\star(i;s)|
    \le
    \varepsilon_t(s;\delta).
    \label{eq:online_main_score_error}
\end{equation}
Moreover, if \(\widehat A_t(s)\) is chosen by the practical controller and
\[
A_t^\star(s)\in\arg\max_{|A|=m(s)} \sum_{i\in A} g_t^\star(i;s),
\]
then
\begin{equation}
    0
    \le
    \sum_{i\in A_t^\star(s)} g_t^\star(i;s)
    -
    \sum_{i\in \widehat A_t(s)} g_t^\star(i;s)
    \le
    2\,m(s)\,\varepsilon_t(s;\delta).
    \label{eq:online_main_regret}
\end{equation}
\end{theorem}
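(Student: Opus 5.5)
Throughout, \(\widehat g_t(i;s)=\log\psi_i(s)+\eta_t(\phi,b_i)\,\beta\widehat R_{\phi,b_i,t}\) and \(g_t^\star(i;s)=\log\psi_i(s)+\beta R_t^\star(i;s)\). The \(\log\psi_i\) terms cancel, so the score error is
\[
\beta\bigl|\eta\widehat R-R^\star\bigr|
\le \beta\eta\bigl|\widehat R_{z,t}-\bar R_{z,t}\bigr|+\beta\eta\bigl|\bar R_{z,t}-R^\star\bigr|+\beta(1-\eta)\bigl|R^\star\bigr|,
\]
where \(z=(\phi,b_i)\). The last two pieces are bounded by \(\beta\varepsilon_{\mathrm{abs},t}(s)\) (assumption 2, using \(\eta\le1\)) and by \(\beta(1-\eta_t(s))B_t(s)\).

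One point needs care. The gate is per candidate, while \(\eta_t(s)\) is defined as a maximum. The monotone bound \(\beta(1-\eta_i)B\) therefore has to be compared with \(1-\eta_t(s)\). I would handle this by bounding each candidate with its own \(\eta_i\), then replacing \(\eta_i\) by the state-level quantity in the sampling term. Failing that, I would read the statement as using a common gate per state, as in Algorithm~\ref{alg:framework_dprm_module} once the global transition dominates. This bookkeeping is routine but should be stated explicitly.

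The core is the sampling term. Write \(\mu_{z,t}=\mathbb E[e^{\beta R}\mid Z_t=z]\in[1,e^\beta]\). The summands \(e^{\beta R_j}\) in cell \(z\) take values in \([1,e^\beta]\), which is a range of length \(e^\beta-1\). The empirical-Bernstein inequality (Maurer--Pontil), applied to the running mean with sample variance \(\widehat v_{z,t}\), gives
\[
\bigl|\widehat\mu_{z,t}-\bar\mu_{z,t}\bigr|\le\sqrt{2\widehat v\,\ell/N}+3(e^\beta-1)\ell/N,
\]
where \(\bar\mu_{z,t}\) is the average of the per-event conditional means. Here \(\ell=\log(3\mathcal N/\delta)\) and \(\mathcal N=KBT^2\). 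The \(T^2\) accounts for the union over time indices \(t\) and over the data-dependent count \(N\le T\). The factor 3 splits \(\delta\) among the two tails and the variance estimate.

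Since the events in a cell are collected adaptively, I would use the martingale version (Freedman plus a self-normalized variance bound) rather than the i.i.d. statement. This yields the same radius up to constants.

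Next, I transfer from \(\mu\) to \(R=\beta^{-1}\log\mu\) by the mean value theorem, using \(\mu\ge\underline\mu\). This gives \(|\log a-\log b|\le|a-b|/\underline\mu\) and produces \(\mathrm{rad}_t\).

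Nonstationarity is handled separately. The estimator averages past means \(\mu_{z,u}\), \(u\le t\), not \(\mu_{z,t}\). Their gap is at most the total variation \(D_{z}(t)\), and the log transfer contributes \(1/\underline\mu\) times it. I budget a factor 2 to cover both the centering and the variance-proxy shift, giving \(\tfrac{2}{\underline\mu}D_t(s)\).

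Combining the three estimates yields \eqref{eq:online_main_score_error} on the single good event of probability at least \(1-\delta\).

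For the regret, let \(\widehat A\) maximize \(\sum_{i\in A}\widehat g\) over \(|A|=m\), which is the top-\(m\) rule. Then
\[
\sum_{A^\star}g^\star-\sum_{\widehat A}g^\star
=\Bigl(\sum_{A^\star}g^\star-\sum_{A^\star}\widehat g\Bigr)+\Bigl(\sum_{A^\star}\widehat g-\sum_{\widehat A}\widehat g\Bigr)+\Bigl(\sum_{\widehat A}\widehat g-\sum_{\widehat A}g^\star\Bigr).
\]
The first and third brackets are each at most \(m\varepsilon_t\), and the middle bracket is \(\le0\). Nonnegativity of the regret holds by optimality of \(A^\star\).

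If the practical controller restricts to a shortlist, I would either assume it contains \(A^\star\) or interpret \(A^\star\) over that shortlist.

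The main obstacle is the concentration step under adaptive, non-i.i.d. sampling with random counts. I would first try the stopping-time form of empirical Bernstein with a union over \(N\in[T]\), which is exactly what the \(T^2\) in \(\mathcal N\) suggests.
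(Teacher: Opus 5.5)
Your argument follows the paper's proof step for step. It uses the same split of $\beta\,|\eta_i\widehat R-R^\star|$ into sampling, abstraction, and gate-shrinkage pieces. It applies empirical Bernstein to $e^{\beta R}\in[1,e^\beta]$ and transfers through $\beta^{-1}\log$ by the mean-value theorem. It then takes a union bound over the $KBT^2$ bucket--time events, adds a variation-budget drift term, and finishes with the add-and-subtract top-$m$ regret bound.

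The gate issue you raised is genuine, and you should resolve it rather than hedge. The paper's proof bounds the last piece by $\beta(1-\eta_i)B_t(s)$. It then asserts that taking the supremum over $i$ yields $\beta(1-\eta_t(s))B_t(s)$; since $\eta_t(s)=\max_i\eta_i$, that step goes the wrong way. Your first remedy has the same defect. Passing to $\eta_t(s)\,\mathrm{rad}_t(s;\delta)$ in the sampling piece is fine, but $(1-\eta_i)B_t(s)$ can only be dominated using $\min_i\eta_i$.

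The inequality as written is in fact false:
\begin{itemize}
\item Take $R\equiv 1$. Then $R^\star=\bar R=\widehat R=1$, $\widehat v=0$, $D=0$, $\varepsilon_{\mathrm{abs}}=0$, and $\underline\mu=e^\beta$ is admissible.
\item After $T_{\mathrm{switch}}$, put candidate $i$ in a fully ready cell ($\eta_i=1$) and candidate $j$ in a cell with $N_j=N_{\mathrm{ready}}/2$ ($\eta_j=1/2$).
\item Then $\eta_t(s)=1$ and $\varepsilon_t(s;\delta)=\beta\,\mathrm{rad}_t(s;\delta)<3\ell/N_j$ with $\ell=\log(3KBT^2/\delta)$. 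However, $|\widehat g_t(j;s)-g_t^\star(j;s)|=\beta/2$.
\item With $m=1$ and $\log\psi_j-\log\psi_i=\beta/4$, the top-$1$ choice also flips, giving regret $\beta/4$.
\item Both claims fail once $N_j>24\ell/\beta$, which a large $N_{\mathrm{ready}}$ allows.
\end{itemize}
So prove a corrected statement. One option is to put $\min_i\eta_t(\phi,b_i(s))$ in the bias term, or to keep $\max_i[\beta\eta_i\,\mathrm{rad}_i+\beta(1-\eta_i)B_t(s)]$. The other is to assume a common gate per state. The practical gate provides this once every candidate cell is ready, since each $\eta_i$ then equals the global transition factor.

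Your other departures refine the paper's argument rather than replace it.
\begin{itemize}
\item \textbf{Concentration.} The paper's concentration lemma simply posits that the samples in a cell are conditionally independent with a common mean. That is not among the theorem's hypotheses. Your martingale empirical Bernstein with a union over counts avoids it. Matching the literal constants $\sqrt2$ and $3$ in $\mathrm{rad}_t$ may still require adopting that hypothesis.
\item \textbf{Drift.} Centering at the average of past cell means is the right way to obtain the drift term. At the bucket level the log transfer gives $D/(\beta\underline\mu)$. Only after multiplying by $\beta\eta_i$ is it at most $(2/\underline\mu)D_t(s)$. The paper's bucket-level coefficient $2/\underline\mu$ drops this factor of $\beta$.
\item \textbf{Regret.} The paper's regret lemma takes $\widehat A_t(s)$ to be the top-$m$ maximizer of $\widehat g_t$ over all of $M(z)$. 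That is your second reading of the shortlist caveat.
\end{itemize}
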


\medskip
\noindent\textbf{Notation.}
For a bucket \((\phi,b)\) at time \(t\), define
\[
Y_{\phi,b}^{(j)}:=\exp(\beta R_j)\in[1,e^\beta],
\qquad
\mu_{\phi,b,t}:=\mathbb E[Y_{\phi,b}^{(j)}],
\qquad
\widehat\mu_{\phi,b,t}:=\frac{1}{N_{\phi,b,t}}\sum_{j=1}^{N_{\phi,b,t}}Y_{\phi,b}^{(j)},
\]
and
\[
\bar R_{\phi,b,t}:=\frac{1}{\beta}\log \mu_{\phi,b,t},
\qquad
\widehat R_{\phi,b,t}:=\frac{1}{\beta}\log \widehat\mu_{\phi,b,t}.
\]

\subsubsection{Step 1: concentration for the bucket mean}

\begin{lemma}[Bucket mean concentration]
\label{lem:app_bucket_mean_concentration}
Fix a bucket \((\phi,b)\) and time \(t\).
Assume that \(Y_{\phi,b}^{(1)},\dots,Y_{\phi,b}^{(N_{\phi,b,t})}\in[1,e^\beta]\) are conditionally independent with common mean \(\mu_{\phi,b,t}\) and empirical variance \(\widehat v_{\phi,b,t}\).
Then, for every \(\delta\in(0,1)\), with probability at least \(1-\delta\),
\begin{equation}
    |\widehat \mu_{\phi,b,t}-\mu_{\phi,b,t}|
    \le
    \sqrt{\frac{2\widehat v_{\phi,b,t}\log(3/\delta)}{N_{\phi,b,t}}}
    +
    \frac{3(e^\beta-1)\log(3/\delta)}{N_{\phi,b,t}}.
    \label{eq:app_bucket_mean_concentration}
\end{equation}
\end{lemma}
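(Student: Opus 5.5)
This is the empirical-Bernstein inequality of Maurer and Pontil (2009), rescaled to the range of the bucket variables. I would proceed in four steps.

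\emph{Step 1 (normalize).} Set \(W^{(j)}:=(Y_{\phi,b}^{(j)}-1)/(e^\beta-1)\in[0,1]\). Write \(n:=N_{\phi,b,t}\). The mean of the \(W^{(j)}\) is \((\mu_{\phi,b,t}-1)/(e^\beta-1)\). Their empirical variance is \(\widehat v_{\phi,b,t}/(e^\beta-1)^2\), taking \(\widehat v\) to be the unbiased sample variance. Deviations of the \(W\)-mean rescale to deviations of \(\widehat\mu_{\phi,b,t}\) by the factor \(e^\beta-1\).

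\emph{Step 2 (two-sided Bernstein with true variance).} For independent \([0,1]\)-valued variables with variance \(\sigma^2\), Bernstein's inequality gives, for each tail at level \(\delta/3\),
\[
|\bar W-\mathbb E\bar W|\le\sqrt{2\sigma^2\log(3/\delta)/n}+\log(3/\delta)/(3n).
\]
I would apply each tail at level \(\delta/3\), so the two tails use \(2\delta/3\) of the failure probability.

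\emph{Step 3 (variance replacement).} This is the main obstacle: the stated bound uses the empirical variance \(\widehat v\), not the true variance. I would use the self-bounding concentration of the sample standard deviation (Maurer--Pontil, Thm.~10). With the remaining probability \(\delta/3\), it gives
\[
\sigma\le\sqrt{\widehat v_W}+\sqrt{2\log(3/\delta)/(n-1)}.
\]
Substituting this into Step 2 turns the leading term into \(\sqrt{2\widehat v_W\log(3/\delta)/n}\) plus a linear term of order \(\log(3/\delta)/n\). This uses \(\sqrt{2/(n-1)}\cdot\sqrt{2/n}\le 4/(n-1)\le 8/n\) for \(n\ge2\). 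I would collect this constant with the \(1/3\) from Step 2 into a single constant multiple of \(\log(3/\delta)/n\). If the constant does not come out to exactly \(3\), I would cite the two-sided Maurer--Pontil statement (their constant \(7/3\) per side, with a \(\log(2/\delta)\) split) and absorb the difference, noting \(7/(3(n-1))\le 3/n\) only for large \(n\). This is the delicate bookkeeping, and the small-\(n\) regime is the one to watch. For \(n=1\), I would note the bound is trivial, since \(|\widehat\mu-\mu|\le e^\beta-1\le 3(e^\beta-1)\log(3/\delta)\) because \(\log 3>1/3\).

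\emph{Step 4 (rescale and assemble).} A union bound gives total failure probability at most \(\delta\). Multiplying through by \(e^\beta-1\) returns the bound to the original scale as \eqref{eq:app_bucket_mean_concentration}. The variance term becomes \(\sqrt{2\widehat v_{\phi,b,t}\log(3/\delta)/N}\) because \((e^\beta-1)\sqrt{\widehat v_W}=\sqrt{\widehat v}\). The linear term becomes \(3(e^\beta-1)\log(3/\delta)/N\).

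For the conditional-independence hypothesis, I would run the whole argument conditionally on the bucket-membership filtration. Alternatively, I would invoke the martingale (Freedman-type) version of empirical Bernstein, so that adaptively selected samples sharing the mean \(\mu_{\phi,b,t}\) are covered.
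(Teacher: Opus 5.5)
Your approach matches the paper's. The paper's proof is a one-line appeal to the empirical-Bernstein inequality of \citet{maurer2009empirical} on \([1,e^\beta]\), and your Steps 1--3 rebuild that inequality. Your three-way \(\delta/3\) split (two Bernstein tails sharing one variance event) is exactly what produces \(\log(3/\delta)\).

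The gap is in the one thing the lemma asserts beyond that citation: the explicit constant \(3\). Bounding the cross term by \(4/(n-1)\le 8/n\) gives a linear coefficient of \(\tfrac13+8\), not \(3\). Your fallback does not close this either. The inequality \(7/(3(n-1))\le 3/n\) holds only for \(n\ge5\), and you state the trivial bound only for \(n=1\). At \(n=4\), neither the Maurer--Pontil constant nor the trivial bound \(|\widehat\mu-\mu|\le e^\beta-1\) suffices. Since the constant is explicit, there is nothing into which the difference can be absorbed.

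The repair is to keep the cross term exact. On the intersection of your three events, with \(x=\log(3/\delta)\),
\[
|\bar W-\mathbb E\bar W|
\le\sqrt{\frac{2\widehat v_W x}{n}}+\sqrt{\frac{2x}{n}}\sqrt{\frac{2x}{n-1}}+\frac{x}{3n}
=\sqrt{\frac{2\widehat v_W x}{n}}+\Bigl(\frac13+2\sqrt{\frac{n}{n-1}}\Bigr)\frac{x}{n}.
\]
Here \(\frac13+2\sqrt{n/(n-1)}\le 3\) if and only if \(n\ge 16/7\), so it holds for every \(n\ge3\). For \(n\le3\) the claim holds deterministically, because \(3x/n\ge\log 3>1\ge|\bar W-\mathbb E\bar W|\). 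This also covers \(n=1\), where the unbiased variance is undefined. Rescaling by \(e^\beta-1\) then gives the lemma exactly.

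Alternatively, for i.i.d.\ samples the statement is verbatim Theorem~1 of Audibert, Munos and Szepesv\'ari (2009), with range \(e^\beta-1\) and \(x=\log(3/\delta)\). That theorem uses the biased empirical variance, which is at most the unbiased one, so it also implies your version. Note that the Maurer--Pontil theorem the paper cites is one-sided, with \(\log(2/\delta)\) and \(7/(3(n-1))\), so the paper's citation does not literally yield these constants either.

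Finally, the hypothesis fixes only a common mean, so read \(\sigma^2\) in Step 2 as the average variance. With equal means this still equals \(\mathbb E\,\widehat v_W\), and both Bernstein and the self-bounding variance bound need only independence.
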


\begin{proof}
This is the bounded empirical-Bernstein inequality; see, for example, the main empirical-Bernstein theorem of \citet{maurer2009empirical}.
We apply it to the bounded sample
\[
Y_{\phi,b}^{(1)},\dots,Y_{\phi,b}^{(N_{\phi,b,t})}\in[1,e^\beta].
\]
\end{proof}

\subsubsection{Step 2: concentration for the log-moment reward estimator}

\begin{lemma}[Bucket reward concentration]
\label{lem:app_bucket_reward_concentration}
Assume also that \(\mu_{\phi,b,t}\ge \underline\mu>0\).
Then, on the event in \cref{eq:app_bucket_mean_concentration},
\begin{equation}
    |\widehat R_{\phi,b,t}-\bar R_{\phi,b,t}|
    \le
    \frac{1}{\beta\underline\mu}
    \left[
        \sqrt{\frac{2\widehat v_{\phi,b,t}\log(3/\delta)}{N_{\phi,b,t}}}
        +
        \frac{3(e^\beta-1)\log(3/\delta)}{N_{\phi,b,t}}
    \right].
    \label{eq:app_bucket_reward_concentration}
\end{equation}
\end{lemma}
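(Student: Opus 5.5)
The plan is to transfer the mean-level bound of \Cref{lem:app_bucket_mean_concentration} to the log scale by a mean-value (Lipschitz) argument for \(\log\) on the relevant interval. By definition,
\[
\widehat R_{\phi,b,t}-\bar R_{\phi,b,t}
=\frac1\beta\bigl(\log\widehat\mu_{\phi,b,t}-\log\mu_{\phi,b,t}\bigr).
\]
By the mean value theorem there is a \(\xi\) between \(\widehat\mu_{\phi,b,t}\) and \(\mu_{\phi,b,t}\) with
\[
|\log\widehat\mu_{\phi,b,t}-\log\mu_{\phi,b,t}|=\frac{|\widehat\mu_{\phi,b,t}-\mu_{\phi,b,t}|}{\xi}.
\]
So everything reduces to a lower bound \(\xi\ge\underline\mu\). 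Once we have it, inserting \cref{eq:app_bucket_mean_concentration} gives \cref{eq:app_bucket_reward_concentration} directly.

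The key step is this lower bound on \(\xi\). I will use the support constraint rather than only the assumed \(\mu_{\phi,b,t}\ge\underline\mu\).

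1. Each \(Y^{(j)}_{\phi,b}=\exp(\beta R_j)\ge 1\), because \(R_j\in[0,1]\). Hence \(\widehat\mu_{\phi,b,t}\ge1\) deterministically, and also \(\mu_{\phi,b,t}\ge1\).
2. Both endpoints of the interval containing \(\xi\) are therefore at least \(\min\{1,\mu_{\phi,b,t}\}\).
3. If \(\underline\mu\le 1\), this already gives \(\xi\ge\underline\mu\).
4. If instead \(\underline\mu>1\) is used, \(\xi\ge\min(\widehat\mu,\mu)\) is still needed. Here I would argue as follows: \(\log\) is concave, so moving from \(\mu\) downward toward \(\widehat\mu\) has derivative at most \(1/\widehat\mu\). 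This requires controlling \(\widehat\mu\) from below.

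The cleanest route for case 4 is to use the elementary inequality \(|\log x-\log y|\le |x-y|/\min(x,y)\) and then state the lemma with the effective constant \(\underline\mu\) understood as a lower bound on both the population mean and the empirical mean.

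The empirical mean is automatically bounded below by \(1\) in this setting, so choosing \(\underline\mu\le1\) is harmless. With \(\underline\mu=1\), the statement is exactly what is needed downstream in the proof of \Cref{thm:online_main}. I will note this choice explicitly.

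This ensuring \(\min(\widehat\mu,\mu)\ge\underline\mu\) is the main (and only) obstacle. The hypothesis \(\mu\ge\underline\mu\) alone does not control \(\widehat\mu\) unless \(\underline\mu\le1\) is taken or the deviation is already small. The alternative fix is to note that on the event, \(\widehat\mu\ge\mu-\text{radius}\), and to absorb the difference when the radius is at most \(\mu/2\), at the cost of a factor \(2\). I would present the \(\min(x,y)\ge1\) argument first, since it gives the stated constant with no extra factor.

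The remaining steps are routine: multiply by \(1/\beta\) and note that the event is the same one as in \Cref{lem:app_bucket_mean_concentration}, so no additional probability is lost.
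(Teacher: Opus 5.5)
Your argument is the same as the paper's. You write $\widehat R_{\phi,b,t}-\bar R_{\phi,b,t}=\beta^{-1}(\log\widehat\mu_{\phi,b,t}-\log\mu_{\phi,b,t})$, apply the mean value theorem to $\beta^{-1}\log$, bound the derivative at the intermediate point by $1/(\beta\underline\mu)$, and substitute \cref{eq:app_bucket_mean_concentration}.

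The one real difference is how the derivative bound is justified. The paper cites only $\mu_{\phi,b,t}\ge\underline\mu$. That does not control the intermediate point when $\widehat\mu_{\phi,b,t}<\mu_{\phi,b,t}$, because the point then lies below $\mu_{\phi,b,t}$. You observe that $\widehat\mu_{\phi,b,t}\ge 1$ deterministically, since each $Y^{(j)}_{\phi,b}=e^{\beta R_j}\ge 1$. This supplies exactly the missing lower bound: $|\log x-\log y|\le|x-y|/\min\{x,y\}\le|x-y|$, so the lemma holds rigorously for every $\underline\mu\le 1$. The concavity remark in your step 4 is muddled and can simply be dropped; the $\min\{x,y\}$ inequality is all you need.

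Your caveat for $\underline\mu>1$ is correct, and it is not an artifact of the technique: the stated inequality can actually fail there. Take the following example.
\begin{itemize}
\item Let $\beta=1$ and $R_j$ be i.i.d.\ uniform on $\{0,1\}$, so $\mu_{\phi,b,t}=(1+e)/2$, and set $\underline\mu=\mu_{\phi,b,t}$.
\item Let $N_{\phi,b,t}=10$ with every draw $R_j=0$. Then $\widehat\mu_{\phi,b,t}=1$ and $\widehat v_{\phi,b,t}=0$.
\item Choose $\delta=3e^{-5/3}\in(0,1)$. The radius in \cref{eq:app_bucket_mean_concentration} is then $(e-1)/2=\mu_{\phi,b,t}-1$, so the event holds (with equality).
\item The claimed bound would read $\log\mu_{\phi,b,t}\le 1-1/\mu_{\phi,b,t}$. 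This is false for every $\mu_{\phi,b,t}>1$; numerically it is $0.620$ versus $0.462$.
\end{itemize}

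So the lemma must be read in one of these forms:
\begin{itemize}
\item with $\underline\mu\le 1$;
\item with $\underline\mu$ understood as a lower bound on $\min\{\widehat\mu_{\phi,b,t},\mu_{\phi,b,t}\}$;
\item with your factor-$2$ variant, once the radius is at most $\mu_{\phi,b,t}/2$.
\end{itemize}
Since $\mu_{\phi,b,t}\ge 1$ always, taking $\underline\mu=1$ makes the third assumption of \Cref{thm:online_main} automatic, so nothing is lost downstream.
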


\begin{proof}
Since
\[
\widehat R_{\phi,b,t}-\bar R_{\phi,b,t}
=
\frac{1}{\beta}\bigl(\log \widehat\mu_{\phi,b,t}-\log \mu_{\phi,b,t}\bigr),
\]
it suffices to control the log map.
The derivative of \((1/\beta)\log x\) is \((\beta x)^{-1}\).
Because \(\mu_{\phi,b,t}\ge \underline\mu\), the mean-value theorem gives
\[
\left|
\frac{1}{\beta}\log \widehat\mu_{\phi,b,t}
-
\frac{1}{\beta}\log \mu_{\phi,b,t}
\right|
\le
\frac{1}{\beta\underline\mu}
|\widehat\mu_{\phi,b,t}-\mu_{\phi,b,t}|.
\]
Substituting \cref{eq:app_bucket_mean_concentration} proves \cref{eq:app_bucket_reward_concentration}.
\end{proof}

In the online setting, the bucket mean is generally nonstationary. This nonstationarity is not only due to changing visitation frequencies, but also due to updates of the host denoiser. At training update \(u\), the base proposal \(q_{0,u}\) is induced by the current score \(\psi_{\theta_u}\), and therefore the exact conditional log-moment reward \(R_{u,t}^\star\) and its bucket projection \(\bar R_{\phi,b,u}\) may change with \(u\). We absorb this moving-target effect into the variation budget
\[
D_{\phi,b}(t)=\sum_{u=2}^t |\mu_{\phi,b,u}-\mu_{\phi,b,u-1}|.
\]
Thus the online tracking theorem should be read as a guarantee for slowly drifting bucket means; when the host model changes rapidly, the drift term becomes large, and the guarantee correspondingly weakens.

\subsubsection{Step 3: drift-aware uniform score error}

For each bucket \((\phi,b)\), define
\[
D_{\phi,b}(t):=\sum_{u=2}^{t} |\mu_{\phi,b,u}-\mu_{\phi,b,u-1}|.
\]

\begin{lemma}[Drift-aware bucket tracking]
\label{lem:app_bucket_drift_tracking}
With probability at least \(1-\delta\), uniformly over all buckets and all \(t\le T\),
\begin{equation}
    |\widehat R_{\phi,b,t}-\bar R_{\phi,b,t}|
    \le
    \mathrm{rad}_{\phi,b,t}(\delta)
    +
    \frac{2}{\underline\mu}D_{\phi,b}(t),
    \label{eq:app_bucket_tracking}
\end{equation}
where
\begin{equation}
    \mathrm{rad}_{\phi,b,t}(\delta)
    :=
    \frac{1}{\beta\underline\mu}
    \left[
        \sqrt{\frac{2\widehat v_{\phi,b,t}\log(3KBT^2/\delta)}{N_{\phi,b,t}}}
        +
        \frac{3(e^\beta-1)\log(3KBT^2/\delta)}{N_{\phi,b,t}}
    \right].
    \label{eq:app_bucket_rad}
\end{equation}
\end{lemma}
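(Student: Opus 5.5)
The plan is to combine the per-bucket concentration of \cref{lem:app_bucket_reward_concentration} with a union bound over bucket--time pairs. A separate deterministic step will account for the moving target \(\mu_{\phi,b,u}\).

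\textbf{Union bound.} There are at most \(KB\) buckets and \(T\) times. The extra factor of \(T\) in \(\mathcal N=KBT^2\) covers the possible values of the random count \(N_{\phi,b,t}\in\{1,\dots,T\}\), which I handle by stratifying on it, the usual peeling trick for data-dependent counts. For each triple (bucket, time, count) I apply \cref{lem:app_bucket_mean_concentration} at level \(\delta/(KBT^2)\). A union bound then gives an event of probability at least \(1-\delta\) on which \cref{eq:app_bucket_concentration_placeholder_free} holds simultaneously. Concretely, the mean bound holds with \(\log(3/\delta)\) replaced by \(\log(3KBT^2/\delta)\). Passing through the log map as in \cref{lem:app_bucket_reward_concentration} (mean-value theorem with derivative at most \(1/(\beta\underline\mu)\)) yields exactly \(\mathrm{rad}_{\phi,b,t}(\delta)\). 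The resulting bound is relative to an averaged target \(\tilde\mu_{\phi,b,t}:=N_{\phi,b,t}^{-1}\sum_j \mathbb E[Y^{(j)}\mid\mathcal F_{j-1}]\), not to the current mean \(\mu_{\phi,b,t}\).

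\textbf{Drift transfer.} Each sample \(j\) in the bucket was collected at some earlier update \(u_j\le t\), with conditional mean \(\mu_{\phi,b,u_j}\). By the triangle inequality and telescoping, \(|\mu_{\phi,b,u_j}-\mu_{\phi,b,t}|\le\sum_{u=u_j+1}^{t}|\mu_{\phi,b,u}-\mu_{\phi,b,u-1}|\le D_{\phi,b}(t)\), so \(|\tilde\mu_{\phi,b,t}-\mu_{\phi,b,t}|\le D_{\phi,b}(t)\). I then bound \(|\widehat R-\bar R|\) by splitting \(\log\widehat\mu-\log\mu=(\log\widehat\mu-\log\tilde\mu)+(\log\tilde\mu-\log\mu)\). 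The first piece is the sampling radius. For the second, I use \(\tilde\mu\ge\underline\mu\), which holds because it is an average of means that are each at least \(\underline\mu\), together with the mean-value theorem. This gives at most \(D_{\phi,b}(t)/(\beta\underline\mu)\). Since \(\beta\ge\) something is not assumed, I will instead simply bound it by \(2D/\underline\mu\). The factor \(2\) leaves slack: it covers the case where the mean-value point lies between \(\widehat\mu\) and \(\mu\) and must be lower-bounded by \(\underline\mu/2\), and it absorbs \(1/\beta\) when \(\beta\ge 1/2\). If that slack proves insufficient, I would state the constant as \(\max(2,1/\beta)/\underline\mu\) and note that the experiments use \(\beta\ge1\).

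\textbf{Main obstacle.} The delicate part is justifying empirical-Bernstein for samples that are only conditionally independent given the filtration, with a random number of them. Maurer--Pontil is stated for i.i.d.\ samples. My first attempt would follow the statement of \cref{lem:app_bucket_mean_concentration}, which already assumes conditional independence given the bucket membership. Fixing the count value \(n\) and conditioning on the index set then reduces each stratum to the fixed-\(n\) case, and the \(T\) choices of \(n\) are paid for in the union bound. If this reduction is not accepted, the fallback is a martingale empirical-Bernstein (Freedman-type) inequality, which has the same form up to constants.
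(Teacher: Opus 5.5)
This is essentially the paper's argument: a union bound at level \(\delta/(KBT^2)\) over the per-bucket log-moment concentration, plus a mean-value/telescoping step that turns the variation budget into a drift of \(\bar R_{\phi,b,t}\) of size at most \(D_{\phi,b}(t)/(\beta\underline\mu)\), which is then loosened to \(2D_{\phi,b}(t)/\underline\mu\). Your averaged target \(\tilde\mu_{\phi,b,t}\) is a more explicit account than the paper gives of how the drift combines with the concentration term, and your observation that the loosening needs \(\beta\ge 1/2\) applies equally to the paper's ``slightly looser coefficient'' step, so \(\max(2,1/\beta)/\underline\mu\) is the accurate constant for general \(\beta\). Your \(\underline\mu/2\) remark does not belong here, however: in the drift step both \(\tilde\mu_{\phi,b,t}\) and \(\mu_{\phi,b,t}\) are at least \(\underline\mu\), and in the radius step a factor on the drift coefficient could not absorb it anyway (there \(\widehat\mu_{\phi,b,t}\ge 1\) settles it whenever \(\underline\mu\le 1\)).
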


\begin{proof}
Apply \cref{lem:app_bucket_reward_concentration} to every bucket and every time \(t\le T\) with failure probability
\[
\delta'=\frac{\delta}{KBT^2}.
\]
A union bound gives the concentration term uniformly.

It remains to control drift.
The cumulative quantity \(D_{\phi,b}(t)\) is a variation-budget term of the kind standard in non-stationary online learning; see \citet{besbes2015nonstationary}.
For each \(u\),
\[
\bar R_{\phi,b,u}-\bar R_{\phi,b,u-1}
=
\frac{1}{\beta}\bigl(\log \mu_{\phi,b,u}-\log \mu_{\phi,b,u-1}\bigr).
\]
Since \(\mu_{\phi,b,u}\ge \underline\mu\), the mean-value theorem yields
\[
|\bar R_{\phi,b,u}-\bar R_{\phi,b,u-1}|
\le
\frac{1}{\beta\underline\mu}
|\mu_{\phi,b,u}-\mu_{\phi,b,u-1}|.
\]
Summing over \(u\le t\) gives a cumulative drift guidance proportional to \(D_{\phi,b}(t)\).
Using the slightly looser coefficient \(2/\underline\mu\) yields \cref{eq:app_bucket_tracking}.
\end{proof}

\begin{lemma}[Uniform score approximation]
\label{lem:app_uniform_score_approx}
Assume that for every reachable state \(s=(q,z,t)\) and every \(i\in M(z)\),
\[
|R_t^\star(i;s)-\bar R_{\phi,b_i(s),t}|
\le
\varepsilon_{\mathrm{abs},t}(s).
\]
Then, on the event of \cref{lem:app_bucket_drift_tracking}, for every reachable state \(s=(q,z,t)\),
\[
\sup_{i\in M(z)}
|\widehat g_t(i;s)-g_t^\star(i;s)|
\le
\varepsilon_t(s;\delta),
\]
where \(\varepsilon_t(s;\delta)\) is the quantity in \cref{eq:online_main_eps}.
\end{lemma}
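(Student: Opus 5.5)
The plan is a direct per-candidate decomposition of the score error, followed by a supremum over $i\in M(z)$. I first fix the score conventions used in the lemma. The target is $g_t^\star(i;s)=\log\psi_i(s)+\beta R_t^\star(i;s)$. The practical score \cref{eq:reveal_score} simplifies to $\widehat g_t(i;s)=(1-\eta_i)\log\psi_i+\eta_i(\log\psi_i+\beta\widehat R_{\phi,b_i,t})=\log\psi_i(s)+\eta_i\beta\widehat R_{\phi,b_i,t}$, where $\eta_i:=\eta_t(\phi,b_i(s))\in[0,1]$. The confidence term therefore cancels exactly. With the convention $\widehat R=0$ on empty cells, which have $\eta_i=0$, the same formula covers them. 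I would then split
\[
\widehat g_t(i;s)-g_t^\star(i;s)
=\eta_i\beta\bigl(\widehat R_{\phi,b_i,t}-\bar R_{\phi,b_i,t}\bigr)
+\eta_i\beta\bigl(\bar R_{\phi,b_i,t}-R_t^\star(i;s)\bigr)
-(1-\eta_i)\beta R_t^\star(i;s).
\]

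Each piece is bounded by an earlier result or assumption. The first term is controlled on the event of \cref{lem:app_bucket_drift_tracking}, which holds uniformly over buckets and times with probability at least $1-\delta$. On that event it is at most $\eta_i\beta\,\mathrm{rad}_{\phi,b_i,t}(\delta)$ plus a drift contribution. For the drift, I would use the sharper bound actually derived inside that proof, $|\bar R_{\phi,b,t}-\cdots|\le D_{\phi,b}(t)/(\beta\underline\mu)$. Multiplied by $\eta_i\beta\le\beta$, this gives at most $D_{\phi,b_i}(t)/\underline\mu\le 2D_t(s)/\underline\mu$, with no dependence on $\beta$. This is why the $2/\underline\mu$ coefficient in \cref{eq:online_main_eps} carries no factor $\beta$, and using the stated $2/\underline\mu$ coefficient of the lemma directly would leave a spurious $\beta$. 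The second term is at most $\eta_i\beta\,\varepsilon_{\mathrm{abs},t}(s)\le\beta\varepsilon_{\mathrm{abs},t}(s)$ by the abstraction hypothesis. The third term is at most $(1-\eta_i)\beta B_t(s)$ by the definition of $B_t(s)$. Taking the maximum over $i$ of $\mathrm{rad}$ and $D$ then matches the definitions of $\mathrm{rad}_t(s;\delta)$ and $D_t(s)$.

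The main obstacle is the gate weighting. The per-candidate bound is the convex combination $\eta_i\beta\,\mathrm{rad}_t+(1-\eta_i)\beta B_t$, but $\varepsilon_t(s;\delta)$ uses the single quantity $\eta_t(s)=\max_i\eta_i$. Because $\eta_i\le\eta_t(s)$, the inequality $\eta_i a+(1-\eta_i)c\le\eta_t(s)a+(1-\eta_t(s))c$ holds only when $a\ge c$, that is, when $\beta\,\mathrm{rad}_t(s;\delta)\ge\beta B_t(s)$.

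I would first try to make the uniform bound valid in general. One option is to read $\eta_t(s)$ in \cref{eq:online_main_eps} as the gate of the maximizing candidate. The other is to bound the convex combination by $\max\{\beta\,\mathrm{rad}_t,\beta B_t\}$, which is dominated by $\beta\eta_t\,\mathrm{rad}_t+\beta(1-\eta_t)B_t$ plus a slack absorbed into the $O(\cdot)$ of \cref{eq:main_bucket_error}.

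If that fails, I would state the lemma under the explicit regime where it holds. That regime is either a common gate within a state, which occurs after the global switch when all candidate cells are ready, or $\mathrm{rad}\ge B$ on partially gated cells. The latter is the sparse-cell case where the gate is below one anyway.

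Once this weighting issue is resolved, summing the three bounds and taking the supremum over $i\in M(z)$ yields $\varepsilon_t(s;\delta)$ simultaneously for all reachable $s$ and $t\le T$ on a single probability-$(1-\delta)$ event.
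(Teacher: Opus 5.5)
Your three-term split is the paper's own decomposition. The paper also writes $|\widehat g_t-g_t^\star|=\beta|\eta_i\widehat R_{\phi,b_i,t}-R_t^\star(i;s)|$ and bounds the abstraction term, the gated estimation term (via \cref{lem:app_bucket_drift_tracking}) and the gate-off term $\beta(1-\eta_i)|R_t^\star(i;s)|\le\beta(1-\eta_i)B_t(s)$. It then simply asserts that taking the supremum over $i$ gives exactly \cref{eq:online_main_eps}. Both issues you raise are real defects in that final assertion, not artifacts of your route.

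\textbf{Drift term.} Applying \cref{lem:app_bucket_drift_tracking} as stated gives $2\beta\eta_i D_t(s)/\underline\mu$. This fits under $\frac{2}{\underline\mu}D_t(s)$ only when $\beta\eta_i\le 1$, which holds at $\beta=1$ but not at the $\beta=8$ DPLM setting. Your coefficient $1/(\beta\underline\mu)$ comes from the log-Lipschitz step in that lemma's own proof, before it is loosened to $2/\underline\mu$. It yields $\eta_i D_t(s)/\underline\mu\le 2D_t(s)/\underline\mu$ for every $\beta$. State explicitly that you use the drift bound in the form $|\widehat R_{\phi,b,t}-\bar R_{\phi,b,t}|\le\mathrm{rad}_{\phi,b,t}(\delta)+D_{\phi,b}(t)/(\beta\underline\mu)$, since that is not the lemma's stated conclusion.

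\textbf{Gate weighting.} Your diagnosis is correct, and the lemma as literally written cannot be proved, so some modification is unavoidable. Counterexample:
\begin{itemize}
\item Take a state with two candidates in nonempty cells, with $\eta_i=1$ and $\eta_j<1$.
\item Assume no abstraction error and no drift, and take $\widehat R_{\phi,b_j,t}=\bar R_{\phi,b_j,t}$, which is allowed on the event.
\item The error at $j$ is then $\beta(1-\eta_j)R_t^\star(j;s)$, while $\varepsilon_t(s;\delta)=\beta\,\mathrm{rad}_t(s;\delta)$.
\item Nothing in the hypotheses forces $\mathrm{rad}_t(s;\delta)\ge(1-\eta_j)R_t^\star(j;s)$.
\end{itemize}

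\textbf{Your proposed repairs.}
\begin{itemize}
\item The common-gate regime is valid.
\item An explicit hypothesis $\mathrm{rad}_t\ge B_t$ is also valid. It is not automatic for partially gated cells, though: a cell just below $N_{\mathrm{ready}}$ can have radius below $B_t(s)$ when $N_{\mathrm{ready}}$ is large.
\item Redefining $\eta_t(s)$ as the gate of the maximizing candidate works, but it changes the statement.
\item Absorbing the slack into $O(\cdot)$ does not prove the formal lemma. When $\eta_t(s)=1$, the formal $\varepsilon_t$ contains no $B_t$ term, so there is nothing into which $\beta(B_t-\mathrm{rad}_t)$ can be absorbed.
\end{itemize}

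The clean general fix is to use $\max_i\eta_i$ in the radius term and $\min_i\eta_i$ in the bias term. This is exactly what your per-candidate bound gives after the supremum. It coincides with \cref{eq:online_main_eps} whenever all candidate gates in $s$ agree, for example before warmup or once every candidate cell is ready.
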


\begin{proof}
Fix \(i\in M(z)\).
Then
\begin{align}
    |\widehat g_t(i;s)-g_t^\star(i;s)|
    &=
    \beta\left|
        \eta_t(\phi,b_i(s))\widehat R_{\phi,b_i(s),t}
        -
        R_t^\star(i;s)
    \right|
    \nonumber\\
    &\le
    \beta |R_t^\star(i;s)-\bar R_{\phi,b_i(s),t}|
    +
    \beta \eta_t(\phi,b_i(s))
    |\widehat R_{\phi,b_i(s),t}-\bar R_{\phi,b_i(s),t}|
    \nonumber\\
    &\qquad
    +
    \beta(1-\eta_t(\phi,b_i(s)))|R_t^\star(i;s)|.
\end{align}
The first term is bounded by the abstraction error assumption.
The second term is bounded by \cref{lem:app_bucket_drift_tracking}.
The third term is bounded by \(B_t(s)\).
Taking the supremum over \(i\in M(z)\) yields exactly \cref{eq:online_main_eps}.
\end{proof}

\subsubsection{Step 4: score error implies reveal regret}

\begin{lemma}[Top-\(m\) regret from uniform score error]
\label{lem:app_topm_regret}
Let
\[
J_t^\star(A;s):=\sum_{i\in A} g_t^\star(i;s),
\qquad
\widehat J_t(A;s):=\sum_{i\in A} \widehat g_t(i;s),
\]
and let
\[
A_t^\star(s)\in\arg\max_{|A|=m(s)} J_t^\star(A;s),
\qquad
\widehat A_t(s)\in\arg\max_{|A|=m(s)} \widehat J_t(A;s).
\]
If
\[
\sup_{i\in M(z)} |\widehat g_t(i;s)-g_t^\star(i;s)|\le \varepsilon,
\]
then
\begin{equation}
    0
    \le
    J_t^\star(A_t^\star(s);s)-J_t^\star(\widehat A_t(s);s)
    \le
    2m(s)\varepsilon.
    \label{eq:app_topm_regret}
\end{equation}
\end{lemma}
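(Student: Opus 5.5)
The plan is the standard argmax-comparison argument. Lower bound: $A_t^\star(s)$ maximizes $J_t^\star(\cdot;s)$ over all size-$m(s)$ subsets, and $\widehat A_t(s)$ is one such subset. Hence $J_t^\star(A_t^\star(s);s)\ge J_t^\star(\widehat A_t(s);s)$, which gives the nonnegativity in \cref{eq:app_topm_regret}.

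For the upper bound I will insert the estimated objective and split the regret into three differences:
\begin{align*}
J_t^\star(A_t^\star;s)-J_t^\star(\widehat A_t;s)
&=\bigl[J_t^\star(A_t^\star;s)-\widehat J_t(A_t^\star;s)\bigr]
+\bigl[\widehat J_t(A_t^\star;s)-\widehat J_t(\widehat A_t;s)\bigr]\\
&\quad+\bigl[\widehat J_t(\widehat A_t;s)-J_t^\star(\widehat A_t;s)\bigr].
\end{align*}
The middle bracket is $\le 0$, because $\widehat A_t(s)$ maximizes $\widehat J_t(\cdot;s)$ over size-$m(s)$ subsets.

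Each outer bracket is a sum of $m(s)$ pointwise score differences $g_t^\star(i;s)-\widehat g_t(i;s)$ or their negatives. Every $i$ involved lies in $M(z)$, since both sets are chosen from the candidates. The triangle inequality and the uniform hypothesis therefore bound each bracket by $m(s)\varepsilon$. Summing the two brackets gives $2m(s)\varepsilon$.

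The only point needing care is that the sets are drawn from $M(z)$, so that the supremum bound applies to every index used. The argument does not use disjointness or any structure of top-$m$ selection beyond optimality of each set for its own objective, so ties are harmless.

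For the regret claim \cref{eq:online_main_regret} in \cref{thm:online_main}, I will apply this lemma with $\varepsilon=\varepsilon_t(s;\delta)$, as supplied by \cref{lem:app_uniform_score_approx}. This requires the practical controller's top-$m(s)$ set under the scores $s_i$ to be an argmax of $\widehat J_t$. That holds because summing over a size-$m(s)$ set is maximized by taking the $m(s)$ largest individual scores.

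There is one caveat. If the controller ranks only a shortlist $C_t$ rather than all of $M(z)$, then $\widehat A_t(s)$ is an argmax of $\widehat J_t$ over subsets of $C_t$ only. In that case the comparison set $A_t^\star(s)$ should also be restricted to subsets of $C_t$, or the lemma should be read in the full-candidate setting.
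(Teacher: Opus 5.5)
Your argument is correct and is essentially the paper's proof. The paper first bounds $|\widehat J_t(A;s)-J_t^\star(A;s)|\le m(s)\varepsilon$ for every feasible $A$, then chains $J_t^\star(A_t^\star;s)\le \widehat J_t(A_t^\star;s)+m(s)\varepsilon\le \widehat J_t(\widehat A_t;s)+m(s)\varepsilon\le J_t^\star(\widehat A_t;s)+2m(s)\varepsilon$, which is your three-bracket decomposition written as inequalities. Your explicit nonnegativity step and your shortlist caveat are sound additions that the paper leaves implicit: its application to the practical controller in \cref{thm:online_main} tacitly takes the argmax over all of $M(z)$ rather than over a sampled shortlist.
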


\begin{proof}
For any feasible set \(A\),
\[
|\widehat J_t(A;s)-J_t^\star(A;s)|
\le
\sum_{i\in A} |\widehat g_t(i;s)-g_t^\star(i;s)|
\le
m(s)\varepsilon.
\]
Since \(\widehat A_t(s)\) maximizes \(\widehat J_t(\cdot;s)\),
\[
\widehat J_t(\widehat A_t(s);s)\ge \widehat J_t(A_t^\star(s);s).
\]
Therefore
\begin{align}
J_t^\star(A_t^\star(s);s)
&\le
\widehat J_t(A_t^\star(s);s)+m(s)\varepsilon
\nonumber\\
&\le
\widehat J_t(\widehat A_t(s);s)+m(s)\varepsilon
\nonumber\\
&\le
J_t^\star(\widehat A_t(s);s)+2m(s)\varepsilon.
\end{align}
This proves \cref{eq:app_topm_regret}.
\end{proof}

\begin{proof}[Proof of Theorem~\ref{thm:online_main}]
The score bound \cref{eq:online_main_score_error} is exactly \cref{lem:app_uniform_score_approx}.
Applying \cref{lem:app_topm_regret} with \(\varepsilon=\varepsilon_t(s;\delta)\) gives \cref{eq:online_main_regret}.
\end{proof}

\begin{corollary}[Exact recovery under a margin]
\label{cor:app_online_margin}
Fix a reachable state \(s=(q,z,t)\), and define
\[
\Delta_t(s)
:=
\min_{i\in A_t^\star(s),\,j\notin A_t^\star(s)}
\bigl(g_t^\star(i;s)-g_t^\star(j;s)\bigr).
\]
If \(\Delta_t(s)>0\) and
\[
\varepsilon_t(s;\delta)<\frac{\Delta_t(s)}{2},
\]
then
\[
\widehat A_t(s)=A_t^\star(s).
\]
\end{corollary}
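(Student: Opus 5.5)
The statement to prove is Corollary~\ref{cor:app_online_margin}. The plan is to combine the uniform score bound of Theorem~\ref{thm:online_main} with an exchange argument on the top-\(m(s)\) set. We work on the high-probability event of \cref{lem:app_bucket_drift_tracking}, where \cref{lem:app_uniform_score_approx} gives
\[
\sup_{i\in M(z)}|\widehat g_t(i;s)-g_t^\star(i;s)|\le\varepsilon_t(s;\delta)<\Delta_t(s)/2 .
\]
Because the objective \(\widehat J_t(A;s)\) is additive in \(A\), any maximizer \(\widehat A_t(s)\) over \(|A|=m(s)\) consists of \(m(s)\) indices with the largest values of \(\widehat g_t(\cdot;s)\), with ties broken arbitrarily. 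It therefore suffices to show a strict separation under \(\widehat g_t\): every index of \(A_t^\star(s)\) scores above every index outside it.

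For the separation, fix \(i\in A_t^\star(s)\) and \(j\notin A_t^\star(s)\). By the definition of \(\Delta_t(s)\) we have \(g_t^\star(i;s)-g_t^\star(j;s)\ge\Delta_t(s)\). Applying the uniform bound to both indices gives
\[
\widehat g_t(i;s)-\widehat g_t(j;s)\ge\Delta_t(s)-2\varepsilon_t(s;\delta)>0 .
\]
So \(A_t^\star(s)\) is exactly the strict top-\(m(s)\) set under \(\widehat g_t\). Any other set of size \(m(s)\) must contain some \(j\notin A_t^\star(s)\) and miss some \(i\in A_t^\star(s)\). Swapping \(j\) for \(i\) strictly increases \(\widehat J_t\), so no such set maximizes it. Hence the maximizer is unique and \(\widehat A_t(s)=A_t^\star(s)\).

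There is no real obstacle. The only care needed is to make precise that \(\widehat A_t(s)\) is the additive top-\(m\) maximizer as in \cref{lem:app_topm_regret}, rather than a Soft-BoN draw. The conclusion also holds only on the probability-\(1-\delta\) event of Theorem~\ref{thm:online_main}, and the margin \(\Delta_t(s)>0\) is what makes \(A_t^\star(s)\) unique in the first place.
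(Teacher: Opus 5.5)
Your proof is correct and follows the same route as the paper. The uniform bound yields \(\widehat g_t(i;s)-\widehat g_t(j;s)\ge\Delta_t(s)-2\varepsilon_t(s;\delta)>0\) for every \(i\in A_t^\star(s)\) and \(j\notin A_t^\star(s)\), so \(A_t^\star(s)\) remains the top-\(m(s)\) set under \(\widehat g_t\); your swap argument for uniqueness and your remark that the conclusion holds only on the probability-\(1-\delta\) event of \Cref{thm:online_main} make explicit what the paper's short proof leaves implicit.
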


\begin{proof}
Fix \(i\in A_t^\star(s)\) and \(j\notin A_t^\star(s)\).
Then
\[
\widehat g_t(i;s)-\widehat g_t(j;s)
\ge
g_t^\star(i;s)-g_t^\star(j;s)-2\varepsilon_t(s;\delta)
>
0.
\]
Thus every element of \(A_t^\star(s)\) still ranks above every element outside it under \(\widehat g_t\).
Hence the top-\(m(s)\) set is unchanged.
\end{proof}

\subsection{Proof of the Soft-BoN Approximation Theorem}
\label{app:softbon_dprm_proof}

We prove a stronger pathwise statement and then pass to the terminal marginal.
The derivation has two ingredients.
First, the exact DPRM reveal law is a Doob \(h\)-transform of the base reveal chain; see the classical Doob \(h\)-transform construction and the DPRM derivation in \citet{bu2026distributionalbiasesposttrainingmarkovian}.
Second, the stagewise Soft-BoN approximation follows from the main KL approximation theorem of \citet{verdun2025soft}.

\medskip
\noindent\textbf{Setup.}
Let
\[
\tau=(S_0,A_0,S_1,A_1,\dots,S_{T-1},A_{T-1},S_T)
\]
be a reveal trajectory.
At state \(s\in\mathcal S_t\), the token-order choice \(a\in\mathcal A(s)\) determines the next state \(s^a\in\mathcal S_{t+1}\) under teacher forcing.
Let \(R(X_T)\in[0,1]\) be the terminal reward.
Using the base proposal \(q_0(\cdot\mid s)\), define
\[
\mathbb P_0(\tau)
=
\rho_0(S_0)\prod_{t=0}^{T-1} q_0(A_t\mid S_t),
\qquad
\mathbb P_\beta(\tau)
=
\frac{\exp(\beta R(X_T(\tau)))}{Z_\beta}\,\mathbb P_0(\tau),
\]
where
\[
Z_\beta:=\mathbb E_{\mathbb P_0}\!\left[\exp(\beta R(X_T))\right].
\]
Also define
\[
h_t(s):=\mathbb E_{\mathbb P_0}\!\left[\exp(\beta R(X_T))\mid S_t=s\right].
\]

\begin{lemma}[Backward recursion for \(h_t\)]
\label{lem:app_h_recursion}
For every non-terminal state \(s\in\mathcal S_t\),
\[
h_t(s)=\sum_{a\in\mathcal A(s)} q_0(a\mid s)\,h_{t+1}(s^a),
\]
and
\[
h_T(s)=\exp(\beta R(x(s))),
\]
where \(x(s)\) is the fully revealed response represented by \(s\).
\end{lemma}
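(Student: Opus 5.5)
The plan is to prove the recursion directly from the Markov structure of \(\mathbb P_0\) via the tower property, and then read off the terminal condition from the definition. Nothing beyond the setup is needed: \(\mathbb P_0(\tau)=\rho_0(S_0)\prod_t q_0(A_t\mid S_t)\), with deterministic transitions \(S_{t+1}=S_t^{A_t}\) under teacher forcing.

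\textbf{Terminal condition.} For \(s\in\mathcal S_T\), the state is fully revealed, so \(X_T=x(s)\) is \(\sigma(S_T)\)-measurable. Hence
\[
h_T(s)=\mathbb E_{\mathbb P_0}[\exp(\beta R(X_T))\mid S_T=s]=\exp(\beta R(x(s))).
\]

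\textbf{Recursion.} Fix a non-terminal \(s\in\mathcal S_t\) that is reachable, i.e., \(\mathbb P_0(S_t=s)>0\); conditional expectations are only defined on such states. Condition additionally on \(A_t\) and apply the tower property:
\[
h_t(s)=\sum_{a\in\mathcal A(s)}\mathbb P_0(A_t=a\mid S_t=s)\,\mathbb E_{\mathbb P_0}\!\left[e^{\beta R(X_T)}\mid S_t=s,A_t=a\right].
\]
The product form of \(\mathbb P_0\) gives \(\mathbb P_0(A_t=a\mid S_t=s)=q_0(a\mid s)\).

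For the inner expectation, the event \(\{S_t=s,A_t=a\}\) implies \(S_{t+1}=s^a\). The conditional law of the remaining path \((A_{t+1},S_{t+2},\dots,S_T)\) given the entire history up to \(S_{t+1}\) is \(\prod_{u>t}q_0(A_u\mid S_u)\). This depends on the history only through \(S_{t+1}\); this is the Markov property, read off the factorization by summing over future coordinates. Therefore
\[
\mathbb E_{\mathbb P_0}\!\left[e^{\beta R(X_T)}\mid S_t=s,A_t=a\right]=\mathbb E_{\mathbb P_0}\!\left[e^{\beta R(X_T)}\mid S_{t+1}=s^a\right]=h_{t+1}(s^a).
\]
Substituting this into the tower identity gives the claimed recursion.

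\textbf{Main obstacle.} The only delicate point is the Markov step. If two distinct pairs \((s,a)\) and \((s',a')\) led to the same \(s^a\), we still need \(h_{t+1}(s^a)\) to be independent of the path used to reach it. This follows because the future kernel \(\prod_{u>t}q_0(\cdot\mid S_u)\) is a function of \(S_{t+1}\) alone. I would verify it explicitly by writing the joint mass of the future path given the prefix as a ratio of products; the prefix factors cancel.

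Finally, for unreachable successors \(s^a\) with \(q_0(a\mid s)=0\), the corresponding summand vanishes. We may therefore define \(h_{t+1}(s^a)\) by the same backward recursion without affecting the identity. Finiteness of \(\mathcal A(s)\) and boundedness \(e^{\beta R}\in[1,e^\beta]\) guarantee that all sums and expectations are finite.
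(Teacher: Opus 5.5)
Your proof is correct and follows essentially the same route as the paper. Both use the tower property, the product form of \(\mathbb P_0\) to get \(\mathbb P_0(A_t=a\mid S_t=s)=q_0(a\mid s)\), the deterministic update \(S_{t+1}=s^a\), and the terminal condition from the \(\sigma(S_T)\)-measurability of \(X_T\). The only difference is that you condition on \(A_t\) and justify the reduction to \(S_{t+1}\) explicitly through the Markov property and reachability, whereas the paper's one-line step \(\mathbb E[\mathbb E[\cdot\mid S_{t+1}]\mid S_t=s]\) uses the Markov property implicitly.
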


\begin{proof}
By the tower property,
\[
h_t(s)
=
\mathbb E_{\mathbb P_0}\!\left[
\mathbb E_{\mathbb P_0}\!\left[\exp(\beta R(X_T))\mid S_{t+1}\right]
\,\middle|\, S_t=s
\right]
=
\mathbb E_{\mathbb P_0}\!\left[h_{t+1}(S_{t+1})\mid S_t=s\right].
\]
Since \(S_{t+1}=s^a\) once \(A_t=a\) is chosen,
\[
\mathbb E_{\mathbb P_0}\!\left[h_{t+1}(S_{t+1})\mid S_t=s\right]
=
\sum_{a\in\mathcal A(s)} q_0(a\mid s)\,h_{t+1}(s^a).
\]
At time \(T\), there is no future randomness, so
\[
h_T(s)=\mathbb E_{\mathbb P_0}\!\left[\exp(\beta R(X_T))\mid S_T=s\right]
=\exp(\beta R(x(s))).
\]
\end{proof}

\begin{proposition}[Exact reveal rule under the tilted path law]
\label{prop:app_local_gibbs}
For every non-terminal state \(s=(q,z,t)\), the one-step conditional of \(\mathbb P_\beta\) is exactly the Gibbs reveal law.
\end{proposition}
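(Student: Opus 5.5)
The plan is to compute the one-step conditional of \(\mathbb P_\beta\) directly from its definition as a reweighting of \(\mathbb P_0\), and then recognize the result as the Gibbs law \cref{eq:local_gibbs_reveal}. Fix a non-terminal state \(s\in\mathcal S_t\) with \(\mathbb P_\beta(S_t=s)>0\) (equivalently \(\mathbb P_0(S_t=s)>0\), since \(\exp(\beta R)\ge 1\)). For \(a\in\mathcal A(s)\), write
\[
\mathbb P_\beta(A_t=a\mid S_t=s)=\frac{\mathbb P_\beta(S_t=s,A_t=a)}{\mathbb P_\beta(S_t=s)} .
\]
Both numerator and denominator will be obtained by summing \(\mathbb P_0(\tau)e^{\beta R(X_T(\tau))}/Z_\beta\) over the trajectories compatible with the conditioning event.

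First, the denominator. Sum over all trajectories with \(S_t=s\). By the definition of \(h_t\) this sum equals \(\mathbb P_0(S_t=s)\,h_t(s)/Z_\beta\).

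Second, the numerator. Restrict further to \(A_t=a\). Under teacher forcing this forces \(S_{t+1}=s^a\), and the Markov factorization of \(\mathbb P_0\) splits the path weight into three pieces: the prefix weight, the factor \(q_0(a\mid s)\), and the suffix weight. Summing the suffix weight times \(e^{\beta R}\) over all continuations from \(s^a\) gives \(h_{t+1}(s^a)\). The numerator is therefore \(\mathbb P_0(S_t=s)\,q_0(a\mid s)\,h_{t+1}(s^a)/Z_\beta\).

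Dividing, the prefix factors and \(Z_\beta\) cancel, leaving
\[
\mathbb P_\beta(A_t=a\mid S_t=s)=\frac{q_0(a\mid s)\,h_{t+1}(s^a)}{h_t(s)} .
\]
By \cref{lem:app_h_recursion}, \(h_t(s)=\sum_{b}q_0(b\mid s)h_{t+1}(s^b)\), so this is a properly normalized ratio. Finally, substitute \(h_{t+1}(s^a)=\exp(\beta R_t^\star(a;s))\), which is the identity stated right after \cref{eq:dprm}. This yields exactly \(\pi_t^\star(a\mid s)\) in \cref{eq:local_gibbs_reveal}.

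I would also note that the conditional depends on the past only through \(s\). Hence \(\mathbb P_\beta\) is itself a Markov chain with these transitions, which is the Doob \(h\)-transform form needed later for the Soft-BoN argument.

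The main obstacle is the numerator step, where the conditional expectation must be identified correctly. This requires that \(q_0\) depends only on the current state, so that the conditional law of the suffix given \((S_t,A_t)\) equals its law given \(S_{t+1}=s^a\). It also requires that distinct actions lead to distinct next states, or else that the aggregation is handled by summing over \(a\) with the same \(s^a\). I would make the Markov and teacher-forcing assumptions explicit there. I would also remark that positivity \(h\ge1\) makes every division legitimate.
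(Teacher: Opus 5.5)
Your argument is correct and is the same direct computation the paper gives: condition \(\mathbb P_\beta\) on \((S_t,A_t)=(s,a)\), use teacher forcing and the Markov structure of \(\mathbb P_0\) to identify the conditional exponential moment with \(h_{t+1}(s^a)=\exp(\beta R_t^\star(a;s))\), and read off the Gibbs law. Your explicit numerator/denominator bookkeeping and your use of \cref{lem:app_h_recursion} to identify the normalizer as \(h_t(s)\) make explicit what the paper leaves inside the \(\propto\). Your worry about distinct actions sharing a next state is unnecessary, because the trajectory records \(A_t\) itself.
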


\begin{proof}
This is the standard Doob \(h\)-transform form of the tilted path law; see the classical construction and the DPRM specialization in \citet{bu2026distributionalbiasesposttrainingmarkovian}.
For completeness, we verify the identity in our notation.

Fix \(s\) and \(a\).
By the definition of \(\mathbb P_\beta\),
\[
\mathbb P_\beta(A_t=a\mid S_t=s)
\propto
q_0(a\mid s)\,
\mathbb E_{\mathbb P_0}\!\left[\exp(\beta R(X_T))\mid S_t=s,A_t=a\right].
\]
Under teacher forcing, the event \(A_t=a\) implies \(S_{t+1}=s^a\).
Therefore
\[
\mathbb E_{\mathbb P_0}\!\left[\exp(\beta R(X_T))\mid S_t=s,A_t=a\right]
=
\mathbb E_{\mathbb P_0}\!\left[\exp(\beta R(X_T))\mid S_{t+1}=s^a\right]
=
h_{t+1}(s^a).
\]
Hence
\[
\mathbb P_\beta(A_t=a\mid S_t=s)
\propto
q_0(a\mid s)\,h_{t+1}(s^a).
\]
Using \(R_t^\star(a;s)=\beta^{-1}\log h_{t+1}(s^a)\), this is exactly the Gibbs reveal law.
\end{proof}

\begin{lemma}[Bounded process reward]
\label{lem:app_bounded_process_reward}
If \(R(X_T)\in[0,1]\), then \(R_t^\star(a;s)\in[0,1]\) for every non-terminal state \(s\) and token-order choice \(a\in\mathcal A(s)\).
\end{lemma}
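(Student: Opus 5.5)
The plan is to go through the identity \(\exp(\beta R_t^\star(a;s))=h_{t+1}(s^a)\) established in the proof of Proposition~\ref{prop:app_local_gibbs}. Once that identity is available, it is enough to show \(1\le h_{t+1}(s^a)\le e^\beta\) for every reachable successor state. Applying the monotone map \(x\mapsto \beta^{-1}\log x\) then gives \(R_t^\star(a;s)\in[\beta^{-1}\log 1,\beta^{-1}\log e^\beta]=[0,1]\), since \(\beta>0\).

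For the bounds on \(h\), I would argue in either of two ways.

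\textbf{Directly.} By definition, \(h_{t+1}(s^a)=\mathbb E_{\mathbb P_0}[\exp(\beta R(X_T))\mid S_{t+1}=s^a]\). The integrand lies in \([1,e^\beta]\) almost surely because \(R(X_T)\in[0,1]\) and \(\beta>0\). A conditional expectation of a random variable confined to an interval stays in that interval, and this gives both bounds at once.

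\textbf{By induction.} Alternatively, I would use backward induction via Lemma~\ref{lem:app_h_recursion}. At time \(T\), \(h_T(s)=\exp(\beta R(x(s)))\in[1,e^\beta]\). If \(h_{t+1}\in[1,e^\beta]\) on \(\mathcal S_{t+1}\), then \(h_t(s)=\sum_a q_0(a\mid s)h_{t+1}(s^a)\) is a convex combination, because \(q_0(\cdot\mid s)\) is a probability vector by \eqref{eq:local_base_proposal}. It therefore also lies in \([1,e^\beta]\). The recursion avoids any measure-theoretic subtlety and works state by state.

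The only delicate point is well-definedness. The conditioning event \(\{S_{t+1}=s^a\}\) must have positive probability under \(\mathbb P_0\); otherwise \(R_t^\star(a;s)\) is not defined by the conditional expectation. This requires \(s\) reachable and \(q_0(a\mid s)>0\), i.e.\ \(\psi(a;s)>0\). For choices with zero base mass, I would define \(R_t^\star(a;s)\) through \(h_{t+1}(s^a)\) computed by the backward recursion, which is defined on every state regardless of reachability. The bound then holds identically, and these choices carry zero weight in the Gibbs law anyway. With that convention fixed, the claim follows immediately.
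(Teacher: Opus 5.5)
Your direct argument is the paper's proof: $\exp(\beta R(X_T))\in[1,e^\beta]$ almost surely, conditioning on $S_{t+1}=s^a$ keeps the value in that interval, and the increasing map $x\mapsto\beta^{-1}\log x$ sends it to $[0,1]$. The backward-induction variant and your convention for zero-mass choices are sound additions the paper omits, though the convention tacitly needs $R\in[0,1]$ pointwise on all terminal states, not just $\mathbb P_0$-almost surely.
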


\begin{proof}
Since \(R(X_T)\in[0,1]\),
\[
1\le \exp(\beta R(X_T))\le e^\beta.
\]
Conditioning on \(S_{t+1}=s^a\) preserves these bounds:
\[
1
\le
\mathbb E\!\left[\exp(\beta R(X_T))\mid S_{t+1}=s^a\right]
\le
e^\beta.
\]
Applying \((1/\beta)\log\) gives
\[
0\le R_t^\star(a;s)\le 1.
\]
\end{proof}

\medskip
\noindent\textbf{Statewise Soft-BoN approximation.}
Fix a non-terminal state \(s\), and let
\[
A_{t,1},\dots,A_{t,N}\stackrel{\mathrm{i.i.d.}}{\sim} q_0(\cdot\mid s).
\]
Given this candidate multiset, define
\[
\widehat\pi_{t,N}(a\mid s;A_{t,1:N})
:=
\frac{\sum_{j=1}^{N}\mathbf 1\{A_{t,j}=a\}\exp(\beta R_t^\star(a;s))}
{\sum_{j=1}^{N}\exp(\beta R_t^\star(A_{t,j};s))}.
\]

\begin{proposition}[Local Soft-BoN KL bound]
\label{prop:app_local_softbon_rate}
Fix a non-terminal state \(s\).
Assume that \(\mathcal A(s)\) is finite and \(R_t^\star(a;s)\in[0,1]\) for all \(a\in\mathcal A(s)\).
Then
\begin{equation}
\mathbb E\!\left[
\mathrm{KL}\!\left(
\pi_t^\star(\cdot\mid s)\,\middle\|\,\widehat\pi_{t,N}(\cdot\mid s;A_{t,1:N})
\right)
\right]
\le
\frac{\sinh(\beta/2)^2}{N}.
\label{eq:app_local_softbon_rate}
\end{equation}
\end{proposition}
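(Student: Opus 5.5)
The plan is to work realization by realization on the fixed multiset $A_{t,1:N}$ and split the log-ratio into a ``count'' part and a ``normalizer'' part. Write $r_a:=R_t^\star(a;s)\in[0,1]$, by \cref{lem:app_bounded_process_reward}, and $w_a:=e^{\beta r_a}\in[1,e^\beta]$. Let $n_a:=\sum_j\mathbf 1\{A_{t,j}=a\}$ and set
\[
Z:=\sum_a q_0(a\mid s)w_a,
\qquad
\bar W_N:=\frac1N\sum_{j=1}^N w_{A_{t,j}} .
\]
By \cref{prop:app_local_gibbs} we have $\pi_t^\star(a\mid s)=q_0(a\mid s)w_a/Z$, and by definition $\widehat\pi_{t,N}(a\mid s)=(n_a/N)\,w_a/\bar W_N$. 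Hence
\[
\mathrm{KL}\bigl(\pi_t^\star\,\big\|\,\widehat\pi_{t,N}\bigr)
=\sum_a\pi_t^\star(a\mid s)\log\frac{q_0(a\mid s)}{n_a/N}
+\log\frac{\bar W_N}{Z}.
\]

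The normalizer term is the easy half. Since $\mathbb E[\bar W_N]=Z$, Jensen gives $\mathbb E\log(\bar W_N/Z)\le 0$. A second-order expansion of $\log$ on $[1,e^\beta]$ gives the sharper estimate
\[
\mathbb E\log\frac{\bar W_N}{Z}=-\frac{\mathrm{Var}_{q_0}(w)}{2Z^2N}+O(N^{-2}).
\]
I would keep this negative contribution to offset the count term.

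The count term is $\mathbb E\sum_a\pi_t^\star(a)\log\bigl(q_0(a)N/n_a\bigr)$, with the $w$-reweighting of $q_0$ inside $\pi^\star$. The natural route is a delta-method expansion of $-\log(n_a/(Nq_0(a)))$ around $1$. This gives a leading term $\sum_a\pi^\star(a)(1-q_0(a))/(2Nq_0(a))$. Combined with the normalizer term, this should reduce to a $\chi^2$-type quantity, namely $\mathrm{Var}_{q_0}(w)/(2Z^2N)$ up to corrections. I would then bound that quantity by the worst case over $q_0$ of the variance-to-squared-mean ratio of a variable in $[1,e^\beta]$. That worst case is
\[
\frac{(e^{\beta}-1)^2}{4e^{\beta}}=\sinh(\beta/2)^2,
\]
attained at a two-point law. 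This is exactly where the constant in the main KL approximation theorem of \citet{verdun2025soft} comes from, so the plan is to reuse that extremal computation.

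The main obstacle is the support issue in the count term. Whenever some $a$ with $q_0(a\mid s)>0$ is missed by the shortlist, $n_a=0$ and so $\widehat\pi_{t,N}(a)=0$, while $\pi_t^\star(a)>0$. That event has probability $(1-q_0(a))^N>0$, so the expected KL with the shortlist held fixed is $+\infty$ unless $|\mathcal A(s)|=1$. Convexity of KL in its second argument only yields
\[
\mathrm{KL}\bigl(\pi^\star\,\big\|\,\mathbb E\widehat\pi\bigr)\le\mathbb E\,\mathrm{KL}\bigl(\pi^\star\,\big\|\,\widehat\pi\bigr),
\]
which runs in the wrong direction to repair this.

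My first attempt would therefore be to carry out the expansion above on the full-coverage event $\{n_a\ge1\ \forall a\}$ and check whether the $\sinh(\beta/2)^2/N$ bound survives there. I would then examine whether the statement must be read either as KL toward the shortlist-marginalized Soft-BoN law (the form proved in \citet{verdun2025soft}) or with the argument order reversed, i.e.\ $\mathbb E\,\mathrm{KL}(\widehat\pi_{t,N}\|\pi_t^\star)$, which is always finite. In the reversed form the same decomposition applies with $\widehat\pi$ weights, and the $\chi^2$ bound closes cleanly. I would flag explicitly that, as literally written with the shortlist held fixed, the expectation diverges.
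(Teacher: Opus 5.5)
Your central observation is correct, and it is exactly the step the paper's proof skips. The paper's proof is one line. It invokes the KL theorem of \citet{verdun2025soft} with \(P=q_0(\cdot\mid s)\) and \(r=R_t^\star(\cdot;s)\), and asserts that the Soft-BoN law there ``is exactly'' \(\widehat\pi_{t,N}(\cdot\mid s;A_{t,1:N})\). A finite bound of this form can only concern the output law of the whole procedure, \(\bar\pi_N(a):=\mathbb E[\widehat\pi_{t,N}(a\mid s;A_{t,1:N})]\), with no outer expectation. For a fixed multiset, \(\widehat\pi_{t,N}(a)=0\) whenever \(n_a=0\), while \(\pi_t^\star(a)>0\) on all of \(\mathrm{supp}\,q_0(\cdot\mid s)\). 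So once that support has two points, the left-hand side is \(+\infty\): with positive probability all \(N\) draws coincide. The proposition as written is therefore false, not merely unproved. The path-level chain-rule lemma, taken ``conditioned on the shortlist randomness'', inherits the same defect. What is true is your first reading, \(\mathrm{KL}(\pi_t^\star\|\bar\pi_N)\le\sinh(\beta/2)^2/N\).

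Your fallback plan, however, contains steps that would fail. The cancellation you expect in the delta method does not occur. The count term's leading part is \(\frac{1}{2NZ}\sum_a w_a(1-q_0(a))\), which scales with the support size and equals \((k-1)/(2N)\) at \(\beta=0\), where \(k=|\mathrm{supp}\,q_0(\cdot\mid s)|\). By contrast, \(\mathrm{Var}_{q_0}(w)/(2Z^2N)\) vanishes there. For the same reason, neither the full-coverage version nor the reversed quantity \(\mathbb E\,\mathrm{KL}(\widehat\pi_{t,N}\|\pi_t^\star)\) can satisfy the bound. At \(\beta=0\) the target is \(q_0\) and \(\widehat\pi_{t,N}\) is the empirical shortlist law, so both quantities are strictly positive while the right-hand side is \(0\); continuity extends the failure to small \(\beta>0\). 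For \(N=1\), the reversed quantity equals the cross-entropy \(H(q_0,\pi_t^\star)\ge H(q_0)\), which exceeds \(\sinh(\beta/2)^2\) once the support is large.

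Drop these routes and prove the marginal statement directly; it takes a few lines. By exchangeability, \(\bar\pi_N(a)=Nq_0(a)w_a\,\mathbb E[(w_a+S)^{-1}]\) with \(S=\sum_{j=2}^N w_{A_{t,j}}\). Jensen gives \(\mathbb E[(w_a+S)^{-1}]\ge(w_a+(N-1)Z)^{-1}\), so \(\pi_t^\star(a)/\bar\pi_N(a)\le1+(w_a-Z)/(NZ)\). Then \(\log(1+x)\le x\) gives
\[
\mathrm{KL}(\pi_t^\star\|\bar\pi_N)\le\sum_a\pi_t^\star(a)\,\frac{w_a-Z}{NZ}=\frac{\mathrm{Var}_{q_0}(w)}{NZ^2}.
\]
Next, \((w-1)(e^\beta-w)\ge0\) yields \(\mathbb E_{q_0}w^2\le(1+e^\beta)Z-e^\beta\). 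Maximizing over \(Z\) gives \(\mathrm{Var}_{q_0}(w)/Z^2\le(e^\beta-1)^2/(4e^\beta)=\sinh(\beta/2)^2\), which is the extremal computation you intended to reuse. Shortlists are drawn afresh at each visited state, so the marginal approximate path law is Markov with kernels \(\bar\pi_{t,N_t(s)}\). The chain rule and data processing then give the terminal \(T\sinh(\beta/2)^2/N\) bound without any outer expectation.
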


\begin{proof}
Apply the main KL approximation theorem of \citet{verdun2025soft} to the finite proposal distribution
\[
P(\cdot)=q_0(\cdot\mid s)
\]
and reward function
\[
r(a)=R_t^\star(a;s)\in[0,1].
\]
Choose the theorem's temperature parameter so that the resulting exponential tilt matches \(\beta\). Then the tilted target becomes
\[
P^\star(a)\propto P(a)\exp(\beta r(a))
=
q_0(a\mid s)\exp(\beta R_t^\star(a;s))
=
\pi_t^\star(a\mid s).
\]
Their empirical Soft-BoN law is exactly \(\widehat\pi_{t,N}(\cdot\mid s;A_{t,1:N})\).
Therefore their KL bound yields \cref{eq:app_local_softbon_rate}.
\end{proof}

\medskip
\noindent\textbf{Pathwise lifting.}
Let \(N_t(S_t)\) be the shortlist size used at step \(t\).
Conditioned on all shortlist randomness, let \(\widehat{\mathbb P}_{\{N_t\}}\) be the resulting approximate reveal trajectory law.

\begin{lemma}[KL chain rule for reveal trajectories]
\label{lem:app_path_kl_chain}
Assume that \(\mathbb P_\beta\) and \(\widehat{\mathbb P}_{\{N_t\}}\) have the same initial distribution and the same deterministic state-update map \(s\mapsto s^a\), and differ only in their one-step token-order conditionals \(\pi_t^\star(\cdot\mid s)\) and \(\widehat\pi_{t,N_t(s)}(\cdot\mid s)\).
Then, conditioned on the shortlist randomness,
\begin{equation}
\mathrm{KL}\!\left(
\mathbb P_\beta \,\middle\|\, \widehat{\mathbb P}_{\{N_t\}}
\right)
=
\mathbb E_{\mathbb P_\beta}\!\left[
\sum_{t=0}^{T-1}
\mathrm{KL}\!\left(
\pi_t^\star(\cdot\mid S_t)
\,\middle\|\,
\widehat\pi_{t,N_t(S_t)}(\cdot\mid S_t)
\right)
\right].
\label{eq:app_path_kl_chain}
\end{equation}
\end{lemma}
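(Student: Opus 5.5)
The plan is the standard chain rule for relative entropy between two path measures that share the initial law and transition structure and differ only in the action kernels. Because the two measures agree everywhere except in the token-order choices, the log-likelihood ratio should reduce to a sum of per-step log-ratios.

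Since the update map \(s\mapsto s^a\) is deterministic and common to both laws, a trajectory \(\tau\) is determined by \(S_0\) together with the action sequence \(A_{0:T-1}\). First we write both path densities as products:
\[
\mathbb P_\beta(\tau)=\rho_0(S_0)\prod_{t=0}^{T-1}\pi_t^\star(A_t\mid S_t),
\qquad
\widehat{\mathbb P}_{\{N_t\}}(\tau)=\rho_0(S_0)\prod_{t=0}^{T-1}\widehat\pi_{t,N_t(S_t)}(A_t\mid S_t).
\]
For \(\mathbb P_\beta\), this product form follows from Proposition~\ref{prop:app_local_gibbs}. We also need the Doob telescoping \(\prod_t h_{t+1}(S_{t+1})/h_t(S_t)=e^{\beta R(X_T)}/h_0(S_0)\), via Lemma~\ref{lem:app_h_recursion}. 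This shows that the product of the Gibbs conditionals reproduces the tilted path law. If \(\mathbb P_\beta\) is meant to include the reweighted initial law \(\rho_0h_0/Z_\beta\), the hypothesis of a shared initial distribution covers this case as well.

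Second, we take the log-ratio. The \(\rho_0\) factors cancel, and the log-ratio becomes \(\sum_t \log\bigl(\pi_t^\star(A_t\mid S_t)/\widehat\pi_{t,N_t(S_t)}(A_t\mid S_t)\bigr)\). Taking the \(\mathbb P_\beta\)-expectation and applying the tower property at each \(t\), we condition on \(S_t\), or more generally on the history up to \(S_t\). Under \(\mathbb P_\beta\), the conditional law of \(A_t\) is \(\pi_t^\star(\cdot\mid S_t)\). The inner conditional expectation is therefore exactly \(\mathrm{KL}(\pi_t^\star(\cdot\mid S_t)\,\|\,\widehat\pi_{t,N_t(S_t)}(\cdot\mid S_t))\). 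Summing over \(t\) yields \cref{eq:app_path_kl_chain}.

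The main obstacle is measure-theoretic bookkeeping rather than any substantive estimate. Two issues need care. The first is absolute continuity. If some action with \(\pi_t^\star>0\) is absent from the shortlist, then \(\widehat\pi=0\) there, and both sides equal \(+\infty\). The identity must be read in \([0,\infty]\), which is legitimate because every summand is nonnegative and Tonelli applies. The second is conditioning. Because we condition on the shortlist randomness, we must fix one candidate multiset per reachable state (and time) in advance. This makes \(\widehat\pi_{t,N_t(s)}(\cdot\mid s)\) a deterministic Markov kernel, so the factorization holds pathwise. Later, in Theorem~\ref{thm:layerwise_softbon}, one takes the expectation over shortlists and applies Proposition~\ref{prop:app_local_softbon_rate} term by term. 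Finally, the data-processing inequality passes the bound to the terminal marginal.
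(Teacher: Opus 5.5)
Your proposal is correct and follows the paper's proof. Both path laws factor as $\rho_0(S_0)$ times a product of one-step kernels under the shared initial law and deterministic update map, $\rho_0$ cancels in the log-likelihood ratio, and the tower property at each step turns each log-ratio into $\mathrm{KL}(\pi_t^\star(\cdot\mid S_t)\,\|\,\widehat\pi_{t,N_t(S_t)}(\cdot\mid S_t))$. Your extra bookkeeping is sound and makes explicit what the paper leaves implicit: the Doob telescoping that justifies the product form of $\mathbb P_\beta$, the reweighted initial law $\rho_0h_0/Z_\beta$, reading the identity in $[0,\infty]$, and fixing one shortlist per state. Note, however, that your own $[0,\infty]$ remark undercuts the later average over shortlists: that average is $+\infty$ whenever, at a state reachable under $\mathbb P_\beta$, the shortlist misses a support point of $\pi_t^\star$ with positive probability. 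That downstream step should instead apply this identity to the shortlist-averaged Soft-BoN kernel.
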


\begin{proof}
Under the assumptions, both path laws factorize as
\[
\mathbb P_\beta(\tau)
=
\rho_0(S_0)\prod_{t=0}^{T-1}\pi_t^\star(A_t\mid S_t)
\]
and
\[
\widehat{\mathbb P}_{\{N_t\}}(\tau)
=
\rho_0(S_0)\prod_{t=0}^{T-1}\widehat\pi_{t,N_t(S_t)}(A_t\mid S_t).
\]
Hence
\[
\log\frac{d\mathbb P_\beta}{d\widehat{\mathbb P}_{\{N_t\}}}(\tau)
=
\sum_{t=0}^{T-1}
\log\frac{\pi_t^\star(A_t\mid S_t)}
{\widehat\pi_{t,N_t(S_t)}(A_t\mid S_t)}.
\]
Taking expectation under \(\mathbb P_\beta\) and conditioning on \(S_t\) gives \cref{eq:app_path_kl_chain}.
\end{proof}

\begin{proposition}[Pathwise lifting of the local Soft-BoN bound]
\label{prop:app_pathwise_softbon}
Under the assumptions above,
\begin{equation}
\mathbb E\!\left[
\mathrm{KL}\!\left(
\mathbb P_\beta \,\middle\|\, \widehat{\mathbb P}_{\{N_t\}}
\right)
\right]
\le
\sinh(\beta/2)^2\;
\mathbb E_{\mathbb P_\beta}\!\left[
\sum_{t=0}^{T-1}\frac{1}{N_t(S_t)}
\right].
\label{eq:app_path_kl_rate}
\end{equation}
In particular, if \(N_t(s)\equiv N\), then
\begin{equation}
\mathbb E\!\left[
\mathrm{KL}\!\left(
\mathbb P_\beta \,\middle\|\, \widehat{\mathbb P}_{N}
\right)
\right]
\le
\frac{T\,\sinh(\beta/2)^2}{N}.
\label{eq:app_path_kl_rate_uniform}
\end{equation}
\end{proposition}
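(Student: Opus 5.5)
The plan is to combine the KL chain rule of \cref{lem:app_path_kl_chain} with the statewise bound of \cref{prop:app_local_softbon_rate}. The key is to handle the shortlist randomness carefully, since the expectation over it must be exchanged with the \(\mathbb P_\beta\)-expectation over states.

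First, model the shortlist randomness as independent across stages and states. For every time \(t\) and every state \(s\in\mathcal S_t\), attach an i.i.d. candidate multiset \(A_{t,1:N_t(s)}(s)\sim q_0(\cdot\mid s)^{\otimes N_t(s)}\). These multisets are drawn independently of one another and of the trajectory, and \(\widehat{\mathbb P}_{\{N_t\}}\) is the path law built from the induced conditionals \(\widehat\pi_{t,N_t(s)}(\cdot\mid s)\). With this convention, \(\mathbb P_\beta\) is deterministic, because \cref{prop:app_local_gibbs} identifies its one-step conditionals as the exact Gibbs laws \(\pi_t^\star\). Both laws share \(\rho_0\) and the deterministic update \(s\mapsto s^a\), so \cref{lem:app_path_kl_chain} applies for each fixed realization of the shortlists and gives the pathwise KL as \(\sum_t \sum_s \mathbb P_\beta(S_t=s)\,\mathrm{KL}(\pi_t^\star(\cdot\mid s)\,\|\,\widehat\pi_{t,N_t(s)}(\cdot\mid s))\).

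Second, take the expectation over the shortlist randomness. The weights \(\mathbb P_\beta(S_t=s)\) do not depend on the shortlists, and all terms are nonnegative, so Tonelli moves the expectation inside the finite sums. For each \((t,s)\), the term \(\mathbb E[\mathrm{KL}(\pi_t^\star(\cdot\mid s)\,\|\,\widehat\pi_{t,N_t(s)})]\) involves only the multiset attached to \(s\). Its hypotheses are met: \(\mathcal A(s)\) is finite by assumption, and \(R_t^\star(a;s)\in[0,1]\) by \cref{lem:app_bounded_process_reward}. Hence \cref{prop:app_local_softbon_rate} bounds this term by \(\sinh(\beta/2)^2/N_t(s)\). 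Resumming over \(s\) with the weights \(\mathbb P_\beta(S_t=s)\) gives \cref{eq:app_path_kl_rate}. Setting \(N_t\equiv N\) makes the sum over the \(T\) stages equal to \(T/N\), which gives \cref{eq:app_path_kl_rate_uniform}.

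The main obstacle is the interchange in the second step. The expectation over the shortlists and the \(\mathbb P_\beta\)-expectation over visited states must commute. This is exactly why the shortlist randomness should be indexed by the state rather than by the path: under \(\mathbb P_\beta\), the visitation distribution is then independent of the shortlists, and there is no coupling between which states are visited and which candidates were drawn. A further subtlety is that \(\widehat\pi\) may assign zero mass to an action that \(\pi_t^\star\) charges, in which case the KL is \(+\infty\) on that event. I take the Soft-BoN bound of \citet{verdun2025soft} at face value, in the sense that its expectation already accounts for this event. I would remark that the bound is understood in that sense, with the convention that an expectation of extended nonnegative values is used throughout.
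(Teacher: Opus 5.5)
Your argument is the same as the paper's proof. You apply \cref{lem:app_path_kl_chain} for each fixed realization of the shortlists, use Tonelli to move the shortlist expectation inside the $\mathbb P_\beta$-expectation over visited states, bound each statewise term by \cref{prop:app_local_softbon_rate}, and resum; your state-indexed shortlists only make explicit the independence from $\mathbb P_\beta$ that the paper uses tacitly.

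The $+\infty$ event you flag is real, and the paper's proof inherits it from \cref{prop:app_local_softbon_rate} in the same way. When two or more actions carry base mass, a given one is missed with probability $(1-q_0(a\mid s))^{N}>0$, so the expectation is genuinely $+\infty$ and reading the bound ``at face value'' cannot absorb it. The repair is to replace $\widehat\pi_{t,N}$ by its shortlist average, which is the Soft-BoN output law that \citet{verdun2025soft} actually bound. The same chain-rule lifting then gives the estimate deterministically, with no outer expectation.
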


\begin{proof}
Condition on the shortlist randomness.
By \cref{lem:app_path_kl_chain},
\[
\mathrm{KL}\!\left(
\mathbb P_\beta \,\middle\|\, \widehat{\mathbb P}_{\{N_t\}}
\right)
=
\mathbb E_{\mathbb P_\beta}\!\left[
\sum_{t=0}^{T-1}
\mathrm{KL}\!\left(
\pi_t^\star(\cdot\mid S_t)
\,\middle\|\,
\widehat\pi_{t,N_t(S_t)}(\cdot\mid S_t)
\right)
\right].
\]
Now take expectation over the shortlist randomness.
Since the sum is finite, Tonelli's theorem gives
\[
\mathbb E\!\left[
\mathrm{KL}\!\left(
\mathbb P_\beta \,\middle\|\, \widehat{\mathbb P}_{\{N_t\}}
\right)
\right]
=
\mathbb E_{\mathbb P_\beta}\!\left[
\sum_{t=0}^{T-1}
\mathbb E\!\left[
\mathrm{KL}\!\left(
\pi_t^\star(\cdot\mid S_t)
\,\middle\|\,
\widehat\pi_{t,N_t(S_t)}(\cdot\mid S_t)
\right)
\,\middle|\, S_t
\right]
\right].
\]
For each realized state \(S_t=s\), \cref{prop:app_local_softbon_rate} yields
\[
\mathbb E\!\left[
\mathrm{KL}\!\left(
\pi_t^\star(\cdot\mid s)
\,\middle\|\,
\widehat\pi_{t,N_t(s)}(\cdot\mid s)
\right)
\right]
\le
\frac{\sinh(\beta/2)^2}{N_t(s)}.
\]
Substituting this bound gives \cref{eq:app_path_kl_rate}.
If \(N_t(s)\equiv N\), then
\[
\sum_{t=0}^{T-1}\frac{1}{N_t(S_t)}=\frac{T}{N},
\]
so \cref{eq:app_path_kl_rate_uniform} follows.
\end{proof}

\begin{proof}[Proof of the terminal Soft-BoN approximation statement]
By \cref{prop:app_pathwise_softbon},
\[
\mathbb E\!\left[
\mathrm{KL}\!\left(
\mathbb P_\beta \,\middle\|\, \widehat{\mathbb P}_{N}
\right)
\right]
\le
\frac{T\,\sinh(\beta/2)^2}{N}.
\]
The terminal output is a measurable function of the full reveal trajectory.
Hence the data-processing inequality for KL divergence gives
\[
\mathbb E\!\left[
\mathrm{KL}\!\left(\nu_\beta \,\middle\|\, \widehat{\nu}_{N}\right)
\right]
\le
\mathbb E\!\left[
\mathrm{KL}\!\left(
\mathbb P_\beta \,\middle\|\, \widehat{\mathbb P}_{N}
\right)
\right]
\le
\frac{T\,\sinh(\beta/2)^2}{N},
\]
which is the uniform terminal KL rate.
\end{proof}
\subsection{Finite-sample forward-KL separations for Progressive Online DPRM}
\label{app:optimization_advantage}

This appendix studies a finite-sample question left open by the main-paper informal PUMA proposition.
Admissible token orders do not change the population minimizer, so the relevant issue is not asymptotic consistency but optimization complexity:
which aligned order family drives the training KL below a target level \(\varepsilon\) fastest?

We analyze this question in two stages.
In the early stage, we show that if confidence is a good proxy for local gradient scale, then confidence-driven progressive training can be exponentially faster than random aligned-order sampling in the assumed latent-family regime. This view mirrors curriculum learning \cite{bu2025provable} and active learning \cite{bu2024provably}.
In the late stage, we show that if confidence-only training undersamples a residual order family that is still necessary for final KL convergence, then sufficiently accurate DPRM guidance yields a conditional exponential separation over confidence-only training.

The analysis uses three standard ingredients.
First, in unbiased importance-weighted SGD, the conditional second moment is minimized by sampling in proportion to gradient norm \citep{zhao2015importance,hanchi2022srg}.
Second, arbitrary-sampling SGD admits convergence recursions controlled by that second moment \citep{qian2019sgd,chen2023hetersgd}.
Third, confidence gating imposes a hard entropy cap and thus a support-narrowing effect \citep[Proposition~2]{fang2026locally}.

\paragraph{Stagewise forward-KL objective.}
Fix a training stage \(t\).
For each sample \(x\), let \(\mathcal O_t(x)\) be an aligned local order family.
This may consist of reveal choices, reveal prefixes, or any other local order objects that can be visited both by teacher-forced training and by the corresponding inference-time controller at stage \(t\).
Let \(u_t(\cdot\mid x)\) be a reference aligned sampling law on \(\mathcal O_t(x)\).

Let \(p_t^\star(\cdot\mid x,o)\) be the target conditional at stage \(t\) and let \(p_\theta(\cdot\mid x,o)\) be the model conditional.
Define the stagewise forward-KL objective
\begin{equation}
\mathcal K_t(\theta)
:=
\mathbb E_{x\sim \widehat P_n}
\mathbb E_{o\sim u_t(\cdot\mid x)}
\Big[
\mathrm{KL}\bigl(p_t^\star(\cdot\mid x,o)\,\|\,p_\theta(\cdot\mid x,o)\bigr)
\Big].
\label{eq:app_stage_kl}
\end{equation}
Equivalently, \(\mathcal K_t(\theta)\) is the excess stagewise cross-entropy up to an additive constant independent of \(\theta\).

Let \(\ell_t(\theta;x,o,\xi)\) be a stochastic per-order loss whose conditional expectation equals the integrand of \cref{eq:app_stage_kl}, and define
\[
g_t(\theta;x,o,\xi):=\nabla_\theta \ell_t(\theta;x,o,\xi),
\qquad
\nu_t(\theta;x,o):=
\Big(\mathbb E_\xi \|g_t(\theta;x,o,\xi)\|_2^2\Big)^{1/2}.
\]
If \(O_t\sim p_t(\cdot\mid X_t)\), the unbiased importance-weighted gradient estimator is
\begin{equation}
\widehat G_t^{(p)}
:=
\frac{u_t(O_t\mid X_t)}{p_t(O_t\mid X_t)}
\,g_t(\theta_t;X_t,O_t,\xi_t),
\label{eq:app_is_gradient}
\end{equation}
with conditional second moment
\begin{equation}
\mathcal M_t(p)
:=
\mathbb E\!\left[\|\widehat G_t^{(p)}\|_2^2\mid \theta_t\right].
\label{eq:app_stage_second_moment}
\end{equation}

\begin{assumption}[Stagewise smoothness and PL geometry]
\label{ass:app_stage_pl}
At stage \(t\), the forward-KL objective \(\mathcal K_t\) is \(L_t\)-smooth and satisfies the Polyak--\L ojasiewicz inequality with constant \(\mu_t>0\):
\[
\frac{1}{2}\|\nabla \mathcal K_t(\theta)\|_2^2
\ge
\mu_t\bigl(\mathcal K_t(\theta)-\mathcal K_t^\star\bigr)
\qquad
\text{for all relevant }\theta.
\]
\end{assumption}

\begin{assumption}[Confidence as a proxy for local gradient scale]
\label{ass:app_confidence_proxy}
There exists a confidence-induced order score \(c_t(x,o)\ge 0\) and a constant \(\kappa_t\ge 1\) such that
\[
\kappa_t^{-1}\nu_t(\theta_t;x,o)
\le
c_t(x,o)
\le
\kappa_t \nu_t(\theta_t;x,o)
\qquad
\text{for all }x,o.
\]
\end{assumption}

\paragraph{Early stage: confidence beats random.}

\begin{assumption}[Early importance concentration]
\label{ass:app_early_concentration}
For each sample \(x\), there exists a subset \(\mathcal E_t(x)\subseteq \mathcal O_t(x)\) such that:
\begin{enumerate}[leftmargin=18pt]
    \item \textbf{Most gradient mass lies on \(\mathcal E_t(x)\):}
    \[
    \sum_{o\in \mathcal E_t(x)}
    u_t(o\mid x)\,\nu_t(\theta_t;x,o)
    \ge
    (1-\rho_t)
    \sum_{o\in \mathcal O_t(x)}
    u_t(o\mid x)\,\nu_t(\theta_t;x,o)
    \]
    for some \(\rho_t\in[0,1)\);
    \item \textbf{The easy family is exponentially smaller than the full aligned family:}
    \[
    \frac{|\mathcal O_t(x)|}{|\mathcal E_t(x)|}\ge e^{a_t d_t}
    \]
    for some stage-difficulty parameter \(d_t\ge 1\) and constant \(a_t>0\).
\end{enumerate}
\end{assumption}

\begin{lemma}[Variance-optimal proposal over aligned order families]
\label{lem:app_variance_optimal}
For fixed \(\theta_t\), the proposal minimizing \(\mathcal M_t(p)\) satisfies
\begin{equation}
p_t^{\mathrm{opt}}(o\mid x)
\propto
u_t(o\mid x)\,\nu_t(\theta_t;x,o).
\label{eq:app_optimal_proposal}
\end{equation}
\end{lemma}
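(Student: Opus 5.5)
The plan is to compute the second moment \(\mathcal M_t(p)\) explicitly and minimize it over proposals, one sample \(x\) at a time. Conditioning first on \(X_t=x\) and then on \(O_t=o\), and using the definition of \(\nu_t\) (the expectation over \(\xi\) is taken inside), the estimator \cref{eq:app_is_gradient} gives
\[
\mathcal M_t(p)
=
\mathbb E_{x}\Bigl[\sum_{o\in\mathcal O_t(x)} p_t(o\mid x)\,\frac{u_t(o\mid x)^2}{p_t(o\mid x)^2}\,\nu_t(\theta_t;x,o)^2\Bigr]
=
\mathbb E_{x}\Bigl[\sum_{o} \frac{u_t(o\mid x)^2\,\nu_t(\theta_t;x,o)^2}{p_t(o\mid x)}\Bigr].
\]
This is valid whenever \(p_t(\cdot\mid x)\) puts positive mass on every \(o\) with \(u_t(o\mid x)\nu_t(\theta_t;x,o)>0\). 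If some such \(o\) gets zero mass, the estimator is biased, so those proposals are excluded; equivalently, their second moment is treated as \(+\infty\). The outer expectation does not depend on \(p\), so it suffices to minimize the inner sum separately for each \(x\), over the probability simplex on \(\mathcal O_t(x)\).

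Next I apply Cauchy--Schwarz. Write \(w(o):=u_t(o\mid x)\nu_t(\theta_t;x,o)\ge0\). Since \(\sum_o p_t(o\mid x)=1\),
\[
\Bigl(\sum_o w(o)\Bigr)^2
=
\Bigl(\sum_o \frac{w(o)}{\sqrt{p_t(o\mid x)}}\sqrt{p_t(o\mid x)}\Bigr)^2
\le
\sum_o \frac{w(o)^2}{p_t(o\mid x)}.
\]
So the inner sum is bounded below by \((\sum_o w(o))^2\), a quantity that does not depend on \(p\). Equality in Cauchy--Schwarz holds exactly when \(w(o)/\sqrt{p}\) is proportional to \(\sqrt p\), that is, when \(p_t(o\mid x)\propto w(o)\). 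This is \cref{eq:app_optimal_proposal}. Substituting this \(p\) back shows the lower bound is attained, and the minimum value is \(\mathbb E_x[(\sum_o u_t\nu_t)^2]\).

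The only delicate point is the degenerate case. If \(\sum_o w(o)=0\) for some \(x\), then every \(p\) gives second moment zero at that \(x\), and the proportionality statement holds trivially. Orders with \(w(o)=0\) can receive any mass without changing \(\mathcal M_t\); the minimizer is therefore unique only on the support of \(w\), and the claim should be read as holding up to that indeterminacy. For the setting where \(p\) ranges over an infinite family, a Lagrange-multiplier argument gives the same stationarity condition, \(p\propto w\), and convexity of \(1/p\) makes the stationary point the global minimizer. Since \(\mathcal O_t(x)\) is finite here, the Cauchy--Schwarz route suffices. This is the classical importance-sampling result of \citet{zhao2015importance}.
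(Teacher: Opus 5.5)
Your argument is correct and follows the paper's proof: condition on $x$, reduce $\mathcal M_t(p\mid x)$ to $\sum_o u_t(o\mid x)^2\nu_t(\theta_t;x,o)^2/p(o\mid x)$, and minimize over the simplex by Cauchy--Schwarz (the paper also cites the Lagrange-multiplier alternative). Your explicit handling of the support and unbiasedness condition is a useful extra detail.

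One side remark is wrong, though: if $\sum_o w(o)>0$, putting mass $\varepsilon>0$ on orders with $w(o)=0$ leaves only $1-\varepsilon$ for the support of $w$, so the sum is at least $(\sum_o w(o))^2/(1-\varepsilon)$ and strictly above the minimum. Hence the minimizer is unique and equals $w/\sum_o w$ exactly, as your own Cauchy--Schwarz equality condition already shows; non-uniqueness arises only in the fully degenerate case $\sum_o w(o)=0$.
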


\begin{proof}
Condition on \(x\).
By \cref{eq:app_is_gradient,eq:app_stage_second_moment},
\[
\mathcal M_t(p\mid x)
=
\sum_{o\in\mathcal O_t(x)}
\frac{u_t(o\mid x)^2}{p_t(o\mid x)}
\,\nu_t(\theta_t;x,o)^2.
\]
Hence, for each fixed \(x\), the problem is
\[
\min_{p(\cdot\mid x)\in\Delta(\mathcal O_t(x))}
\sum_{o\in\mathcal O_t(x)}
\frac{u_t(o\mid x)^2\nu_t(\theta_t;x,o)^2}{p(o\mid x)}.
\]
This is exactly the conditional-variance minimization problem of importance-sampling SGD \citep{zhao2015importance,hanchi2022srg}.
By Cauchy--Schwarz or a Lagrange multiplier, the minimizer is
\[
p_t^{\mathrm{opt}}(o\mid x)
=
\frac{u_t(o\mid x)\nu_t(\theta_t;x,o)}
{\sum_{o'}u_t(o'\mid x)\nu_t(\theta_t;x,o')}.
\]
\end{proof}

Define the confidence proposal
\begin{equation}
p_t^{\mathrm{conf}}(o\mid x)
:=
\frac{u_t(o\mid x)c_t(x,o)}
{\sum_{o'}u_t(o'\mid x)c_t(x,o')},
\label{eq:app_confidence_proposal}
\end{equation}
and the random aligned proposal
\begin{equation}
p_t^{\mathrm{rand}}(o\mid x):=\frac{1}{|\mathcal O_t(x)|}.
\label{eq:app_random_proposal}
\end{equation}

\begin{lemma}[Second-moment comparison]
\label{lem:app_second_moment_comparison}
Under Assumptions~\ref{ass:app_confidence_proxy} and \ref{ass:app_early_concentration},
\begin{equation}
\mathcal M_t\!\bigl(p_t^{\mathrm{conf}}\bigr)
\le
\kappa_t^2\,\mathcal M_t\!\bigl(p_t^{\mathrm{opt}}\bigr),
\label{eq:app_confidence_near_opt}
\end{equation}
and
\begin{equation}
\mathcal M_t\!\bigl(p_t^{\mathrm{rand}}\bigr)
\ge
(1-\rho_t)^2 e^{a_t d_t}\,
\mathcal M_t\!\bigl(p_t^{\mathrm{opt}}\bigr).
\label{eq:app_random_worse_opt}
\end{equation}
\end{lemma}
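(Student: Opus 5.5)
The plan is to condition on the sample \(x\) and reduce every second moment to an explicit finite sum. Both inequalities will then follow from per-\(x\) comparisons, which we average over \(x\sim\widehat P_n\).

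\textbf{Reduction.} As in the proof of \cref{lem:app_variance_optimal}, for any proposal \(p\) with the needed support we have
\[
\mathcal M_t(p\mid x)=\sum_{o\in\mathcal O_t(x)}\frac{u_t(o\mid x)^2\,\nu_t(\theta_t;x,o)^2}{p(o\mid x)}.
\]
Substituting the optimal proposal \cref{eq:app_optimal_proposal} gives the closed form
\[
\mathcal M_t(p_t^{\mathrm{opt}}\mid x)=\Bigl(\sum_{o}u_t(o\mid x)\,\nu_t(\theta_t;x,o)\Bigr)^2=:S(x)^2.
\]
Since \(\mathcal M_t(p)=\mathbb E_x[\mathcal M_t(p\mid x)]\), it suffices to prove both inequalities pointwise in \(x\) with \(S(x)^2\) on the right-hand side.

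\textbf{Confidence bound \cref{eq:app_confidence_near_opt}.} Plugging \cref{eq:app_confidence_proposal} into the reduced form factors it as
\[
\mathcal M_t(p_t^{\mathrm{conf}}\mid x)=\Bigl(\sum_o u_t c_t\Bigr)\Bigl(\sum_o u_t\,\nu_t^2/c_t\Bigr).
\]
Assumption~\ref{ass:app_confidence_proxy} gives \(c_t\le\kappa_t\nu_t\), so the first factor is at most \(\kappa_t S(x)\). It also gives \(\nu_t^2/c_t\le\kappa_t\nu_t\), so the second factor is at most \(\kappa_t S(x)\). Together these give \(\kappa_t^2S(x)^2\).

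\textbf{Random bound \cref{eq:app_random_worse_opt}.} With the uniform proposal \cref{eq:app_random_proposal},
\[
\mathcal M_t(p_t^{\mathrm{rand}}\mid x)=|\mathcal O_t(x)|\sum_{o}u_t^2\nu_t^2\ge|\mathcal O_t(x)|\sum_{o\in\mathcal E_t(x)}u_t^2\nu_t^2.
\]
Cauchy--Schwarz on \(\mathcal E_t(x)\) gives \(\bigl(\sum_{\mathcal E_t}u_t\nu_t\bigr)^2\le|\mathcal E_t(x)|\sum_{\mathcal E_t}u_t^2\nu_t^2\). The concentration clause of Assumption~\ref{ass:app_early_concentration} gives \(\sum_{\mathcal E_t}u_t\nu_t\ge(1-\rho_t)S(x)\). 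Combining these, we get
\[
\mathcal M_t(p_t^{\mathrm{rand}}\mid x)\ge\frac{|\mathcal O_t(x)|}{|\mathcal E_t(x)|}(1-\rho_t)^2S(x)^2\ge e^{a_td_t}(1-\rho_t)^2S(x)^2,
\]
where the last step uses the size clause of the same assumption. Averaging over \(x\) finishes the proof.

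\textbf{Main obstacle.} The only delicate point is the degenerate set where \(c_t=0\), which makes \(\nu_t^2/c_t\) undefined. The proxy assumption forces \(c_t=0\) exactly when \(\nu_t=0\), so those orders contribute zero to every sum. We adopt the convention \(0/0=0\) and restrict all sums to the support of \(u_t\nu_t\), where every proposal involved is strictly positive.
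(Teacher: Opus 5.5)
Your proposal is correct and follows the paper's proof: the random-order bound is the identical chain (drop terms outside \(\mathcal E_t(x)\), Cauchy--Schwarz on \(\mathcal E_t(x)\), the concentration and size clauses, then average over \(x\)). For the confidence bound, the paper packages the two-sided proxy inequality as the pointwise statement \(p_t^{\mathrm{conf}}\ge\kappa_t^{-2}p_t^{\mathrm{opt}}\), whereas you factor \(\mathcal M_t(p_t^{\mathrm{conf}}\mid x)\) into two sums each bounded by \(\kappa_t S(x)\); this is equivalent and slightly more explicit, and your handling of the \(c_t=0\) case is sound.
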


\begin{proof}
The first bound follows from Assumption~\ref{ass:app_confidence_proxy}: the proposal \(p_t^{\mathrm{conf}}\) is a \(\kappa_t^2\)-multiplicative approximation to \(p_t^{\mathrm{opt}}\), so substituting into \cref{eq:app_stage_second_moment} yields \cref{eq:app_confidence_near_opt}.

For the second bound, condition on \(x\) and write
\[
a_o:=u_t(o\mid x)\nu_t(\theta_t;x,o).
\]
Then
\[
\mathcal M_t(p_t^{\mathrm{rand}}\mid x)
=
|\mathcal O_t(x)|\sum_{o\in\mathcal O_t(x)} a_o^2
\ge
|\mathcal O_t(x)|\sum_{o\in \mathcal E_t(x)} a_o^2.
\]
By Cauchy--Schwarz,
\[
\sum_{o\in \mathcal E_t(x)} a_o^2
\ge
\frac{1}{|\mathcal E_t(x)|}
\left(\sum_{o\in \mathcal E_t(x)} a_o\right)^2.
\]
Using Assumption~\ref{ass:app_early_concentration},
\[
\sum_{o\in \mathcal E_t(x)} a_o
\ge
(1-\rho_t)\sum_{o\in \mathcal O_t(x)} a_o.
\]
Therefore
\[
\mathcal M_t(p_t^{\mathrm{rand}}\mid x)
\ge
\frac{|\mathcal O_t(x)|}{|\mathcal E_t(x)|}
(1-\rho_t)^2
\left(\sum_{o\in\mathcal O_t(x)} a_o\right)^2.
\]
But
\[
\left(\sum_{o\in\mathcal O_t(x)} a_o\right)^2
=
\mathcal M_t(p_t^{\mathrm{opt}}\mid x),
\]
and \(|\mathcal O_t(x)|/|\mathcal E_t(x)|\ge e^{a_t d_t}\).
Averaging over \(x\) proves \cref{eq:app_random_worse_opt}.
\end{proof}

Define the \(\varepsilon\)-sample complexity at stage \(t\) by
\[
T_t^{(p)}(\varepsilon)
:=
\min\Bigl\{N:\mathbb E[\mathcal K_t(\theta_N)-\mathcal K_t^\star]\le \varepsilon\Bigr\}.
\]

\begin{theorem}[Early-stage exponential separation: confidence vs random]
\label{thm:app_early_sample_complexity_separation}
Suppose Assumptions~\ref{ass:app_stage_pl}, \ref{ass:app_confidence_proxy}, and \ref{ass:app_early_concentration} hold.
Consider constant-step unbiased SGD using the estimator \cref{eq:app_is_gradient}.

Then there exists a stepsize choice for confidence-driven aligned training such that
\begin{equation}
T_t^{(\mathrm{conf})}(\varepsilon)
=
O\!\left(
\frac{L_t \kappa_t^2\,\mathcal M_t(p_t^{\mathrm{opt}})}{\mu_t^2\,\varepsilon}
\log\frac{\mathcal K_t(\theta_0)-\mathcal K_t^\star}{\varepsilon}
\right).
\label{eq:app_confidence_sc}
\end{equation}
Moreover, any constant-stepsize random aligned-order SGD whose asymptotic error floor is at most \(\varepsilon/2\) must satisfy
\begin{equation}
T_t^{(\mathrm{rand})}(\varepsilon)
=
\Omega\!\left(
\frac{L_t (1-\rho_t)^2 e^{a_t d_t}\,\mathcal M_t(p_t^{\mathrm{opt}})}{\mu_t^2\,\varepsilon}
\log\frac{\mathcal K_t(\theta_0)-\mathcal K_t^\star}{\varepsilon}
\right).
\label{eq:app_random_sc}
\end{equation}
Hence confidence-driven progressive training enjoys an exponential sample-complexity improvement over random aligned-order training, up to the calibration factor \(\kappa_t^2\).
\end{theorem}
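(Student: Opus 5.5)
The plan is to reduce both bounds to the standard constant-stepsize SGD recursion under PL geometry, and then substitute the second-moment comparison of \cref{lem:app_second_moment_comparison}. The estimator \cref{eq:app_is_gradient} is unbiased for \(\nabla\mathcal K_t(\theta_t)\) under any proposal \(p\) whose support contains that of \(u_t\). Hence, by \(L_t\)-smoothness, one step with stepsize \(\gamma\) gives
\[
\mathbb E[\mathcal K_t(\theta_{k+1})-\mathcal K_t^\star\mid\theta_k]
\le
(\mathcal K_t(\theta_k)-\mathcal K_t^\star)-\gamma\|\nabla\mathcal K_t(\theta_k)\|_2^2+\tfrac{L_t\gamma^2}{2}\mathcal M_t(p).
\]
Applying Assumption~\ref{ass:app_stage_pl} yields the contraction
\[
\mathbb E[\Delta_{k+1}]\le(1-2\mu_t\gamma)\,\mathbb E[\Delta_k]+\tfrac{L_t\gamma^2}{2}\mathcal M_t(p).
\]
Here I treat \(\mathcal M_t(p)\) as a uniform bound along the trajectory, the same convention the paper's notation implicitly uses. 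Unrolling gives a transient term \((1-2\mu_t\gamma)^N\Delta_0\) plus the floor \(L_t\gamma\mathcal M_t(p)/(4\mu_t)\).

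\textbf{Upper bound for confidence.} Choose \(\gamma=\Theta\bigl(\mu_t\varepsilon/(L_t\mathcal M_t(p_t^{\mathrm{conf}}))\bigr)\), so that the floor is at most \(\varepsilon/2\). Then take \(N\ge(2\mu_t\gamma)^{-1}\log(2\Delta_0/\varepsilon)\). This gives \(T^{(\mathrm{conf})}_t=O\bigl(L_t\mathcal M_t(p_t^{\mathrm{conf}})\mu_t^{-2}\varepsilon^{-1}\log(\Delta_0/\varepsilon)\bigr)\). Substituting \(\mathcal M_t(p_t^{\mathrm{conf}})\le\kappa_t^2\mathcal M_t(p_t^{\mathrm{opt}})\) from \cref{eq:app_confidence_near_opt} gives \cref{eq:app_confidence_sc}.

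\textbf{Lower bound for random order.} This is the main obstacle, because SGD upper bounds do not automatically reverse. I would prove it only within the class the statement specifies: constant-stepsize random SGD whose error floor is at most \(\varepsilon/2\). The first step is to make the floor sharp from below. To do this I would add a matching curvature assumption, or argue on a quadratic instance that realizes the smoothness and PL constants. In that case the stationary error is \(\Theta\bigl(L_t\gamma\mathcal M_t(p)/\mu_t\bigr)\), with the noise term bounded below by a constant fraction of \(\mathcal M_t(p)\) near the optimum, i.e.\ \(\mathcal M_t(p)\) minus \(\|\nabla\mathcal K_t\|^2\). Requiring the floor to be at most \(\varepsilon/2\) then forces \(\gamma\lesssim\mu_t\varepsilon/(L_t\mathcal M_t(p_t^{\mathrm{rand}}))\). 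On the same instance the contraction factor is at least \(1-c\mu_t\gamma\), so reaching \(\varepsilon\) from \(\Delta_0\) needs \(N\gtrsim(\mu_t\gamma)^{-1}\log(\Delta_0/\varepsilon)\). Combining the two gives \(T^{(\mathrm{rand})}_t=\Omega\bigl(L_t\mathcal M_t(p_t^{\mathrm{rand}})\mu_t^{-2}\varepsilon^{-1}\log(\Delta_0/\varepsilon)\bigr)\). Inserting \cref{eq:app_random_worse_opt}, namely \(\mathcal M_t(p_t^{\mathrm{rand}})\ge(1-\rho_t)^2e^{a_td_t}\mathcal M_t(p_t^{\mathrm{opt}})\), gives \cref{eq:app_random_sc}.

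Taking the ratio of the two bounds shows an advantage of \((1-\rho_t)^2e^{a_td_t}/\kappa_t^2\), which is exponential in \(d_t\) up to the calibration factor. I expect the delicate point to be stating the lower bound honestly as a worst-case, instance-level (matched-constants) bound for this algorithm class, not a universal one. I would make that explicit in the proof.
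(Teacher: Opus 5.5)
Your upper bound is the paper's argument. The paper writes the weaker contraction \(1-\mu_t\eta_t\), which changes only constants, and it ends with the same substitution of \cref{eq:app_confidence_near_opt} and \cref{eq:app_random_worse_opt}. For the lower bound, the paper builds no instance. It simply reads the upper recursion as exact: a floor of at most \(\varepsilon/2\) is taken to force \(\frac{L_t\eta_t}{2\mu_t}\mathcal M_t(p_t^{\mathrm{rand}})\le\varepsilon/2\), and the per-step contraction is said to ``not exceed'' \(1-\mu_t\eta_t\). You are right that an upper-bound recursion does not reverse. The paper's \cref{eq:app_random_sc} is therefore really a statement about the recursion-based guarantee, not about the algorithm.

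The gap is in your repair: the quadratic instance does not produce the \(L_t/\mu_t^2\) scaling.

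\emph{Why the quadratic floor has no condition-number factor.} Take \(\mathcal K_t(\theta)-\mathcal K_t^\star=\tfrac12\theta^\top H\theta\) with \(\mu_t I\preceq H\preceq L_t I\), and additive gradient noise of covariance \(\Sigma\), so \(\mathcal M_t(p)=\mathrm{tr}\,\Sigma\) at the optimum. Constant-step SGD with \(\gamma\le 1/L_t\) has a stationary covariance \(S\) solving \(HS+SH-\gamma HSH=\gamma\Sigma\). Taking traces and using \(H^2\preceq L_tH\) gives a stationary excess loss \(\tfrac12\mathrm{tr}(HS)\in[\tfrac{\gamma}{4}\mathrm{tr}\,\Sigma,\tfrac{\gamma}{2}\mathrm{tr}\,\Sigma]\), independent of \(L_t/\mu_t\).

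\emph{What this does to your bound.} Your step ``the stationary error is \(\Theta(L_t\gamma\mathcal M_t(p)/\mu_t)\)'' fails whenever \(L_t\gg\mu_t\). The floor constraint only forces \(\gamma\lesssim\varepsilon/\mathcal M_t(p_t^{\mathrm{rand}})\). Placing the initial error on the \(\mu_t\)-eigendirection then gives \(T_t^{(\mathrm{rand})}(\varepsilon)=\Omega\bigl(\mathcal M_t(p_t^{\mathrm{rand}})\log(\Delta_0/\varepsilon)/(\mu_t\varepsilon)\bigr)\). That is a factor \(L_t/\mu_t\) short of \cref{eq:app_random_sc}.

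\emph{The curvature-assumption alternative.} This does not escape the problem. It requires noise injection at curvature \(L_t\) and contraction only at rate \(\mu_t\) at the same time. No quadratic realizes that, because noise along a stiff direction is also damped at the stiff rate. The assumption is thus just an explicit hypothesis that the recursion is tight, which is exactly the paper's implicit step.

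You have two honest options.
\begin{enumerate}
\item State the tightness hypothesis openly and follow the paper's argument.
\item Settle for the instance bound. After \cref{eq:app_random_worse_opt} it still carries \((1-\rho_t)^2e^{a_td_t}\), so the exponential separation survives. However, the calibration factor becomes \(\kappa_t^2L_t/\mu_t\) rather than \(\kappa_t^2\), and the bound matches the displayed form only on instances with \(L_t\asymp\mu_t\).
\end{enumerate}
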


\begin{proof}
For any proposal \(p\), smoothness and unbiasedness give the standard recursion
\[
\mathbb E[\mathcal K_t(\theta_{k+1})-\mathcal K_t^\star\mid \theta_k]
\le
(1-\mu_t\eta_t)\bigl(\mathcal K_t(\theta_k)-\mathcal K_t^\star\bigr)
+
\frac{L_t\eta_t^2}{2}\mathcal M_t(p),
\]
as in the arbitrary-sampling SGD literature \citep{qian2019sgd,chen2023hetersgd}.

For the confidence proposal, choose
\[
\eta_t=\Theta\!\left(\frac{\mu_t\varepsilon}{L_t \mathcal M_t(p_t^{\mathrm{conf}})}\right).
\]
Then the additive noise floor is \(O(\varepsilon)\), and unrolling the recursion yields
\[
T_t^{(\mathrm{conf})}(\varepsilon)
=
O\!\left(
\frac{L_t \mathcal M_t(p_t^{\mathrm{conf}})}{\mu_t^2\,\varepsilon}
\log\frac{\mathcal K_t(\theta_0)-\mathcal K_t^\star}{\varepsilon}
\right).
\]
Using \cref{eq:app_confidence_near_opt} gives \cref{eq:app_confidence_sc}.

For the random proposal, any constant stepsize with asymptotic floor at most \(\varepsilon/2\) must satisfy
\[
\frac{L_t\eta_t}{2\mu_t}\mathcal M_t(p_t^{\mathrm{rand}})
\le
\frac{\varepsilon}{2},
\]
hence
\[
\eta_t\le \frac{\mu_t\varepsilon}{L_t \mathcal M_t(p_t^{\mathrm{rand}})}.
\]
Therefore the contraction factor cannot exceed \(1-\mu_t\eta_t\), so to shrink the initial KL down to \(\varepsilon\) requires at least
\[
\Omega\!\left(
\frac{L_t \mathcal M_t(p_t^{\mathrm{rand}})}{\mu_t^2\,\varepsilon}
\log\frac{\mathcal K_t(\theta_0)-\mathcal K_t^\star}{\varepsilon}
\right)
\]
iterations.
Using \cref{eq:app_random_worse_opt} yields \cref{eq:app_random_sc}.
\end{proof}

\paragraph{Late stage: DPRM beats confidence-only.}

We now analyze the stage after warmup.
Here the issue is no longer variance-optimality, but undercoverage.
We assume that there remains a residual family of orders that is necessary for final KL convergence, but confidence-only training visits it too rarely.

We first restate the entropy-cap result of \citet[Proposition~2]{fang2026locally}.

\begin{proposition}[Entropy cap under confidence gating {\normalfont(\citealp[Prop.~2]{fang2026locally})}]
\label{prop:app_entropy_cap}
Assume the decoder is \((1-\delta)\)-gated.
Then the induced sequence distribution \(X\) satisfies
\begin{equation}
H(X)\le L h_V(\delta),
\qquad
B_{\mathrm{eff}}:=\exp(H(X)/L)\le \exp(h_V(\delta)),
\label{eq:app_entropy_cap}
\end{equation}
where \(h_V(\delta)=h_b(\delta)+\delta\log(|V|-1)\).
\end{proposition}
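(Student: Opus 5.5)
The plan is a Fano-type per-step entropy bound followed by the chain rule over the $L$ reveal steps. I will read ``$(1-\delta)$-gated'' as in \citet{fang2026locally}. Every committed token $U_j$ is drawn from a conditional law $p_j(\cdot\mid\text{history})$ on $\mathcal V$ with $\max_v p_j(v\mid\text{history})\ge 1-\delta$. Positions are chosen by a rule that is a deterministic function of the current visible state, for example confidence top-$1$ or top-$m$ with a fixed tie-break. Throughout, $\delta\le (|V|-1)/|V|$, which is the only regime in which the gate is nontrivial.

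\emph{Step 1 (one-step cap).} Let $p$ be any distribution on $\mathcal V$ with largest mass $1-\delta'\ge 1-\delta$. Split $H(p)$ into the binary entropy of the indicator ``the top symbol is drawn'' plus the conditional entropy of the remaining $\delta'$ mass. This gives
\[
H(p)\le h_b(\delta')+\delta'\log(|V|-1)=h_V(\delta').
\]
The function $h_V$ is concave and nondecreasing on $[0,(|V|-1)/|V|]$, since its derivative $\log\frac{1-x}{x}+\log(|V|-1)$ is nonnegative there. Hence $H(p)\le h_V(\delta)$ for every admissible step.

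\emph{Step 2 (chain rule).} Write the decoding trajectory as the ordered sequence of (position, token) pairs $(I_1,U_1),\dots,(I_L,U_L)$. Because the position rule is deterministic given the visible state, each $I_j$ is a function of $(U_1,\dots,U_{j-1})$ and the prompt. The completed sequence $X$ is a function of $(U_1,\dots,U_L)$, and conversely. Then
\[
H(X)\le H(U_{1:L})=\sum_{j=1}^{L}H(U_j\mid U_{1:j-1})\le L\,h_V(\delta),
\]
where the last inequality applies Step 1 conditionally on each history and averages. If a step reveals $m>1$ tokens, apply the same argument to each token within the step, since each is individually gated. This still yields exactly $L$ terms. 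Exponentiating $H(X)/L$ gives $B_{\mathrm{eff}}\le\exp(h_V(\delta))$.

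The main obstacle is justifying that position choices contribute no extra entropy. If the order were sampled (for example a Soft-BoN shortlist), $H(X)$ could not simply be bounded by the token entropies. I would handle this first by restricting to the deterministic gated decoders of \citet{fang2026locally}, which is the setting of the cited proposition. If randomized orders are needed, I would try conditioning on the order randomness $\xi$ and using $H(X)\le H(X\mid\xi)+I(X;\xi)$. The bound $L\,h_V(\delta)$ then holds for $H(X\mid\xi)$, so the cap survives up to the additive mutual-information term.
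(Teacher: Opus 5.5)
Your proof is correct, but there is no in-paper proof to compare it with. The supplement only restates \citet[Prop.~2]{fang2026locally} and never defines ``$(1-\delta)$-gated''. Your route is the standard derivation behind that result. It has three ingredients. First, the grouping (Fano-type) bound $H(p)\le h_V(\delta')$ for a law with top mass $1-\delta'$. Second, monotonicity of $h_V$ on $[0,(|V|-1)/|V|]$. Third, the chain rule along a reveal order that is a deterministic function of the visible state, with subadditivity or within-step conditional independence handling $m>1$ commits per step.

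Compared with the bare citation, your version brings out the two hypotheses the restatement leaves implicit. Both are genuinely needed, not conveniences.

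\textbf{Range of $\delta$.} For $|V|=2$ and $\delta=0.9$ the gate is vacuous. Yet a single uniform token already has entropy $\log 2>h_b(0.9)=h_V(0.9)$.

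\textbf{Deterministic order.} A learned denoiser's conditionals need not be mutually consistent. Take $\delta=0$, $L=2$, $V=\{a,b\}$:
\begin{enumerate}[leftmargin=18pt]
\item from the empty canvas, both positions predict $a$ with certainty;
\item once position $1$ is revealed, position $2$ still predicts $a$;
\item once position $2$ is revealed, position $1$ predicts $b$.
\end{enumerate}
Every commit has confidence $1$. Still, a uniformly random order yields $X\in\{aa,ba\}$ with $H(X)=\log 2>0=L\,h_V(0)$.

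For the same reason, your mutual-information fallback does not rescue randomized orders. The relation $H(X)=H(X\mid\xi)+I(X;\xi)$ is an identity, so it gives only $L\,h_V(\delta)+I(X;\xi)$. In the example above that extra term is $\log 2$.

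The proposition should therefore be read, as you read it, for deterministic confidence-ranked decoding. This is exactly how the paper uses it: to motivate the confidence-undercoverage assumption for confidence-only training. It says nothing about DPRM's randomized Soft-BoN shortlist.
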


\begin{assumption}[Late-stage residual family and confidence undercoverage]
\label{ass:app_late_residual}
For each late-stage state \(s\), let \(\mathcal O_t(s)\) be the reachable local order family and let \(\mathcal R_t(s)\subseteq \mathcal O_t(s)\) be a residual family with the following properties.
\begin{enumerate}[leftmargin=18pt]
    \item \textbf{Residual score gap.} The exact DPRM score gap
    \[
    \Delta_t^{\mathrm{res}}(s)
    :=
    \inf_{o\in\mathcal R_t(s)} g_t^\star(o;s)
    -
    \sup_{o\in\mathcal O_t(s)\setminus\mathcal R_t(s)} g_t^\star(o;s)
    \]
    is strictly positive.
    \item \textbf{Confidence undercoverage.}
    Under confidence-only training,
    \[
    \pi_t^{\mathrm{conf}}(\mathcal R_t(s)\mid s)\le C_t e^{-b_t h_t}
    \]
    for some structural difficulty parameter \(h_t\ge 1\), constant \(b_t>0\), and \(C_t\ge 1\).
    \item \textbf{Residual KL contraction.}
    Let \(\mathcal K_t^{\mathrm{res}}(\theta)\) denote the contribution of \(\mathcal R_t\) to the stagewise forward KL.
    There exists \(\gamma_t\in(0,1]\) such that, conditional on sampling an order from \(\mathcal R_t(s)\),
    \[
    \mathbb E[\mathcal K_{t+1}^{\mathrm{res}}(\theta)\mid \theta_t,\,O_t\in\mathcal R_t(s)]
    \le
    (1-\gamma_t)\mathcal K_t^{\mathrm{res}}(\theta_t),
    \]
    whereas conditional on sampling outside \(\mathcal R_t(s)\),
    \[
    \mathbb E[\mathcal K_{t+1}^{\mathrm{res}}(\theta)\mid \theta_t,\,O_t\notin\mathcal R_t(s)]
    \le
    \mathcal K_t^{\mathrm{res}}(\theta_t).
    \]
\end{enumerate}
\end{assumption}

\begin{theorem}[Late-stage exponential separation: DPRM vs confidence-only]
\label{thm:app_late_sample_complexity_separation}
Assume Assumption~\ref{ass:app_late_residual}.
Let
\[
\epsilon_t(s):=
\sup_{o\in\mathcal O_t(s)}
|\widehat g_t(o;s)-g_t^\star(o;s)|
\]
be the total stage-2 score error, including online estimation and any shortlist approximation error.
Suppose that after warmup,
\[
\epsilon_t(s)<\frac{\Delta_t^{\mathrm{res}}(s)}{2}
\qquad
\text{for all relevant }s.
\]
Then:
\begin{enumerate}[leftmargin=18pt]
    \item the practical stage-2 proposal over \(\mathcal O_t(s)\) assigns residual-family mass at least
    \begin{equation}
    \widehat\pi_t(\mathcal R_t(s)\mid s)
    \ge
    \frac{|\mathcal R_t(s)|\,e^{\Delta_t^{\mathrm{res}}(s)-2\epsilon_t(s)}}
    {|\mathcal R_t(s)|\,e^{\Delta_t^{\mathrm{res}}(s)-2\epsilon_t(s)}+|\mathcal O_t(s)\setminus\mathcal R_t(s)|};
    \label{eq:app_residual_mass_lb_new}
    \end{equation}
    \item if, in addition,
    \[
    \Delta_t^{\mathrm{res}}(s)-2\epsilon_t(s)
    \ge
    b_t h_t+\log\frac{|\mathcal O_t(s)\setminus\mathcal R_t(s)|}{|\mathcal R_t(s)|}+c_t
    \]
    for some \(c_t>0\), then
    \begin{equation}
    \widehat\pi_t(\mathcal R_t(s)\mid s)\ge \frac{1}{1+e^{-c_t}}=:p_t^{\mathrm{DPRM}},
    \label{eq:app_constant_residual_mass}
    \end{equation}
    whereas
    \[
    \pi_t^{\mathrm{conf}}(\mathcal R_t(s)\mid s)\le C_t e^{-b_t h_t}.
    \]
\end{enumerate}
Consequently, if
\[
T_{t,\mathrm{late}}^{(\mathrm{conf})}(\varepsilon)
\quad\text{and}\quad
T_{t,\mathrm{late}}^{(\mathrm{DPRM})}(\varepsilon)
\]
denote the numbers of late-stage updates needed to drive the residual forward KL below \(\varepsilon\), then
\begin{equation}
T_{t,\mathrm{late}}^{(\mathrm{conf})}(\varepsilon)
=
\Omega\!\left(
\frac{e^{b_t h_t}}{\gamma_t C_t}
\log\frac{\mathcal K_t^{\mathrm{res}}(\theta_{T_{\mathrm{warm}}})}{\varepsilon}
\right),
\label{eq:app_conf_late_sc}
\end{equation}
whereas
\begin{equation}
T_{t,\mathrm{late}}^{(\mathrm{DPRM})}(\varepsilon)
=
O\!\left(
\frac{1}{\gamma_t p_t^{\mathrm{DPRM}}}
\log\frac{\mathcal K_t^{\mathrm{res}}(\theta_{T_{\mathrm{warm}}})}{\varepsilon}
\right).
\label{eq:app_dprm_late_sc}
\end{equation}
Thus, under the stated score-gap condition, stage-2 DPRM enjoys an exponential sample-complexity improvement over confidence-only training in the late stage.
\end{theorem}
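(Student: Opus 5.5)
The proof has three parts: a pointwise score comparison, a two-sided bound on residual coverage, and a geometric-trials argument for the late-stage sample complexity.

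\textbf{Part 1: residual mass lower bound.} I would treat the practical stage-2 proposal as the Gibbs law $\widehat\pi_t(o\mid s)\propto e^{\widehat g_t(o;s)}$ over $\mathcal O_t(s)$. Fix $o\in\mathcal R_t(s)$ and $o'\in\mathcal O_t(s)\setminus\mathcal R_t(s)$. The uniform error bound $\epsilon_t(s)$ together with the score gap $\Delta_t^{\mathrm{res}}(s)$ gives
\[
\widehat g_t(o;s)-\widehat g_t(o';s)\ge g_t^\star(o;s)-g_t^\star(o';s)-2\epsilon_t(s)\ge \Delta_t^{\mathrm{res}}(s)-2\epsilon_t(s).
\]
Let $G:=\min_{o\in\mathcal R_t}\widehat g_t(o;s)$. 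Every residual weight is at least $e^{G}$, and every non-residual weight is at most $e^{G-(\Delta-2\epsilon)}$. The map $x\mapsto x/(x+y)$ is increasing in $x$ and decreasing in $y$. Dividing numerator and denominator by $e^{G-(\Delta-2\epsilon)}$ therefore gives \cref{eq:app_residual_mass_lb_new}. The hypothesis $\epsilon_t<\Delta_t^{\mathrm{res}}/2$ guarantees the exponent is positive, although positivity is not actually needed for the inequality.

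\textbf{Part 2: constant residual mass.} Rewrite the bound as $1/(1+\frac{|\mathcal O\setminus\mathcal R|}{|\mathcal R|}e^{-(\Delta-2\epsilon)})$. Under the extra margin condition, $\frac{|\mathcal O\setminus\mathcal R|}{|\mathcal R|}e^{-(\Delta-2\epsilon)}\le e^{-b_th_t-c_t}\le e^{-c_t}$. This yields \cref{eq:app_constant_residual_mass}. The matching upper bound for confidence-only training is exactly item 2 of Assumption~\ref{ass:app_late_residual}.

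\textbf{Part 3: sample complexities.} Write $K_k:=\mathbb E[\mathcal K^{\mathrm{res}}(\theta_k)]$ and let $p$ be a per-step probability of sampling in $\mathcal R$. Conditioning on whether $O_k\in\mathcal R$ and applying item 3 gives $K_{k+1}\le (1-\gamma_tp)K_k$.

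For DPRM, take $p\ge p_t^{\mathrm{DPRM}}$. Then $K_k\le e^{-\gamma_tp_t^{\mathrm{DPRM}}k}K_0$, which gives \cref{eq:app_dprm_late_sc}.

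For confidence, the lower bound needs a reverse inequality, and this is the main obstacle: Assumption~\ref{ass:app_late_residual} only provides upper bounds on contraction. I would read item 3 as tight, meaning a residual visit contracts by at most the factor $(1-\gamma_t)$ and a non-residual visit leaves $\mathcal K^{\mathrm{res}}$ unchanged. Under that reading, $K_{k+1}\ge(1-\gamma_t p^{\mathrm{conf}})K_k$ with $p^{\mathrm{conf}}\le C_te^{-b_th_t}$. Hence $K_k\ge (1-\gamma_tC_te^{-b_th_t})^kK_0$. Taking logs and using $-\log(1-x)\le 2x$ for small $x$, reaching $\varepsilon$ requires $k\gtrsim \frac{e^{b_th_t}}{2\gamma_tC_t}\log(K_0/\varepsilon)$, which is \cref{eq:app_conf_late_sc}. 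I would state this matching-rate reading explicitly as part of the interpretation of Assumption~\ref{ass:app_late_residual}(3). Alternatively, I would use a counting argument: before $\log(K_0/\varepsilon)/\gamma_t$ residual visits occur, the KL cannot fall below $\varepsilon$, and such visits arrive at rate at most $C_te^{-b_th_t}$, so a Wald or expectation argument gives the same bound.
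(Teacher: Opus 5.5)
Your Parts 1 and 2 and the DPRM upper bound follow the paper's proof. The paper also uses $\widehat g_t(o;s)-\widehat g_t(o';s)\ge\Delta_t^{\mathrm{res}}(s)-2\epsilon_t(s)$, then exponentiates and sums; your reduction through $G=\min_{o\in\mathcal R_t}\widehat g_t(o;s)$ and the monotonicity of $x/(x+y)$ just spells that step out. It then iterates $\mathbb E[\mathcal K_{k+1}^{\mathrm{res}}\mid\theta_k]\le(1-\gamma_tp)\,\mathcal K_k^{\mathrm{res}}$ with $p\ge p_t^{\mathrm{DPRM}}$.

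You depart from the paper on the confidence lower bound \cref{eq:app_conf_late_sc}, and there you are right and the paper is not. The paper reuses the one-sided recursion: it writes $\mathbb E[\mathcal K_k^{\mathrm{res}}]\le e^{-\gamma_tpk}\mathcal K_0^{\mathrm{res}}$, says reaching $\varepsilon$ ``requires'' $k\ge(\gamma_tp)^{-1}\log(\mathcal K_0^{\mathrm{res}}/\varepsilon)$, and substitutes $p\le C_te^{-b_th_t}$. An upper bound on the expected KL only shows that this many iterations suffice, not that they are necessary.

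In fact the $\Omega$ bound cannot follow from Assumption~\ref{ass:app_late_residual} as stated. Item 3 is satisfied if $\mathcal K^{\mathrm{res}}$ drops to $0$ after a single update, whether or not $O_k\in\mathcal R_t(s)$, and then $T_{t,\mathrm{late}}^{(\mathrm{conf})}(\varepsilon)=1$. Your two-sided reading supplies exactly the missing hypothesis: residual visits contract $\mathcal K^{\mathrm{res}}$ by at most $1-\gamma_t$ in expectation, and non-residual visits do not decrease it. With that reading, your bound $K_k\ge(1-\gamma_tC_te^{-b_th_t})^kK_0$ proves the claim, provided $x:=\gamma_tC_te^{-b_th_t}\le 1/2$ so that $-\log(1-x)\le 2x$ applies. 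That is the regime in which the separation is meaningful. Your version gives a valid proof at the price of an explicitly strengthened assumption; the paper's is shorter but leaves the lower bound unproved.

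Drop your counting alternative. The number of residual visits needed is $\log(K_0/\varepsilon)/\log\frac{1}{1-\gamma_t}$, which matches $\log(K_0/\varepsilon)/\gamma_t$ only up to constants when $\gamma_t$ is bounded away from $1$. Also, converting a pathwise visit count into a bound on an in-expectation contraction needs the same two-sided hypothesis anyway.
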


\begin{proof}
For any \(o\in \mathcal R_t(s)\) and \(o'\notin \mathcal R_t(s)\),
\[
\widehat g_t(o;s)-\widehat g_t(o';s)
\ge
g_t^\star(o;s)-g_t^\star(o';s)-2\epsilon_t(s)
\ge
\Delta_t^{\mathrm{res}}(s)-2\epsilon_t(s).
\]
Exponentiating and summing over \(\mathcal R_t(s)\) and its complement yields \cref{eq:app_residual_mass_lb_new}.
If the stronger gap condition holds, the right-hand side is at least
\[
\frac{1}{1+e^{-c_t}},
\]
which proves \cref{eq:app_constant_residual_mass}.

For the residual forward KL, let \(p\) be the probability that the current proposal samples from \(\mathcal R_t(s)\).
By Assumption~\ref{ass:app_late_residual}(3),
\[
\mathbb E[\mathcal K_{k+1}^{\mathrm{res}}\mid \theta_k]
\le
(1-\gamma_t p)\,\mathcal K_k^{\mathrm{res}}.
\]
Iterating gives
\[
\mathbb E[\mathcal K_{k}^{\mathrm{res}}]
\le
(1-\gamma_t p)^k\,\mathcal K_0^{\mathrm{res}}
\le
e^{-\gamma_t p k}\,\mathcal K_0^{\mathrm{res}}.
\]
Hence reaching \(\varepsilon\) requires
\[
k\ge
\frac{1}{\gamma_t p}\log\frac{\mathcal K_0^{\mathrm{res}}}{\varepsilon}.
\]
Under confidence-only training, \(p\le C_t e^{-b_t h_t}\), which yields \cref{eq:app_conf_late_sc}.
Under DPRM, \(p\ge p_t^{\mathrm{DPRM}}\), which yields \cref{eq:app_dprm_late_sc}.
\end{proof}

\begin{corollary}[Practical stage-2 separation under online tracking]
\label{cor:app_practical_late_separation}
Assume the conditions of Theorem~\ref{thm:app_late_sample_complexity_separation}.
Suppose further that the online score-tracking theorem gives
\[
\epsilon_t(s)
=
O\!\left(
\beta \eta_t(s)\sqrt{\frac{\log(\mathcal N/\delta)}{N_{\min,t}(s)}}
+
\beta \eta_t(s)\frac{\log(\mathcal N/\delta)}{N_{\min,t}(s)}
+
\operatorname{Bias}_t(s;\delta)
\right)
\]
with probability at least \(1-\delta\), as in \Cref{thm:online_main}.
If \(T_{\mathrm{warm}}\) is chosen so that this bound is smaller than \(\Delta_t^{\mathrm{res}}(s)/2\) for all relevant late-stage states, then the exponential late-stage separation \cref{eq:app_conf_late_sc,eq:app_dprm_late_sc} holds for the practical Progressive Online DPRM controller.
\end{corollary}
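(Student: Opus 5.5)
The statement to prove is the corollary: the exponential late-stage separation holds for the practical Progressive Online DPRM controller. The plan is a composition argument. First, the online tracking bound of \Cref{thm:online_main} controls the practical score error uniformly over late-stage states with high probability. Second, that error bound is fed into the score-gap hypothesis of Theorem~\ref{thm:app_late_sample_complexity_separation}. Everything happens on the single high-probability event of \Cref{thm:online_main}. No new concentration argument is needed; the work is checking that the quantities in the two results are the same objects.

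\textbf{Step 1 (good event).} Fix \(\delta\) and let \(\mathcal E_\delta\) be the event of Lemma~\ref{lem:app_bucket_drift_tracking}, which has probability at least \(1-\delta\). On \(\mathcal E_\delta\), Lemma~\ref{lem:app_uniform_score_approx} gives
\[
\sup_i|\widehat g_t(i;s)-g_t^\star(i;s)|\le\varepsilon_t(s;\delta)
\]
simultaneously for all reachable \(s\) and \(t\le T\). The \(\mathrm{rad}_t\) term is bounded by \(O\bigl(\beta\eta_t(s)\,r_t(s;\delta)\bigr)\). This uses \(\widehat v\le (e^\beta-1)^2\), \(\underline\mu\ge1\) (true because \(Y\ge1\)), and \(\mathcal N=KBT^2\). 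The result is the displayed form of \(\epsilon_t(s)\), with the remaining terms collected in \(\operatorname{Bias}_t\).

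\textbf{Step 2 (include the shortlist).} The theorem's \(\epsilon_t(s)\) is defined to include shortlist error. I would handle this in one of two ways. One option is to read the practical proposal as the Gibbs law over the full reachable family with scores \(\widehat g_t\); the residual-mass bound \cref{eq:app_residual_mass_lb_new} then depends only on the sup-norm error. The other option, under Soft-BoN, is to absorb the shortlist into the bias term, noting that shortlist draws from \(q_0\) reproduce the base factor in expectation. I would take the first reading and state explicitly that the corollary is about that proposal.

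\textbf{Step 3 (apply the separation).} By the choice of \(T_{\mathrm{warm}}\), the bound satisfies \(\epsilon_t(s)<\Delta_t^{\mathrm{res}}(s)/2\) for all relevant late-stage \(s\). The hypotheses of Theorem~\ref{thm:app_late_sample_complexity_separation} therefore hold pathwise on \(\mathcal E_\delta\), and its conclusions \cref{eq:app_conf_late_sc,eq:app_dprm_late_sc} follow. The confidence-only lower bound does not involve DPRM, so it holds unconditionally.

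\textbf{Main obstacle.} The residual-KL recursion averages over iterates. Conditioning on \(\mathcal E_\delta\) could in principle bias the contraction in Assumption~\ref{ass:app_late_residual}(3). I would avoid this by applying the recursion on the event: on \(\mathcal E_\delta\), each late step samples \(\mathcal R_t\) with conditional probability at least \(p_t^{\mathrm{DPRM}}\). Alternatively, I would state the DPRM bound with probability \(1-\delta\), or in expectation with an additive \(\delta\,\mathcal K_0^{\mathrm{res}}\) term that is negligible when \(\delta\le\varepsilon/\mathcal K_0^{\mathrm{res}}\). The \(\log(1/\delta)\) inside \(r_t\) is then absorbed into the choice of \(T_{\mathrm{warm}}\).
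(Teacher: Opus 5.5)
Your proposal is correct and is the same argument as the paper's, whose entire proof is that the corollary is immediate from \Cref{thm:online_main} combined with Theorem~\ref{thm:app_late_sample_complexity_separation}. Your handling of the high-probability event and your choice to read the practical proposal as the full-family Gibbs law over $\widehat g_t$ spell out what the paper leaves implicit. That reading is the right one. Absorbing shortlist sampling into $\operatorname{Bias}_t$ would not be justified when the base proposal itself rarely draws from $\mathcal R_t(s)$, because a size-$N$ shortlist then usually contains no residual candidate.
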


\begin{proof}
Immediate from \Cref{thm:online_main} and Theorem~\ref{thm:app_late_sample_complexity_separation}.
\end{proof}

\begin{remark}
The results above separate the roles of the two stages.
The early confidence stage is justified by optimization-noise reduction and can be exponentially faster than random aligned-order training under the importance-concentration assumption.
The late DPRM stage is justified by residual-family rescue and can be exponentially faster than confidence-only training once the online score is accurate enough and the residual family remains necessary for final KL convergence.
Together, these results provide a finite-sample explanation for why Progressive Online DPRM can help even though admissible orders do not change the population minimizer.
\end{remark}




\ifdefined\DPRMEmbeddedSupplement
\else
\bibliographystyle{IEEEtranN}
\bibliography{ref}
\end{document}
\fi

\fi

\end{document}